\theoremstyle{plain}
\newtheorem{theorem}{Theorem}[section]
\newtheorem{proposition}[theorem]{Proposition}
\newtheorem{lemma}[theorem]{Lemma}
\newtheorem{corollary}[theorem]{Corollary}
\theoremstyle{definition}
\newtheorem{definition}[theorem]{Definition}
\newtheorem{assumption}[theorem]{Assumption}
\theoremstyle{remark}
\newcommand{\D}{D}
\newcommand{\pd}{\mathrm{pm}}
\newcounter{JohannesCounter}
\newcounter{SohirCounter}
\pgfplotsset{compat=1.10} \usetikzlibrary{fillbetween} 
\definecolor{darkerblue}{HTML}{07359A}
\definecolor{darkerred}{HTML}{801D02}
\newcommand{\errorPlot}[1]{
\begin{tikzpicture}
\pgfplotsset{set layers}
\begin{axis}[
width=.25\textwidth,
title={#1},
scale only axis,
axis y line*=left,
xlabel={TMD to training dataset},
ylabel style = {align=center},
ylabel={\color{blue}{Error Bound}},
]
\addplot[
    smooth,
    name path=topline,
    color=blue,
    thick,
] table [ x=q, y expr=\thisrow{meanErr} + \thisrow{stdErr}]{data/#1_bound_4.dat};
\addplot[
    smooth,
    name path=bottomline,
    color=blue,
    thick,
] table [ x=q, y expr=\thisrow{meanErr} - \thisrow{stdErr}]{data/#1_bound_4.dat};
\addplot[
    smooth,
    thick,
    color=blue,
    mark=*,
    thick,
] table [ x=q, y expr=\thisrow{meanErr}]{data/#1_bound_4.dat};
\addplot[
    blue,
   fill opacity=.5
] fill between [of=topline and bottomline];
\end{axis}
\begin{axis}[
width=.25\textwidth,
scale only axis,
axis y line*=right,
axis x line=none,
ylabel style = {align=center},
ylabel={\color{red}{Empirical Error}},
]
\addplot[
    smooth,
    name path=topline,
    color=red,
    thick,
] table [ x=q, y expr=\thisrow{meanAcc} + \thisrow{stdAcc}]{data/#1_bound_4.dat};
\addplot[
    smooth,
    name path=bottomline,
    color=red,
    thick,
] table [ x=q, y expr=\thisrow{meanAcc} - \thisrow{stdAcc}]{data/#1_bound_4.dat};
\addplot[
    smooth,
    thick,
    color=red,
    mark=*,
    thick,
] table [ x=q, y expr=\thisrow{meanAcc}]{data/#1_bound_4.dat};
\addplot[
    red,
   fill opacity=.5
] fill between [of=topline and bottomline];
\end{axis}
\end{tikzpicture}}
\title{Graph Representational Learning: When Does More Expressivity Hurt
Generalization?}
\author{
  Sohir Maskey$^{*, 1, 2}$ \quad Raffaele Paolino$^{1, 2}$ \quad
   Fabian Jogl$^{3}$ \quad \\
   \textbf{Gitta Kutyniok}$^{1, 2, 4, 5}$ \quad
   \textbf{Johannes F. Lutzeyer}$^{6}$ \\
    $^1$Department of Mathematics, LMU Munich\\
    $^2$Munich Center for Machine Learning (MCML)\\
    $^3$Machine Learning Research Unit, CAIML, TU Wien\\
    $^4$Institute for Robotics and Mechatronics, DLR-German Aerospace Center  \\
    $^5$Department of Physics and Technology, University of Tromsø  \\
    $^6$LIX, École Polytechnique, IP Paris, France  \\
}
\begin{document}
\maketitle

\begin{abstract}
Graph Neural Networks (GNNs) are powerful tools for learning on structured data, yet the relationship between their expressivity and predictive performance remains unclear. We introduce a family of pseudometrics that capture different degrees of structural similarity between graphs and relate these similarities to generalization, and consequently, the performance of expressive GNNs. By considering a setting where graph labels are correlated with structural features, we derive generalization bounds that depend on the distance between training and test graphs, model complexity, and training set size. 
These bounds reveal that more expressive GNNs may generalize worse unless their increased complexity is balanced by a sufficiently large training set or reduced distance between training and test graphs.  Our findings relate expressivity and generalization, offering theoretical insights supported by empirical results. Our code is available on \href{https://github.com/RPaolino/GenVsExp}{GitHub}.
\end{abstract}

\section{Introduction}
\label{submission}
Graph Neural Networks (GNNs) \citep{scarselliGraphNeuralNetwork2009, bronsteinGeometricDeepLearning2017} have become a central tool for learning representations of structured data. A major line of research has focused on improving their \emph{expressivity}, that is, their capacity to distinguish non-isomorphic graphs, often evaluated with respect to the Weisfeiler-Lehman (WL) hierarchy of graph isomorphism tests \citep{Weisfeiler:1968ve, xu*HowPowerfulAre2018, morris2020weisfeiler, morrisWeisfeilerLemanGo2023}.

The relationship between expressivity and performance of GNNs remains poorly understood. While more expressive models are theoretically capable of distinguishing a broader range of non-isomorphic graphs, their practical effectiveness in real-world tasks is not always apparent. On the one hand, more expressive models have been shown to outperform standard architectures \citep{maronProvablyPowerfulGraph2019, bodnar2021weisfeiler, Bouritsas_2023}, even when their additional expressive power does not result in separating more non-isomorphic graphs in the benchmarks at hand \citep{zopf20221wlexpressivenessalmostneed}. For instance, the 1-WL test, and by extension simple message-passing neural networks (MPNNs) such as GIN \citep{xu*HowPowerfulAre2018}, can separate all graphs in widely used datasets \citep{zopf20221wlexpressivenessalmostneed, nils-survey} and almost all random graphs \citep{babai1980randomgi}, but performance is still far from saturated.
On the other hand, less expressive models can outperform their more expressive counterparts.  For example, \citet{bechlerspeicher2024graphneuralnetworksuse} show that models which completely discard the graph structure (hence, less-expressive than 1-WL) can outperform sophisticated GNNs in certain tasks.

These contrasting observations suggest a complex and nuanced relationship between expressivity and performance, raising two critical questions:
\begin{enumerate}
    \item When does increased expressivity in GNNs help, and when does it hurt performance?
    \item If a more expressive GNN performs better on a given task, is it due to its improved expressivity in terms of graph separability? \end{enumerate}

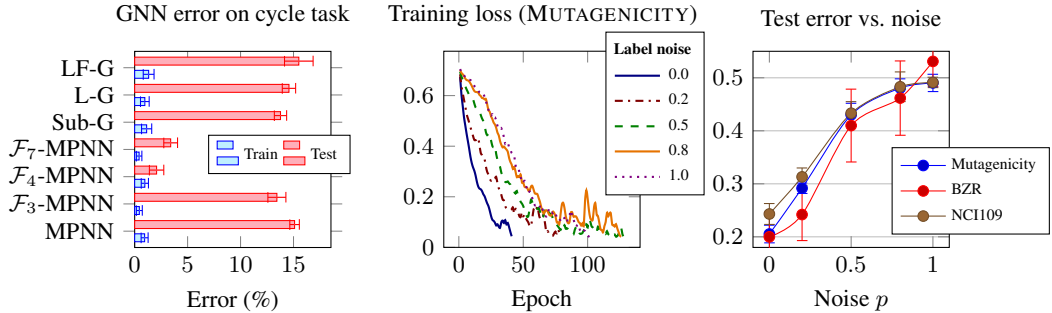
\begin{figure}[t]
\centering
\begin{tikzpicture}
\pgfplotsset{compat=1.18}

\def\data{Figures}          \def\barw{3pt}              

\pgfplotscreateplotcyclelist{paper}{
    {blue!50!black,  thick},
    {red!50!black,   thick, dashdotted},
    {green!50!black, thick, dashed},
    {orange!90!black, thick},
    {violet, thick, dotted},
}

\begin{groupplot}[
    group style = {group size = 3 by 1,
                   horizontal sep = 1.5cm},
    width  = 0.3\textwidth,
    height = 0.30\textwidth,
    tick label style = {font=\footnotesize},
    xlabel style     = {font=\footnotesize},
    ylabel style     = {font=\footnotesize},
]

\nextgroupplot[
    title  = {\footnotesize GNN error on cycle task},
    xbar,
    bar width=\barw,
    xlabel={Error (\%)},
    symbolic y coords={MPNN, $\mathcal{F}_3$-MPNN, $\mathcal{F}_4$-MPNN, $\mathcal{F}_7$-MPNN, Sub-G, L-G, LF-G},
    ytick=data,
    xmin=0,
grid style={dashed, gray!30},
    legend style={at={(1.1,0.575)},draw,font=\tiny},
    legend columns=2,
]
\addplot+[xbar, fill=LightBlue1,
          error bars/.cd, x dir=both, x explicit]
  coordinates {
    (0.94,MPNN) +- (0.33,0.33)
    (0.44,$\mathcal{F}_3$-MPNN) +- (0.28,0.28)
    (0.96,$\mathcal{F}_4$-MPNN) +- (0.34,0.34)
    (0.42,$\mathcal{F}_7$-MPNN) +- (0.26,0.26)
    (1.13,Sub-G) +- (0.48,0.48)
    (0.96,L-G) +- (0.41,0.41)
    (1.32,LF-G) +- (0.51,0.51)
  };
\addplot+[xbar, fill=LightPink1,
          error bars/.cd, x dir=both, x explicit]
  coordinates {
    (15.1,MPNN) +- (0.45,0.45)
    (13.43,$\mathcal{F}_3$-MPNN) +- (0.85,0.85)
    (2.07,$\mathcal{F}_4$-MPNN) +- (0.68,0.68)
    (3.4,$\mathcal{F}_7$-MPNN) +- (0.65,0.65)
    (13.77,Sub-G) +- (0.58,0.58)
    (14.57,L-G) +- (0.63,0.63)
    (15.5,LF-G) +- (1.35,1.35)
  };
\legend{Train, Test}

\nextgroupplot[
    title  = {\footnotesize Training loss (\textsc{Mutagenicity})},
    xlabel = {Epoch},
ymin   = 0,
    cycle list name = paper,      legend style = {
        row sep=0pt,
        draw,
        font=\tiny,
at={(axis description cs:0.15,1.1)},              anchor=north west,
        xshift=5em, 
        yshift=0.0em,
},
]
\addlegendimage{empty legend}
\addlegendentry{\hspace{-.7cm}\textbf{Label noise}} 
\foreach \p/\lab in {0.00/{0.0},0.20/{0.2},0.50/{0.5},
                     0.80/{0.8},1.00/{1.0}}{
  \addplot+[smooth] table [x=epoch, y=mean, col sep=comma, y error=std]
        {\data/Mutagenicity/loss_p\p.csv};
        \edef\temp{\noexpand\addlegendentry{\lab}}
  \temp
}

\nextgroupplot[
    title  = {\footnotesize Test error vs.\ noise},
    xlabel = {Noise $p$},
ymin   = 0.18, ymax = .55,
    ytick  = {0.2, .3, .4 ,.5},
    ymajorgrids,
    legend style = {
        draw,
        font=\tiny,
        /tikz/every even column/.style={column sep=0.6ex},
        at={(axis description cs:.1,1.05)},              anchor=north west,
        legend cell align={left},
        xshift=4.5em, yshift=-4.0em
    },
]
\addplot+[smooth, mark=*,
          error bars/.cd, y dir=both, y explicit]
    table [col sep=comma,
           x=noise, y=mean, y error=std] {\data/Mutagenicity/test_error_mut.csv};
           \addlegendentryexpanded{Mutagenicity}
           
\addplot+[smooth, mark=*,
error bars/.cd, y dir=both, y explicit]
table [col sep=comma,
x=noise, y=mean, y error=std] {\data/BZR/test_error_bzr.csv};
\addlegendentryexpanded{BZR}

\addplot+[smooth, mark=*,
error bars/.cd, y dir=both, y explicit]
table [col sep=comma,
x=noise, y=mean, y error=std] {\data/NC/test_error_nc.csv};
\addlegendentryexpanded{NCI109 }

\end{groupplot}
\end{tikzpicture}

\caption{\textbf{Left:} Train–test errors for several GNN variants on a synthetic
cycle-counting task; moderately expressive models such as $\mathcal{F}_4$‑MPNN \citep{barcelo_gnns_local_graph_params}, i.e., MPNNs augmented with cycle
counts,
generalize best, while more expressive ones tend to overfit.  
\textbf{Center:} Training-loss curves of a MPNN on
\textsc{Mutagenicity} under increasing label noise $p$.  
\textbf{Right:} Corresponding test errors on \textsc{BZR}, \textsc{Mutagenicity}, and \textsc{NCI109} rises sharply as label-structure correlation is essential for generalization (mean $\pm$ standard deviation across five seeds). See \Cref{{sec:label_noise}} for more details.}
\label{fig:combined-experiments}
\end{figure}

To explore these questions, we begin with a synthetic graph‑classification task in which the labels are \emph{designed} to depend on a known structural feature, namely the number of cycles. We train a sequence of models whose expressive power increases from a plain MPNN to an LF‑GNN \citep{zhangWeisfeilerLehmanQuantitativeFramework2024}. As \Cref{fig:combined-experiments} (left) shows, \emph{moderately} expressive models achieve the lowest test error, while the most expressive ones overfit, and performance deteriorates on unseen graphs. The result echoes a well‑documented phenomenon in Euclidean deep learning: large neural networks can memorize arbitrary labels, yet they generalize only when there is genuine correlation between data and labels. Classical complexity measures such as the VC dimension or Rademacher complexity cannot account for this behavior in over‑parametrized regimes \citep{zhang2017understanding}. A more plausible explanation is that generalization requires a balance between the model’s inductive bias (in our case, its expressivity) and the structure–label correlation present in the data.

We further test this hypothesis on three real-world graph datasets: \textsc{BZR}, \textsc{Mutagenicity}, and \textsc{NCI109} \citep{morris2020tudatasetcollectionbenchmarkdatasets}. We follow the setup from \cite{zhang2017understanding} by progressively resampling the labels uniformly at random, i.e., introducing label noise, while leaving the graphs untouched. \Cref{fig:combined-experiments} (center and right) shows that training loss still converges to zero, confirming the network’s ability to memorize, yet the test error rises sharply as soon as the correlation between the graphs and labels is destroyed. Together, these experiments indicate that \emph{expressivity by itself is neither strictly harmful nor helpful}; model performance ultimately is determined by the model's ability to measure similarity in a way that reflects the task-relevant relation between graphs and labels. In this work, we set out to better understand this phenomenon.

\subsection{Our Contribution}
We formalize the empirical insight that generalization heavily depends on structure–label correlation as follows. We introduce a family of pseudometrics, called $\zeta$-Tree Mover Distances ($\zeta$-TMDs), each parameterized by a graph invariant (e.g., degree distributions or $k$-WL colors), which measure a specific level of expressivity.
We then consider a graph classification setting where labels correlate with a fixed $\zeta$-TMD: two graphs are likely to share the same label if they are similar under the chosen pseudometric. This captures structure–label alignment, a property we empirically validate across several real-world graph learning tasks, where generalization depends critically on such alignment.

Within this framework, we derive data-dependent generalization bounds for models with fixed encoders (e.g., random GNN features) and expressive, end-to-end trainable GNNs (e.g., GIN, GAT, $k$-GNNs).
Our bound (\Cref{thm:gin_gen_bound_final}, \Cref{eq:zeta_mpnn_bound}) decomposes the generalization gap into two terms: a capacity term, depending on model width, weight norms, and maximum node degree, and a structural similarity term, which measures the distance between training and test graphs under the chosen $\zeta$-TMD. This decomposition highlights that generalization improves when the model maps structurally similar graphs (with respect to the $\zeta$-TMD that correlates with the labels) to similar representations while keeping model capacity in check.

This perspective not only explains existing empirical observations but also guides model design. Deeper or more expressive GNNs can improve performance---but only if they enhance the structural similarity between train and test graphs. Otherwise, increased complexity may fail to improve label alignment, thereby degrading both generalization and computational efficiency. This trade-off is especially relevant since higher-order methods often introduce significant computational overhead. In \Cref{sec:expressivity_hurt}, we formalize this phenomenon and show that, in a concrete setting, the optimal GNN is precisely as expressive as required to capture the relevant structure-label correlation---no more, no less.

\subsection{Related Work}

\textbf{Expressivity in GNNs.}
A common benchmark for GNN expressivity is the Weisfeiler-Leman ($k$-WL) hierarchy, which measures the ability to distinguish non-isomorphic graphs. Standard MPNNs are provably limited by $1$-WL expressivity \citep{xu*HowPowerfulAre2018, morrisWeisfeilerLemanGo2019}, while higher-order $k$-GNNs \citep{morris2020weisfeiler} and subgraph-based models \citep{frascaUnderstandingExtendingSubgraph2022} extend this by incorporating higher-order structural interactions. Structural and positional encodings \citep{vignac2020building, barcelo2021graph} enhance expressivity by injecting global features into node representations. \citet{jogl2024expressivity} offer a unifying view by showing that many expressive architectures can be simulated via input transformations followed by standard message passing.

However, expressivity often comes at a computational cost: while MPNNs scale linearly with edges ($\mathcal{O}(|E|)$), subgraph-based and higher-order models scale super-linearly (e.g., $\mathcal{O}(|V||E|)$ or $\mathcal{O}(|V|^k)$), making them less viable for large-scale graphs.

\textbf{Generalization in GNNs.}
Generalization theory for GNNs has traditionally relied on classical complexity measures such as VC-dimension \citep{scarselli2018vapnik}, Rademacher complexity \citep{pmlr-v119-garg20c}, and PAC-Bayes bounds \citep{liao2021a}. Graphon-based frameworks \citep{levie2023graphonsignal} establish bounds using covering numbers in continuous function spaces. Similarly, \citet{maskey2022generalization, maskey2024generalization} establish tighter bounds using graphon models, but under stronger assumptions on the data distribution and their analysis does not directly handle expressive GNNs.

Recent works increasingly focus on the interplay between expressivity and generalization. \citet{morris2023wl} relate 1-WL to VC-dimension in MPNNs. \citet{li2024bridginggeneralizationexpressivitygraph} analyze a trade-off between intra-class concentration and inter-class separation, but only for fixed graph encoders. \citet{wang2024manifold} study graphs sampled from manifolds, and \citet{ma2021subgroup} model label–feature correlations in node classification, but neither framework extends to graph-level tasks or expressive GNNs.

Most recently, \citet{vasileiou2024coveredforestfinegrainedgeneralization} derive tighter generalization bounds by combining covering number arguments with the robustness framework of \citet{xu2012robustness}, using the fact that MPNNs are Lipschitz with respect to Tree and Forest distances. Their analysis does not model structure–label correlation and is limited to standard MPNNs. We refer to \Cref{app:detailed_rw} for further discussion.

\textbf{Comparison with Prior Work.}
While existing approaches have deepened our understanding of GNN generalization, many rely on idealized assumptions---such as known graphon distributions or fixed feature maps—that limit applicability to real-world graph learning. The work most closely related to ours is \citet{ma2021subgroup}, which models label-feature alignment but does not support graph-level tasks or trainable GNNs. In contrast, our framework explicitly models structure-label correlation using task-aligned pseudometrics and supports expressive, end-to-end trainable GNNs.This enables a fine-grained analysis of how architectural expressivity and task alignment interact. Our generalization bounds identify when increased expressivity may improve performance---and when it leads to overfitting---addressing a key open question posed by \citet{morris2024futuredirectionstheorygraph}.

\section{Preliminaries}
\label{sec:preliminaries}
Let $\mathcal{G}$ denote the set of all simple, undirected graphs, and let $G = (V, E) \in \mathcal{G}$, where $V,$ or $V(G),$ is the set of nodes and $E$ is the set of edges. For any node $v \in V$, the \emph{neighborhood} of $v$ is defined as:
$\mathcal{N}(v) := \{u \in V \mid \{v, u\} \in E \}$.

\begin{definition}[Graph Invariant and CRA]
A \emph{graph invariant} is a function $ \zeta: \mathcal{G} \to \mathcal{C} $ that assigns a value to each graph such that for any isomorphic graphs $ G $ and $ H $, it holds that $ \zeta(G) = \zeta(H) $.

A \emph{color refinement algorithm (CRA)} $ \zeta_{(\cdot)} $ is a mapping that assigns to each graph $ G $ a function $ \zeta_G: V(G) \to P $ such that for any graph $ H $ isomorphic to $ G $, and any isomorphism $ h: V(H) \to V(G) $, the CRA satisfies $ \zeta_G(v) = \zeta_H(h(v)) $ for all $ v \in V(G) $. 
\end{definition}

We note that every CRA $ \zeta_{(\cdot)} $ induces a graph invariant $ \zeta $ by aggregating the vertex labels into a multiset. Specifically, the induced graph invariant is defined as $ \zeta(G) \coloneqq \{\!\{\zeta_G(v)\}\!\}_{v \in V(G)} $, where $ \{\!\{\cdot\}\!\} $ denotes a multiset. Throughout this work, we use the terms ``graph invariant" and ``CRA" interchangeably when the context allows.

Graph invariants vary in their ability to distinguish between graphs. We say that a graph invariant $\zeta$ is \emph{more expressive} than another graph invariant $\theta$ if $\zeta(G) = \zeta(H)$ implies $\theta(G) = \theta(H)$ for all $G, H \in \mathcal{G}$.

\begin{definition}[Message Passing Neural Networks (MPNNs)]
For a graph $G = (V, E)$ with node features $x \in \mathbb{R}^{|V| \times F}$, an \emph{MPNN} updates node $v \in V$ at layer $t$ as:
\[
x_v^{(t+1)} = f^{(t+1)} \left( x_v^{(t)}, \Box_{ u \in \mathcal{N}(v) } \left\{\left\{g^{(t+1)} \left(x_u^{(t)} \right)\right\}\right\} \right),
\]
where $f^{(t+1)}$ and $g^{(t+1)}$ are MLPs, and $\Box$ denotes a permutation-invariant aggregation function.
\end{definition}

We focus on sum aggregation, but our framework naturally extends to mean or weighted sum aggregation. After the final message passing layer, node features are typically aggregated into a graph-level representation, followed by a final MLP. 

Similar to MPNNs, the \textit{1-Weisfeiler-Lehman (1-WL) Test} updates the node features of input graphs through local neighborhood aggregation. However, a key distinction is that in 1-WL, both the message and update functions are necessarily injective. 1-WL provides a tight upper bound on the expressivity of MPNNs \citep{xu*HowPowerfulAre2018, morris2020weisfeiler}.

To quantify similarity between graphs, we use the notion of pseudometrics. In particular, every graph invariant naturally induces a pseudometric $d_\zeta(G, H)$ with $d_\zeta(G, H) = 0$ if $\zeta(G) = \zeta(H)$ and $d_\zeta(G, H) = 1$ otherwise. However, such pseudometrics only indicate whether two graphs are distinguishable by $\zeta$ (distance 0 or 1). To capture finer structural similarities, general pseudometrics are often employed. 
Conversely, any pseudometric $d$ can induce a graph invariant $\zeta_d$ by anchoring comparisons to a fixed graph $A \in \mathcal{G}$ with $\zeta_{d, A}(G) := d(G, A)$.  Thus, pseudometrics can be seen as generalizations of graph invariants, offering richer measures of graph similarity.

The \emph{Tree Mover's Distance (TMD)} \citep{chuang2022tree} is a pseudometric that quantifies the dissimilarity between two graphs. Formally, for graphs $ G $ and $ H $, and a depth $t$, the TMD is defined as the Wasserstein distance between their distributions of rooted trees up to depth $t$:
\begin{equation*}
    \text{TMD}^t(G, H) = \text{Wasserstein}\left( \mathcal{T}^t(G), \mathcal{T}^t(H) \right),
\end{equation*}
where $ \mathcal{T}^t(G) $ and $ \mathcal{T}^t(H) $ are the multisets of rooted trees of depth $t$ generated by the 1-WL test for $ G $ and $ H $, respectively. For further details, we refer to \Cref{app:tmd}.

\section{Generalized Tree Mover's Distance for Strongly Simulatable Colorings}
\label{sec:gen_tmd}
We now extend the concept of the TMD to a broader class of CRAs that can be \emph{strongly simulated} by the 1-WL test. For detailed proofs of the results presented in this section, refer to \Cref{sec:proof_gen_tmd}.

A CRA $\zeta$ is said to be \emph{strongly simulatable} if, for any graph $G$, running $t$ iterations of the CRA on $G$ can be simulated by running $t$ iterations of the 1-WL test on a suitably transformed graph $R^\zeta(G)$. This transformation, called the \emph{strong simulation under $\zeta$}, ensures that the colorings at each 1-WL iteration on $R^\zeta(G)$ are at least as expressive as those of $\zeta$ \citep{jogl2024expressivity} on $G$. See \Cref{app:simulatable_CRA} for more details and examples of CRAs and their corresponding transformed graphs.

For any strongly simulatable CRA $\zeta$, we define a generalized pseudometric as the TMD between the strong simulations of the graphs under $\zeta$.

\begin{definition}
Let $\zeta$ be a strongly simulatable CRA.
For any depth $t>0$, the $\zeta$-TMD is defined as:
\begin{equation}
    \zeta\text{-TMD}^t(G, H) \coloneqq \text{TMD}^t(R^\zeta(G), R^\zeta(H)),
\end{equation}
where $R^\zeta(G)$ and $R^\zeta(H)$ are the strong simulations of $G$ and $H$ under $\zeta$, respectively.
\end{definition}

\begin{proposition}
\label{thm:augmented_TMD_is_pmetric}
Let $\zeta$ be a strongly simulatable CRA. 
For every $t > 0$, $\zeta\text{-TMD}^t$ is a pseudometric. 
\end{proposition}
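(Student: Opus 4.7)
The plan is that this proposition follows almost immediately from the fact that $\text{TMD}^t$ is already a pseudometric on $\mathcal{G}$ (as established by \citet{chuang2022tree}; see also \Cref{app:tmd}). Since $\zeta\text{-TMD}^t(G,H)$ is defined as the pullback of $\text{TMD}^t$ along the deterministic strong-simulation map $R^\zeta \colon \mathcal{G} \to \mathcal{G}$, the three pseudometric axioms transfer by a one-line composition argument.

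Concretely, I would verify each axiom by direct substitution. For \textbf{reflexivity}, $\zeta\text{-TMD}^t(G,G) = \text{TMD}^t(R^\zeta(G), R^\zeta(G)) = 0$ follows from reflexivity of $\text{TMD}^t$. For \textbf{symmetry}, $\zeta\text{-TMD}^t(G,H) = \text{TMD}^t(R^\zeta(G), R^\zeta(H)) = \text{TMD}^t(R^\zeta(H), R^\zeta(G)) = \zeta\text{-TMD}^t(H,G)$ follows from symmetry of $\text{TMD}^t$. For the \textbf{triangle inequality}, given graphs $G, H, K$, applying the triangle inequality for $\text{TMD}^t$ to the triple $R^\zeta(G), R^\zeta(H), R^\zeta(K)$ yields
\begin{equation*}
\zeta\text{-TMD}^t(G,K) \leq \text{TMD}^t(R^\zeta(G), R^\zeta(H)) + \text{TMD}^t(R^\zeta(H), R^\zeta(K)) = \zeta\text{-TMD}^t(G,H) + \zeta\text{-TMD}^t(H,K).
\end{equation*}
Non-negativity follows similarly from non-negativity of $\text{TMD}^t$.

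The only step that is not purely formal is a brief verification that $R^\zeta$ is well-defined as a map on graphs up to isomorphism, so that $\zeta\text{-TMD}^t$ is itself invariant under relabeling of either input. This follows from the fact that $\zeta$ is a CRA and hence isomorphism-invariant, combined with the canonical nature of the strong-simulation construction described in \Cref{app:simulatable_CRA}; if $G \cong G'$ then $R^\zeta(G) \cong R^\zeta(G')$, and since $\text{TMD}^t$ vanishes on isomorphic pairs, the value $\zeta\text{-TMD}^t(G,H)$ does not depend on the choice of representatives. I do not anticipate a substantive obstacle here: the proposition is essentially a functoriality observation packaged as a lemma for later use in the generalization bounds of \Cref{thm:gin_gen_bound_final}.
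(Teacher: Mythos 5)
Your proof is correct and follows the same route the paper takes: both argue that $\zeta\text{-TMD}^t$ is the pullback of the pseudometric $\text{TMD}^t$ along the fixed map $R^\zeta$, so the axioms transfer by substitution. The paper only writes out symmetry and asserts the rest is "easy to see," whereas you verify all three axioms and add the (correct) aside on invariance under isomorphism; the substance is the same.
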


Similar to the standard TMD, $\zeta\text{-TMD}^t(G, H)$ can be zero even if $G \neq H$, as it is a pseudometric. Nonetheless, it can distinguish graphs that are differentiable by the color refinement algorithm $\zeta$ within $T$ iterations.

\begin{proposition}
\label{thm:augmented_TMD_more_expressive}
Let $\zeta$ be a strongly simulatable CRA. 
If two graphs $G$ and $H$ are distinguished by $\zeta$ after $T$ iterations, then $\zeta\text{-TMD}^{T+1}(G, H) > 0$.
\end{proposition}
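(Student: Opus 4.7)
The plan is to reduce the claim to the corresponding (and presumably already known) property of the standard \text{TMD} via the strong simulation transformation, so that essentially no new combinatorial work is required.

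First I would unpack the definition: by construction, $\zeta\text{-TMD}^{T+1}(G,H) = \text{TMD}^{T+1}(R^\zeta(G), R^\zeta(H))$, so it suffices to show that $\text{TMD}^{T+1}(R^\zeta(G), R^\zeta(H)) > 0$. Next I would invoke the strong simulation property recalled in \Cref{sec:gen_tmd}: running $T$ iterations of 1-WL on $R^\zeta(G)$ produces a node coloring that is at least as expressive as the coloring obtained after $T$ iterations of $\zeta$ on $G$ (and the same for $H$). Consequently, if $\zeta$ separates $G$ and $H$ after $T$ iterations, the 1-WL multisets of node colors after $T$ iterations on $R^\zeta(G)$ and $R^\zeta(H)$ must differ.

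The remaining step is the key technical input: a pair of graphs distinguished by $T$ iterations of 1-WL has strictly positive $\text{TMD}^{T+1}$. This is exactly the depth-sensitive half of the TMD/1-WL correspondence of \citet{chuang2022tree}, which I would cite (or recall in \Cref{app:tmd}) rather than reprove. Concretely, the multisets of depth-$T$ 1-WL colors coincide with the multisets of depth-$(T{+}1)$ rooted computation trees up to relabeling, so distinct color multisets imply distinct tree distributions, which forces the Wasserstein distance between $\mathcal{T}^{T+1}(R^\zeta(G))$ and $\mathcal{T}^{T+1}(R^\zeta(H))$ to be strictly positive. Combining these three facts yields $\zeta\text{-TMD}^{T+1}(G,H) > 0$.

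I expect the main obstacle to be a bookkeeping issue rather than a conceptual one, namely matching the depth indexing between $\zeta$-iterations on $G$, 1-WL iterations on $R^\zeta(G)$, and the tree depth used in $\text{TMD}^{T+1}$. The strong simulation of \citet{jogl2024expressivity} is typically phrased as ``$t$ rounds of 1-WL on $R^\zeta(G)$ refine $t$ rounds of $\zeta$ on $G$'', so the depths align directly; but if the transformation introduces auxiliary nodes or a one-step delay, a small offset in $T$ could appear. I would verify this carefully in the proof, noting that any such offset only affects the $+1$ constant in the statement and does not change its substance.
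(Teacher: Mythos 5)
Your proposal is correct and follows exactly the same three-step route as the paper's proof: unpack $\zeta\text{-TMD}^{T+1}(G,H) = \text{TMD}^{T+1}(R^\zeta(G),R^\zeta(H))$, use the contrapositive of the strong-simulation property to conclude that 1-WL distinguishes $R^\zeta(G)$ and $R^\zeta(H)$ after $T$ iterations, and invoke Theorem~7 of \citet{chuang2022tree} to get strict positivity of $\text{TMD}^{T+1}$. The indexing concern you flag is a non-issue precisely because strong simulation (as defined in \Cref{def:simulatable_cr_alg}) is iteration-matched, which is what both the paper's proof and yours implicitly rely on.
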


Another immediate consequence is that MPNNs corresponding to the CRA $\zeta$, referred to as \emph{$\zeta$-MPNNs}, are Lipschitz continuous with respect to the $\zeta$-TMD. Specifically, we have the following result:

\begin{theorem}
\label{thm:zeta-GNN_Lip}
Let $ \zeta $ be a strongly simulatable CRA, and let $ h : \mathcal{G} \rightarrow \mathbb{R}^K $ be a $ \zeta $-MPNN with $ T $ layers, where the message and update functions are Lipschitz continuous with Lipschitz constants bounded by $ L_{g^{(t)}} $ and $ L_{f^{(t)}} $, respectively. Suppose $h$ includes a global sum pooling layer followed by a Lipschitz continuous classifier $ c $ with Lipschitz constant $ L_c $. Then, for any graphs $ G $ and $ H $,
\begin{equation*}
\|h(G) - h(H)\| \leq L \cdot \zeta\text{-TMD}^{T+1}(G, H),
\end{equation*}
where $ L = L_c 2^T \prod_{t=1}^T L_{f^{(t)}} L_{g^{(t)}} $ and $\|\cdot\|$  denotes the Euclidean vector norm.

\end{theorem}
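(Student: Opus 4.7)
The plan is to reduce the statement to the already-known Lipschitz property of standard MPNNs with respect to the ordinary TMD (the main result of Chuang \& Jegelka on Tree Mover's Distance) by passing through the strong simulation $R^\zeta$. Since $\zeta\text{-TMD}^{T+1}(G,H)$ is defined as $\text{TMD}^{T+1}(R^\zeta(G), R^\zeta(H))$, it suffices to realize $h$ on $G$ as an ordinary MPNN on the transformed graph $R^\zeta(G)$ with Lipschitz constants controlled by $L_{f^{(t)}}, L_{g^{(t)}}$, and then quote the TMD-Lipschitz bound.

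\textbf{Step 1: Re-expressing $h$ via the strong simulation.} By the definition of strong simulation (see \Cref{app:simulatable_CRA}), running $t$ iterations of the 1-WL refinement on $R^\zeta(G)$ produces a coloring at least as fine as $t$ iterations of $\zeta$ on $G$. The framework of \citet{jogl2024expressivity} then guarantees that any $\zeta$-MPNN with message/update functions $g^{(t)}, f^{(t)}$ on $G$ can be written as a standard MPNN $\tilde h$ on $R^\zeta(G)$ whose per-layer message and update maps $\tilde g^{(t)}, \tilde f^{(t)}$ are built directly from $g^{(t)}, f^{(t)}$ (together with the deterministic feature augmentations prescribed by $R^\zeta$). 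In particular, $\tilde g^{(t)}$ and $\tilde f^{(t)}$ inherit Lipschitz constants bounded by $L_{g^{(t)}}$ and $L_{f^{(t)}}$, respectively, because the augmentations are fixed bijective encodings of structural information that do not increase the Lipschitz modulus in the feature arguments. The global sum pooling and classifier $c$ carry over unchanged.

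\textbf{Step 2: Apply the TMD Lipschitz bound.} With $h(G) = \tilde h(R^\zeta(G))$ and $h(H) = \tilde h(R^\zeta(H))$, invoking the TMD-Lipschitz result for standard MPNNs (Chuang \& Jegelka, recalled in \Cref{app:tmd}) to $\tilde h$ yields
\begin{equation*}
\|h(G) - h(H)\| \;=\; \|\tilde h(R^\zeta(G)) - \tilde h(R^\zeta(H))\| \;\leq\; L_c\, 2^T \prod_{t=1}^T L_{f^{(t)}} L_{g^{(t)}} \cdot \text{TMD}^{T+1}(R^\zeta(G), R^\zeta(H)).
\end{equation*}
Substituting the definition $\zeta\text{-TMD}^{T+1}(G,H) = \text{TMD}^{T+1}(R^\zeta(G), R^\zeta(H))$ finishes the argument.

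\textbf{Main obstacle.} The substantive step is Step~1: one must verify that the simulation construction used to define $R^\zeta$ does not inflate the Lipschitz constants of the message and update maps when they are reinterpreted as acting on the augmented features of $R^\zeta(G)$. This is where a careful bookkeeping of how $R^\zeta$ encodes extra structural information (node/edge type indicators, virtual nodes, or tuple features, depending on the CRA) into additional feature coordinates is required, so that the resulting $\tilde g^{(t)}, \tilde f^{(t)}$ remain Lipschitz with the same constants. Once this is in place, Step~2 is a direct citation and the factor $2^T$ comes from the recursive unrolling of neighborhood aggregations in the TMD bound.
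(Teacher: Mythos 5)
Your two-step structure (push $G$ through $R^\zeta$, then invoke a TMD-Lipschitz bound for standard MPNNs) matches the paper's argument, and your Step~2, including the constant $L_c 2^T \prod_t L_{f^{(t)}} L_{g^{(t)}}$, is exactly what the paper proves in its auxiliary lemma (\Cref{lemma:thm4.4} and \Cref{lemma:thm4.4:2}) and then applies. The problem is Step~1.

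You frame Step~1 as a translation: a $\zeta$-MPNN runs on $G$, and the strong-simulation framework of \citet{jogl2024expressivity} is claimed to furnish an equivalent standard MPNN on $R^\zeta(G)$ whose message and update maps inherit the Lipschitz constants $L_{g^{(t)}}, L_{f^{(t)}}$. That claim is not available to you. Strong simulation in \citet{jogl2024expressivity} is a statement about \emph{distinguishing power of colorings} (if 1-WL on $R(X)$ agrees, so does $\zeta$ on $X$); it says nothing about constructing neural message/update functions with controlled Lipschitz moduli, and in general one cannot expect an arbitrary architecture operating on $G$ to translate into an MPNN on $R^\zeta(G)$ with the same constants for free. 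In the paper this step never arises because a $\zeta$-MPNN is \emph{defined} to be an ordinary MPNN applied to the transformed graph $R^\zeta(G)$: the functions $g^{(t)}, f^{(t)}$ in the theorem already act on the features of nodes in $R^\zeta(G)$, and $h(G) = h'(R^\zeta(G))$ holds by construction. The paper's proof therefore consists of exactly one identity, one application of \Cref{lemma:thm4.4:2}, and one use of the definition of $\zeta\text{-TMD}$ — no translation lemma required. So what you label the ``main obstacle'' is not an obstacle in the paper's framework; but if you do insist on treating $\zeta$-MPNNs as architectures native to $G$, your appeal to the simulation framework is insufficient and the proof has a genuine gap at precisely that point.
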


The Lipschitz property established in \Cref{{thm:zeta-GNN_Lip}} plays a key role in deriving the generalization bounds for $\zeta$-MPNNs presented in \Cref{{sec:generalization_bounds_final}}. Next, we illustrate the Lipschitz continuity using the example of $\mathcal{F}$-MPNNs. 

\subsection{Example: \texorpdfstring{$\mathcal{F}$}{F}-MPNNs}
The \emph{$\mathcal{F}$-Weisfeiler-Leman ($\mathcal{F}$-WL) test} generalizes the 1-WL test by incorporating features derived from a finite family of graphs, $\mathcal{F} \subset \mathcal{G}$. These features, often referred to as \emph{motifs or patterns}, are used to enhance node representations. 

Specifically, for each node $v$ in a graph $G$, the feature vector of $v$ is augmented with counts of patterns in $\mathcal{F}$ that include $v$. Formally, the augmented feature vector is defined as:
\begin{equation*}
\tilde{x}_v = \left( x_v \, , \,  \text{cnt}(P_1, G; v) \, , \, \dots \, , \,  \text{cnt}(P_{|\mathcal{F}|}, G; v)\right),
\end{equation*}
where $\text{cnt}(P, G; v)$ represents the number of occurrences of the pattern $P$ in $G$ such that $v$ is part of the pattern. These counts can for example be (injective) homomorphism counts \citep{Bouritsas_2023} or cycle basis counts \citep{yan2024cycle}.

If $\mathcal{F} \subset \Tilde{\mathcal{F}}$, then $\tilde{\mathcal{F}}$-WL is more expressive than $\mathcal{F}$-WL \citep{barcelo_gnns_local_graph_params, Bouritsas_2023}. 

Correspondingly, \emph{$\mathcal{F}$-MPNNs} incorporate these motif counts into their message-passing scheme. 
Clearly, $\mathcal{F}$-WL can be strongly simulated via a transformed graph $R^\mathcal{F}(G)$ that includes the motif counts as node features.

\begin{corollary}
\label{cor:FMPNN_Lip}
Let $h$ be an $\mathcal{F}$-MPNN with $T$ layers. Then, there exists a constant $L$ such that for any graphs $G$ and $H$,
\begin{equation}
    \begin{aligned}
& \|h(G) - h(H)\| \leq L \cdot \mathcal{F}\text{-TMD}^{T+1}(G, H).
    \end{aligned}
\end{equation}
\end{corollary}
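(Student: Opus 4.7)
The corollary is essentially a specialization of \Cref{thm:zeta-GNN_Lip}, so the plan is to verify that $\mathcal{F}$-WL fits the framework of a strongly simulatable CRA and that an $\mathcal{F}$-MPNN fits the framework of a $\zeta$-MPNN. Once both conditions are checked, the bound follows by plugging $\zeta = \mathcal{F}$-WL into the conclusion of \Cref{thm:zeta-GNN_Lip}.

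First, I would make explicit the strong simulation of $\mathcal{F}$-WL. Given an input graph $G$, define $R^\mathcal{F}(G)$ to be the graph with the same node set and edge set as $G$, but with node features augmented from $x_v$ to $\tilde{x}_v = (x_v, \mathrm{cnt}(P_1, G; v), \dots, \mathrm{cnt}(P_{|\mathcal{F}|}, G; v))$. Because the motif counts are fixed before message passing begins and depend only on $G$ (not on iteration), one iteration of 1-WL on $R^\mathcal{F}(G)$ produces, by injectivity of the 1-WL hash, a coloring that is at least as refined as one iteration of $\mathcal{F}$-WL on $G$. Iterating shows that $\mathcal{F}$-WL is strongly simulated by 1-WL on $R^\mathcal{F}(G)$, which makes $\mathcal{F}$-WL a strongly simulatable CRA in the sense required by \Cref{thm:zeta-GNN_Lip}.

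Next, I would argue that an $\mathcal{F}$-MPNN acting on $G$ is precisely an MPNN acting on $R^\mathcal{F}(G)$, i.e., a $\zeta$-MPNN with $\zeta = \mathcal{F}$-WL. Since the message functions $g^{(t)}$, the update functions $f^{(t)}$ and the final classifier $c$ are all compositions of MLPs with standard Lipschitz activations, each of them is Lipschitz continuous on compact feature ranges; let $L_{g^{(t)}}$, $L_{f^{(t)}}$ and $L_c$ denote their Lipschitz constants. Applying \Cref{thm:zeta-GNN_Lip} to $h$ viewed as a $\zeta$-MPNN then yields
\begin{equation*}
\|h(G) - h(H)\| \leq L \cdot \zeta\text{-TMD}^{T+1}(G,H) = L \cdot \mathcal{F}\text{-TMD}^{T+1}(G,H),
\end{equation*}
with $L = L_c\, 2^T \prod_{t=1}^T L_{f^{(t)}} L_{g^{(t)}}$, where the equality is just the definition of the $\mathcal{F}$-TMD as $\text{TMD}^{T+1}(R^\mathcal{F}(G), R^\mathcal{F}(H))$.

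The only point where some care is required, and hence the closest thing to an obstacle, is making the strong simulation precise: one must confirm that the motif-count features injected at the input level are indeed preserved through 1-WL iterations in the sense demanded by the definition of strong simulation, and that the 1-WL refinement on $R^\mathcal{F}(G)$ is no less expressive than $\mathcal{F}$-WL on $G$ at every depth up to $T$. Once this verification is in place, which essentially amounts to invoking the construction already discussed in \Cref{app:simulatable_CRA}, the Lipschitz bound is an immediate corollary with no further calculation needed.
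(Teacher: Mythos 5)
Your proposal is correct and matches the paper's reasoning: the paper treats \Cref{cor:FMPNN_Lip} as an immediate instance of \Cref{thm:zeta-GNN_Lip}, justified by the sentence just before the corollary asserting that $\mathcal{F}$-WL is strongly simulated via the feature-augmented graph $R^\mathcal{F}(G)$. The only minor inaccuracy is the phrase ``Lipschitz continuous on compact feature ranges''---MLPs with Lipschitz-homogeneous activations are globally Lipschitz, which is what the theorem actually requires---but this does not affect the argument.
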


We emphasize that this is just one example; analogous definitions of $\zeta$-TMDs and corresponding versions of \Cref{cor:FMPNN_Lip} can be derived for other GNN architectures, such as $k$-GNNs \citep{morrisWeisfeilerLemanGo2019}, see \Cref{sec:example_k_gnn}.

\section{Generalization Bounds With Respect to Tree Mover’s Distance}
\label{sec:generalization_bounds_final}

In this section, we establish generalization bounds for GNNs using our generalized TMD framework.

\subsection{Problem Setup and Assumptions}

We consider a classification task where each data point is a graph equipped with node features. Formally, let $\mathcal{G}_{\mathrm{tr}}$ and $\mathcal{G}_{\mathrm{te}}$ denote the fixed sets of training and test graphs, respectively.  
We assume that there exists a constant $B > 0$ such that the norm of each node feature is bounded, i.e., 
\begin{equation*}
\|x(G)_i\|_2 \leq B \quad \forall G \in \mathcal{G}_{\mathrm{tr}} \cup \mathcal{G}_{\mathrm{te}}, \forall i \in V(G).
\end{equation*}

Each graph $G$ is assigned a label $y_G \in \{1, \ldots, K\}$. We assume that, for each class $k$, there exists a Lipschitz continuous function $\eta_k: \mathcal{G} \to [0, 1]$ with respect to some pseudometric $\pd$ such that
\begin{equation*}
\Pr(y_G = k \mid G) = \eta_k(G),
\end{equation*}
and set 
$C \coloneqq \max_{k \in \{1, \ldots, K\}} \mathrm{Lip}(\eta_k)$.
If the labels are sampled according to such functions $\eta_k$, we say that the labels $y$ are \emph{strongly correlated} with $\pd$ and write $y \sim \pd$.

Furthermore, we define $\xi_{\pd}$ as the distance between the training set and the test set:
\begin{equation}
\label{eq:def_structural_sim}
\xi_{\pd} = \pd\left( \mathcal{G}_{\text{te}}, \mathcal{G}_{\text{tr}} \right) \coloneqq \max_{G \in \mathcal{G}_{\mathrm{te}}} \min_{H \in \mathcal{G}_{\text{tr}}}  \pd(G, H).
\end{equation}

Given the set of labeled graphs $\mathcal{G}_{\mathrm{tr}}$ the task of graph-level supervised learning is to learn a classifier $h : \mathcal{G}  \rightarrow \mathbb{R}^{K}$ from a function family $\mathcal{H}$. Given a classifier $h \in \mathcal{H}$, the classification for a graph $G$ is obtained by
\begin{equation*}
\Tilde{y}_G = \underset{k \in \{1, \dots, K\}}{\mathrm{argmax}} h(G)[k],
\end{equation*}
where $h(G)[k]$ refers to the $k$-th entry of $h(G)$.

For the observed graph labels $(y_G)_{G \in \mathcal{G}_{\mathrm{tr}}}$, the \emph{empirical margin loss} of $h$ on $\mathcal{G}_{\mathrm{tr}}$ for a margin $\gamma \geq 0$ is defined as 
\begin{equation*}
\widehat{\mathcal{L}}_{tr}^\gamma(h) := \frac{1}{N_{\mathrm{tr}}} \sum_{G \in \mathcal{G}_{\mathrm{tr}}} \mathbbm{1}[ h(G)[y_G] \leq (\gamma + \max_{k \neq y_G} h(G)[k])]. 
\end{equation*}

Here, $\mathbbm{1}$ is the indicator function. The empirical margin loss of $h$ on $\mathcal{G}_{\mathrm{te}}$ is defined equivalently. The expected margin loss is defined as follows,
$
\mathcal{L}^\gamma_{\mathrm{te}}(h) := \mathbb{E}_{y_G \sim \text{Pr}(y_G \mid G), G \in G_{\mathrm{te}}} \left[ \widehat{\mathcal{L}}^\gamma_{\mathrm{te}}(h) \right].
$

\subsection{Main Results}
In this section, we derive generalization bounds for GNNs, where we focus on the standard approach involving end-to-end trainable GNNs. 

We consider the GNN to be the composition of two functions. Specifically, let $ e: \mathcal{G} \to \mathbb{R}^b $ denote the graph embedding network, which maps a graph $ G \in \mathcal{G} $ into a $ b $-dimensional latent space, and let $ c: \mathbb{R}^b \to \mathbb{R}^K $ denote the classifier, which maps embeddings to class scores. Consequently, the hypothesis space for these models can be written as:
\[
\mathcal{H} = \mathcal{C} \circ \mathcal{E},
\]
where $ \mathcal{E} $ represents the space of graph embedding networks and $ \mathcal{C} $ represents the class of MLP classifiers. For end-to-end learnable GNNs, both $ \mathcal{E} $ and $ \mathcal{C} $ are trainable. In contrast, for models with fixed encoders, only $ \mathcal{C} $ is trainable, while $ \mathcal{E} $ remains fixed.

\subsubsection{End-to-End Learnable GNNs}

Let $ \zeta $ represent a CRA that can be strongly simulated by $ 1 $-WL. In this context, we consider the hypothesis space $\mathcal{H}_{\zeta} = \mathcal{C} \circ \mathcal{E}_{\zeta}$, where $\mathcal{E}_{\zeta}$ is the set of all $\zeta$-MPNNs of depth $T$, where each layer consists of a message function $g^{(t)}$ and an update function $f^{(t)}$. Both $g^{(t)}$ and $f^{(t)}$ are MLPs with a maximum hidden dimension of $b$ and may contain an arbitrary number of layers. The weight matrices across all message and update functions are denoted by $\{W_i\}_{i=1}^P$.

Let $\mathcal{C}$ denote the set of MLP classifiers with $L$ layers and a maximum hidden dimension of $b$. The weight matrices in the MLP classifier are denoted by $\{\tilde{W}_l\}_{l=1}^L$.

We now present the generalization bound for end-to-end learnable GNNs.

\begin{theorem}
\label{thm:gin_gen_bound_final}
Suppose that $y \sim \zeta\text{-TMD}^{T+1}$.
Under mild assumptions (see \Cref{sec:graph_classification_setting}), for any $\gamma > 0$ and $0 < \alpha < \frac{1}{4}$, with probability at least $1 - \delta$ over the sample of training labels $y_{\mathrm{tr}}$, we have for any $\tilde{h} \in \mathcal{H}_{\zeta}$
\begin{equation*}
\begin{aligned}
\mathcal{L}_{\mathrm{te}}^0(\tilde{h}) \leq \widehat{\mathcal{L}}_{\mathrm{tr}}^\gamma(\tilde{h}) + \mathcal{O}\Bigg( \frac{b \left(\sum_{i} \|W_{i}\|_2^2 + \sum_{l} \|\Tilde{W}_{l}\|_2^2 \right)}{N_{\mathrm{tr}}^{2\alpha} (\gamma/8)^{2/D}} \xi_{\zeta}^{2/D}
 + \frac{b^2 \ln\left(2b D C (2dB)^{1/D}\right)}{N_{\mathrm{tr}}^{2\alpha} \gamma^{1/D} \delta} + C K \xi_{\zeta} \Bigg),
\end{aligned}
\end{equation*}
where  $\xi_{\zeta} \coloneqq  \zeta\text{-TMD}^{T+1}\left( \mathcal{G}_{\text{te}} , \mathcal{G}_{\text{tr}} \right)$ is defined in \Cref{eq:def_structural_sim},  $D$ represents the total number of learnable weight matrices, $b$ denotes the maximum hidden dimension, $d$ is the maximum degree of the graphs, and $C$ serves as an upper bound on the spectral norm of all weight matrices.
\end{theorem}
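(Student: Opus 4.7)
The plan is to combine the Lipschitz continuity of $\zeta$-MPNNs established in \Cref{thm:zeta-GNN_Lip} with an algorithmic-robustness-style argument in the spirit of \citet{xu2012robustness} (as already deployed for GNNs by \citet{vasileiou2024coveredforestfinegrainedgeneralization}), then upgrade it to a uniform bound over $\mathcal{H}_\zeta$ via a covering-number argument. The generalization gap will be decomposed into three contributions: (i) a structural transfer error, controlled by $\xi_\zeta$ and the network's Lipschitz constant; (ii) a Bayes-posterior transfer error, controlled by $C\,\xi_\zeta$ and the Lipschitz assumption on $\eta_k$; and (iii) a capacity term, controlled by covering numbers of the $\zeta$-MPNN/MLP composition.

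First I would fix any $\tilde{h} = c \circ e \in \mathcal{H}_\zeta$ and bound its global Lipschitz constant $L(\tilde{h})$ with respect to $\zeta\text{-TMD}^{T+1}$. By \Cref{thm:zeta-GNN_Lip} the embedding network contributes $2^T \prod_{t} L_{f^{(t)}} L_{g^{(t)}}$, with each $L_{f^{(t)}}, L_{g^{(t)}}$ bounded by the product of spectral norms of the MLP weights; composition with $c$ adds a factor $\prod_{l} \|\tilde W_l\|_2$; degree-bounded aggregation contributes powers of $d$. Next, for every test graph $G \in \mathcal{G}_{\mathrm{te}}$ I would associate its $\zeta$-TMD nearest neighbor $H(G) \in \mathcal{G}_{\mathrm{tr}}$, so that $\zeta\text{-TMD}^{T+1}(G, H(G)) \leq \xi_\zeta$. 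Using Lipschitzness, $\|\tilde{h}(G) - \tilde{h}(H(G))\| \leq L(\tilde{h})\,\xi_\zeta$, so any prediction at $H(G)$ whose margin exceeds $2 L(\tilde{h})\,\xi_\zeta$ remains correct at $G$; this converts the test $0$-margin loss at $G$ into a training $\gamma$-margin loss at $H(G)$ whenever $\gamma \gtrsim L(\tilde{h})\,\xi_\zeta$. Independently, Lipschitzness of $\eta_k$ gives $|\eta_k(G) - \eta_k(H(G))| \leq C\,\xi_\zeta$, which after a union bound over the $K$ classes yields the standalone $C K \xi_\zeta$ summand in the bound and handles the gap between the Bayes risk at $G$ and at its neighbor.

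To replace the empirical loss of the specific $\tilde{h}$ by a uniform statement over $\mathcal{H}_\zeta$, I would build a proper $\epsilon$-cover of $\mathcal{H}_\zeta$ at resolution $\epsilon \sim \gamma/8$. The cover is obtained by covering each of the $D$ weight matrices in spectral norm and chaining the perturbations through the Lipschitz structure of the network; the log-covering number then scales as $\sum_i \|W_i\|_2^2 + \sum_l \|\tilde W_l\|_2^2$ times a polylogarithmic factor of the form $\ln(2bD C (2dB)^{1/D})$. Invoking the covering inside a standard margin-loss concentration (e.g.\ a McDiarmid-type bound over the $N_{\mathrm{tr}}$ independent stochastic labels) produces the $N_{\mathrm{tr}}^{-2\alpha}$ rate for any $\alpha < 1/4$, and inverting the $\gamma^{1/D}$-dependence coming from the cover resolution gives the two big-$\mathcal{O}$ terms with $\gamma$ in their denominators.

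The principal technical obstacle is propagating the per-weight covering error through both the graph-structured aggregation and the eventual composition with $c$ while keeping the dependence on the maximum degree $d$, the depth $T$, and the hidden width $b$ controlled; this is where the $2^T$ and $d$ factors from \Cref{thm:zeta-GNN_Lip} must be carefully combined with the spectral-norm covering numbers, and where the $1/D$ exponents on $\gamma$ and $\xi_\zeta$ ultimately arise from balancing the Lipschitz-scaled cover radius against the induced perturbation at the output. A secondary subtlety is that the stochasticity in the theorem concerns only the labels (the graphs are fixed), so the concentration step operates on Bernoulli-type random variables conditional on $\mathcal{G}_{\mathrm{tr}}$; this keeps the structural term $\xi_\zeta$ deterministic and lets the $\delta$-dependence enter only through the label-sampling tail bound.
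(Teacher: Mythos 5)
Your proposal takes a genuinely different route from the paper. The paper's proof of \Cref{thm:gin_gen_bound_final} is a PAC-Bayesian argument in the tradition of \citet{neyshabur2018a} and \citet{ma2021subgroup}: a Gaussian prior $P$ over the vectorized weights, a posterior $Q$ centered at the trained weights $\tilde h$, a KL term $D_{\mathrm{KL}}(Q\|P)\sim \sum_i\|W_i\|_2^2/\sigma^2$, a discrepancy term $D_{\mathrm{te},\mathrm{tr}}(P;\lambda)$ controlled by the Lipschitzness of $\eta_k$ and by \Cref{ass:concentrated_exp_loss_diff}, and a weight-perturbation stability lemma (\Cref{lemma:mpnn_perturbation_bound}) ensuring the posterior stays within $\gamma/8$ of $\tilde h$ in output space; the only covering in the paper's proof is a one-dimensional grid over the prior-variance scale $\tilde\beta$, not a cover of $\mathcal H_\zeta$. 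You instead propose an algorithmic-robustness-plus-hypothesis-class-covering argument in the spirit of \citet{xu2012robustness} and \citet{vasileiou2024coveredforestfinegrainedgeneralization}, which the paper explicitly frames as related but distinct prior work.

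Beyond being a different route, your sketch has two concrete gaps relative to the theorem \emph{as stated}. First, the form of the bound: the summand
$\frac{b\,\sum_i\|W_i\|_2^2}{N_{\mathrm{tr}}^{2\alpha}(\gamma/8)^{2/D}}\,\xi_\zeta^{2/D}$
is a KL divergence between Gaussians whose variance is tuned to $\sigma^2 \sim (\gamma/8\xi_\zeta)^{2/D}$ inside the PAC-Bayes machinery; the $2/D$ exponents on both $\gamma$ and $\xi_\zeta$, and the appearance of \emph{squared spectral norms in the numerator}, do not emerge from a metric-entropy argument, which would instead contribute a term of order $\ln\mathcal N_\epsilon / N_{\mathrm{tr}}$. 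Your covering idea would prove \emph{a} generalization bound, but not this one. Second, you invoke a McDiarmid-type concentration over the stochastic labels and claim a rate $N_{\mathrm{tr}}^{-2\alpha}$, but you do not explain why $\alpha$ must be strictly below $1/4$; in the paper this constraint, together with the extra additive slack $N_{\mathrm{tr}}^{-\alpha}$, comes precisely from \Cref{ass:concentrated_exp_loss_diff}, which bounds $\mathcal L_{\mathrm{te}}^{\gamma/4}(h)-\mathcal L_{\mathrm{tr}}^{\gamma/2}(h)$ on the rare event that the Gaussian prior draws high-norm weights — an assumption your argument never engages, even though the theorem statement explicitly depends on it via the phrase ``mild assumptions.'' Your high-level three-way decomposition (structural transfer via $L(\tilde h)\xi_\zeta$, Bayes-posterior transfer giving $CK\xi_\zeta$, capacity) does track the three summands, and the $CK\xi_\zeta$ derivation from Lipschitzness of $\eta_k$ matches Lemma~\ref{lemma:bound_last_term_final}, but without replacing the PAC-Bayes scaffold the argument will not reproduce the claimed bound.
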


Theorem \ref{thm:gin_gen_bound_final} highlights key factors influencing generalization in learnable graph classifiers, which are the structural similarity $\xi_{\zeta}$ under $\zeta$-TMD, model complexity $\left(D, b, \{\|W_i\|_2\}_i, \{\|\tilde{W}_l\|_2\}_l\right)$, graph properties such as maximum degree $d$, and training set size $N_{\mathrm{tr}}$. The structural similarity term $\xi_{\zeta}$ emphasizes the importance of diverse training data. This agrees with empirical observations by \citet{southern2025balancingefficiencyexpressivenesssubgraph}, who demonstrated that augmenting graph representation to maximize TMD dissimilarity improves predictive performance and generalization. These findings underscore the significance of diverse training data (via task-specific augmentations) and balancing model complexity to achieve robust graph-based learning.

The proof of \Cref{thm:gin_gen_bound_final}, given in \Cref{sec:appendix_proofs}, follows a standard PAC-Bayesian approach \citep{neyshabur2018a} and extends it to the correlated setting, following the approach of \citep{ma2021subgroup}. It leverages the Lipschitz continuity of $\zeta$-MPNNs, established in \Cref{thm:zeta-GNN_Lip}, to derive the generalization bound.
Throughout the remainder of this paper, we use the following simplified version of the bound from \Cref{thm:gin_gen_bound_final}:

\begin{equation}
\label{eq:zeta_mpnn_bound}
\mathcal{L}_{\mathrm{te}}^0(\tilde{h}) 
\;\;\le\;\; 
\widehat{\mathcal{L}}_{\mathrm{tr}}^\gamma(\tilde{h})
\;+\;
\mathcal{O}\!\Bigl(
    \underbrace{\tfrac{\mathrm{MC}(\mathcal{C} \circ \mathcal{E}_\zeta)}{N_{\text{tr}}^{2\alpha}\,\gamma^{1/D}\,\delta}}_{\text{complexity term}}
    \;+\;
    \underbrace{C \,\xi_\zeta}_{\substack{\text{structural}\\\text{similarity term}}}
\Bigr).
\end{equation}
Here, $\mathrm{MC}(\mathcal{C} \circ \mathcal{E}_\zeta)$ captures the model complexity, i.e., spectral norms of all learnable weight matrices, hidden dimensions, and maximal graph degree. Notably, the model complexity may increase if the graph transformation $R^\zeta$ enlarges the size, maximum degree, or node feature dimension of the original input graphs.

The structural similarity term, $\xi_\zeta$, depends on the alignment between training and test graphs under $\zeta\text{-TMD}$. Whether $\xi_\zeta$ increases or decreases with more expressive networks depends on the task. 

The same PAC‑Bayes machinery yields analogous bounds when the graph
encoder is frozen and only the MLP head is trained.  We relegate the
formal statement and proof to \Cref{app:fixed_encoders} to keep
the main exposition focused on the end‑to‑end learnable case.

\section{When Does More Expressivity Hurt?}
\label{sec:expressivity_hurt}
In this section, we explore two scenarios within our framework: first, we identify conditions under which augmenting expressivity beyond task-specific requirements can negatively impact generalization. Second, we highlight conditions where increasing expressivity to accurately capture task-specific features does not degrade the bound in \Cref{thm:gin_gen_bound_final}. This gives a possible explanation of why such graph classifiers achieve an optimal balance between expressivity and generalization, resulting in superior performance.

To formalize this, consider a classification task where the labels $ y_G $ are strongly correlated with the pseudometric $\mathcal{F}\text{-TMD}^{T+1}$ for a specific set of substructures $\mathcal{F} \subset \mathcal{G}$. Assume that all MPNNs considered in this section have $T$ layers. Let $\mathcal{F}' \subsetneq \mathcal{F} \subsetneq \tilde{\mathcal{F}} \subset \mathcal{G}$ be finite sets of graphs, where $\mathcal{F}'$-MPNNs are less expressive than $\mathcal{F}$-MPNNs, which in turn are less expressive than $\tilde{\mathcal{F}}$-MPNNs. Denote the corresponding hypothesis spaces as $\mathcal{H}_{\mathcal{F}'}, \mathcal{H}_{\mathcal{F}}$, and $\mathcal{H}_{\tilde{\mathcal{F}}}$.

Our analysis shows that increasing expressivity to the level required to capture task-relevant features (e.g., transitioning from $\mathcal{F}'$-MPNNs to $\mathcal{F}$-MPNNs) generally preserves generalization. However, further increasing expressivity beyond this necessary level (e.g., to $\tilde{\mathcal{F}}$-MPNNs) can degrade generalization significantly.

\begin{theorem}
\label{thm:expressivity_generalization}
    Consider the setting above. Then, for any $\gamma > 0$ and $\alpha < \frac{1}{4}$, with probability at least $1-\delta$ over the sample of training labels $y_{\mathrm{tr}}$, 
    \begin{enumerate}
        \item[i)]  the test loss of any ${\mathcal{F}'}$-MPNN  classifier ${h'}$, satisfies
    \begin{equation*}
        \mathcal{L}_{\mathrm{te}}^0({h}') \leq \mathcal{L}_{\mathrm{tr}}^\gamma({h}') + \mathcal{O}\left( \frac{\mathrm{MC}(\mathcal{H}_{\mathcal{F}'}) \xi_{\mathcal{F}}^{1/D}}{N_{\text{tr}}^{2\alpha} \gamma^{1/D}\delta} + C \xi_{\mathcal{F}} \right).
    \end{equation*}
    where $ C $ is described in \Cref{{thm:gin_gen_bound_final}} and $ {\xi_\mathcal{F}} = {\mathcal{F}}\text{-TMD}^{T+1}(\mathcal{G}_{\mathrm{tr}}, \mathcal{G}_{\mathrm{te}}) $.
        \item[ii)] the test loss of any $\mathcal{F}$-MPNN  classifier $h$, satisfies
    \begin{equation*}
        \mathcal{L}_{\mathrm{te}}^0({h}) \leq \mathcal{L}_{\mathrm{tr}}^\gamma({h}) + \mathcal{O}\left( \frac{\mathrm{MC}(\mathcal{H}_{\mathcal{F}}) \xi_{\mathcal{F}}^{1/D}}{N_{\text{tr}}^{2\alpha} \gamma^{1/D}\delta} + C\xi_{\mathcal{F}} \right).
    \end{equation*}
    \item[iii)]  the test loss of any $\tilde{\mathcal{F}}$-MPNN  classifier $\tilde{h}$, satisfies
    \begin{equation*}
        \mathcal{L}_{\mathrm{te}}^0(\tilde{h}) \leq \mathcal{L}_{\mathrm{tr}}^\gamma(\tilde{h}) + \mathcal{O}\left( \frac{\mathrm{MC}(\mathcal{H}_{\tilde{\mathcal{F}}}) \xi_{\tilde{\mathcal{F}}}^{1/D}}{N_{\text{tr}}^{2\alpha} \gamma^{1/D}\delta} + C\xi_{\tilde{\mathcal{F}}} \right).
    \end{equation*}
    where $\xi_{\tilde{\mathcal{F}}} = \Tilde{\mathcal{F}}\text{-TMD}^{T+1}(\mathcal{G}_{\mathrm{tr}}, \mathcal{G}_{\mathrm{te}}) $.
    \end{enumerate}
\end{theorem}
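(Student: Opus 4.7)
All three bounds will fall out of a single application of \Cref{thm:gin_gen_bound_final} to an appropriately chosen strongly simulatable CRA, combined with two monotonicity facts along the hierarchy $\mathcal{F}'\subsetneq\mathcal{F}\subsetneq\tilde{\mathcal{F}}$ that I would isolate as small auxiliary lemmas. First, a hypothesis‑space embedding $\mathcal{H}_{\mathcal{F}'}\subseteq\mathcal{H}_{\mathcal{F}}\subseteq\mathcal{H}_{\tilde{\mathcal{F}}}$: any $\mathcal{F}'$-MPNN can be realized as an $\mathcal{F}$-MPNN (and similarly $\mathcal{F}$ inside $\tilde{\mathcal{F}}$) by padding the first-layer weight matrix with zero columns on the coordinates of $R^{\mathcal{F}}(G)$ corresponding to motifs in $\mathcal{F}\setminus\mathcal{F}'$; these zero blocks contribute nothing to the spectral norms that enter $\mathrm{MC}$. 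Second, a pseudometric monotonicity $\mathcal{F}'\text{-TMD}^{T+1}\leq\mathcal{F}\text{-TMD}^{T+1}\leq\tilde{\mathcal{F}}\text{-TMD}^{T+1}$: enriching the strong simulation with additional motif-count coordinates can only enlarge the pointwise feature distances that underlie the Wasserstein-on-rooted-trees definition of the TMD.

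\textbf{Parts (ii) and (i).} Part (ii) is immediate: the assumption $y\sim\mathcal{F}\text{-TMD}^{T+1}$ and the choice of hypothesis class $\mathcal{H}_{\mathcal{F}}$ make \Cref{thm:gin_gen_bound_final} with $\zeta=\mathcal{F}$ applicable verbatim, yielding the bound in terms of $\xi_{\mathcal{F}}$ and $\mathrm{MC}(\mathcal{H}_{\mathcal{F}})$. For part (i), I would use the embedding above to view each $h'\in\mathcal{H}_{\mathcal{F}'}$ as a member of $\mathcal{H}_{\mathcal{F}}$, so that $h'$ is Lipschitz with respect to $\mathcal{F}\text{-TMD}^{T+1}$ by \Cref{cor:FMPNN_Lip}. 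Plugging into \Cref{thm:gin_gen_bound_final} gives an identical structural term $\xi_{\mathcal{F}}$ while the complexity factor collapses to $\mathrm{MC}(\mathcal{H}_{\mathcal{F}'})$, because the padded coordinates carry zero weight and the hidden widths, weight norms, and depth count are those of the original $\mathcal{F}'$-MPNN.

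\textbf{Part (iii) and main obstacle.} Here the label–metric mismatch is the key issue: the correlation assumption is stated in $\mathcal{F}\text{-TMD}^{T+1}$, but Lipschitz control of $\tilde{\mathcal{F}}$-MPNNs from \Cref{thm:zeta-GNN_Lip} is only guaranteed in the finer $\tilde{\mathcal{F}}\text{-TMD}^{T+1}$. Using pseudometric monotonicity, the Lipschitz bound on $\eta_k$ with respect to the coarser $\mathcal{F}\text{-TMD}^{T+1}$ immediately implies the same Lipschitz bound with respect to $\tilde{\mathcal{F}}\text{-TMD}^{T+1}$ with the same constant $C$, so $y\sim\tilde{\mathcal{F}}\text{-TMD}^{T+1}$. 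Applying \Cref{thm:gin_gen_bound_final} with $\zeta=\tilde{\mathcal{F}}$ then produces a bound governed by $\xi_{\tilde{\mathcal{F}}}$ and $\mathrm{MC}(\mathcal{H}_{\tilde{\mathcal{F}}})$. The main obstacle is the verification of the monotonicity $\mathcal{F}\text{-TMD}^{T+1}\leq\tilde{\mathcal{F}}\text{-TMD}^{T+1}$, which I would establish by induction on the tree depth using the recursive Wasserstein formulation of TMD in \Cref{app:tmd}: the added motif coordinates only enlarge the ground metric used in each level of the optimal transport coupling, and Wasserstein distance is monotone under such enlargements. Once this is in hand, the remaining steps are routine bookkeeping built on \Cref{thm:gin_gen_bound_final}.
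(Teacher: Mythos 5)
Your proposal is correct and follows essentially the same path as the paper: part (ii) is a verbatim instantiation of \Cref{thm:gin_gen_bound_final} with $\zeta=\mathcal{F}$, and both parts (i) and (iii) hinge on the same TMD monotonicity lemma ($\mathcal{F}'\text{-TMD} \le \mathcal{F}\text{-TMD} \le \tilde{\mathcal{F}}\text{-TMD}$, proved by induction on tree depth exactly as you sketch; this is \Cref{lemma:F_TMD_increases_with_more_motifs}), used for (iii) to transfer the label-correlation assumption from $\mathcal{F}\text{-TMD}$ to the finer $\tilde{\mathcal{F}}\text{-TMD}$. For part (i) the paper argues slightly more directly than your zero-padding embedding: it observes $\|h'(G)-h'(H)\| \le L'\,\mathcal{F}'\text{-TMD}(G,H) \le L'\,\mathcal{F}\text{-TMD}(G,H)$ and then re-runs the PAC-Bayes machinery with $h'$ still operating on $R^{\mathcal{F}'}(G)$, which keeps the smaller feature-norm bound $B'=\max_G\|X(R^{\mathcal{F}'}(G))[i,:]\|_2$ inside $\mathrm{MC}$, whereas your padded model runs on $R^{\mathcal{F}}(G)$ and therefore inherits the larger $B$ (a harmless but avoidable inflation, since $B$ only appears inside the $\mathcal{O}(\cdot)$ and logarithmically).
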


\Cref{thm:expressivity_generalization} highlights the critical role of structural alignment between training and test graphs in determining generalization performance. In cases i) and ii), the same $\mathcal{F}$-TMD governs the similarity between training and test graphs, ensuring that the bound remains controlled. However, in case~iii), introducing a more expressive GNN that captures features beyond those captured by $\mathcal{F}$-TMD (with which the labels are strongly correlated) leads to a higher structural discrepancy $\xi_{\tilde{\mathcal{F}}} \geq \xi_{\mathcal{F}}$, see \Cref{lemma:F_TMD_increases_with_more_motifs}. These findings underscore the importance of aligning model expressivity with the structural requirements of the task, as excessive expressivity can increase our generalization bound in \Cref{thm:gin_gen_bound_final}.

\section{Experiments}
\label{sec:experiments}

In this section, we evaluate GNNs on both synthetic and real-world graph datasets. We introduce two tasks to evaluate how structural similarity between training and test graphs, as well as task-relevant expressivity, impact classification performance and generalization. All experiments use 10-fold cross-validation, and we report the mean accuracy or error. {Additional experiments and tasks}, details and extended results are provided in \Cref{sec:appendix_experiments}. 

\begin{wraptable}{r}{0.4\textwidth}
\vspace{-4mm}
\centering
\caption{Test accuracy on Erdős–Rényi graphs for Task 1. All GNNs achieve a train accuracy greater than $0.99$. More results in \Cref{tab:er_sum_basis_C4} in \Cref{sec:appendix_experiments}.} \label{tab:er_sum_basis_C4_main}
\footnotesize
\begin{tabular}{lc}
\toprule
Model &  Test Accuracy \\
\midrule
LF-G & $0.8450\pm0.0135$ \\
L-G & $0.8543\pm0.0063$ \\
Sub-G & $0.8623\pm0.0058$ \\
$\mathcal{F}_7$-MPNN & $0.9660\pm0.0065$ \\
$\mathcal{F}_4$-MPNN & $\mathbf{0.9793\pm 0.0068}$\\
$\mathcal{F}_3$-MPNN & $0.8657\pm0.0085$ \\
MPNN & $0.8490\pm0.0045$ \\
\bottomrule
\end{tabular}
\vspace{-4mm}
\end{wraptable}

\textbf{Task 1: Median-Based Labeling with Cycle Counts.}
We generate 3{,}000 random graphs from Erd\H{o}s--R\'enyi, Barab\'asi--Albert, and Stochastic Block Model distributions. The sum of 3-cycle and 4-cycle counts is computed for each graph, and graphs with counts below the dataset's median receive label \texttt{0}, while those above receive label \texttt{1}. We evaluate multiple GNN variants, including standard MPNNs, $ \mathcal{F}_l $-MPNNs where $\mathcal{F}_l$ contains cycles up to length $ l $, Subgraph GNNs (Sub-G), Local $2$-GNNs (L-G), and Local Folklore $2$-GNNs (LF-G). Model expressivity increases strictly in this order. We report training and test accuracies at both the final and the epoch with the best validation performance. Results can be found in \Cref{tab:er_sum_basis_C4_main}, and more details and experiments in \Cref{sec:appendix_experiments}.

\begin{figure}[t]
\centering
\pgfplotsset{compat=1.18}

\def\data{data}                

\definecolor{Ltwo}{RGB}{ 46, 94,153}
\definecolor{Lfour}{RGB}{230,124, 56}
\definecolor{Lsix}{RGB}{ 25,140, 60}

\newcommand{\AddLayer}[4]{\ifnum#2=2
    \addlegendimage{empty legend}
    \addlegendentry{\hspace{-.7cm}\textbf{\#layers}} 
  \fi

\addplot+[smooth, thick,  color=#3, mark options={fill=#3, color=#3}, mark size=1pt] table [col sep=comma, header=true, x=distance, y=meanAcc]{\data/#1_#2.csv};
\ifnum#2=2
    \addlegendentry{1} 
  \fi
\ifnum#2=4
    \addlegendentry{3} 
  \fi
\ifnum#2=6
    \addlegendentry{5} 
  \fi
\addplot[name path=topline, smooth, color=#3,forget plot,
]table[col sep=comma, header=true,
          x=distance, y expr=\thisrow{meanAcc} + \thisrow{stdAcc}]
      {\data/#1_#2.csv};
\addplot[name path=bottomline, smooth, color=#3,forget plot,
]table[col sep=comma, header=true,
          x=distance, y expr=\thisrow{meanAcc} - \thisrow{stdAcc}]
      {\data/#1_#2.csv};
\addplot[#3,fill opacity=.1,forget plot,
] fill between[of=topline and bottomline];
}

\begin{tikzpicture}
\begin{groupplot}[
  group style = {group size = 3 by 2, horizontal sep = 1cm},
  width  = 0.35\textwidth,
  height = 0.30\textwidth,
  xmin   = 2, xmax = 1.2e4,
  xmode  = log, log basis x=10,
  xlabel = {TMD to training set},
  tick label style = {font=\footnotesize},
  label style      = {font=\footnotesize},
  legend style={
  draw=none,
  font=\tiny,                         nodes={scale=0.7,transform shape},  inner xsep=1pt, inner ysep=0.5pt,   /tikz/every even column/.style={column sep=0.6ex},
  at={(axis description cs:0,1)},
  anchor=north west,
  xshift=6.5em
    },
    legend image post style={scale=0.7}   ]

\nextgroupplot[
    title={\small BZR},
    ylabel={Accuracy},
    ymin=0.8, ymax=1.00,
    ytick={0.8, 0.9, 1.00},
    legend style={at={(.25, 1.05)}, anchor=north, font=\footnotesize, draw},
]
\AddLayer{bzr}{2}{Ltwo}{1}
\AddLayer{bzr}{4}{Lfour}{1}
\AddLayer{bzr}{6}{Lsix}{1}

\nextgroupplot[
    title={\small Mutagenicity},
    ylabel={},                                ymin=0.75, ymax=0.9,
    ytick={0.75,0.80,0.85,0.9},
    legend style={at={(.25, 1.05)}, anchor=north, font=\footnotesize, draw},
]
\AddLayer{mut}{2}{Ltwo}{0}
\AddLayer{mut}{4}{Lfour}{0}
\AddLayer{mut}{6}{Lsix}{0}

\nextgroupplot[
    title={\small NCI109},
    ylabel={},
    ymin=0.70, ymax=0.9,
    ytick={0.70,0.80,0.90},
    legend style={at={(.25, 1.05)}, anchor=north, font=\footnotesize, draw},
]
\AddLayer{nc}{2}{Ltwo}{0}
\AddLayer{nc}{4}{Lfour}{0}
\AddLayer{nc}{6}{Lsix}{0}

\end{groupplot}
\end{tikzpicture}
\caption{Accuracy of a GIN with 1, 3, and 5 layers versus TMD (log scale) to the training dataset.  }
\vspace{-2.5mm}
\label{fig:tmd-vs-acc-three-datasets}
\end{figure}

\begin{figure}[t]
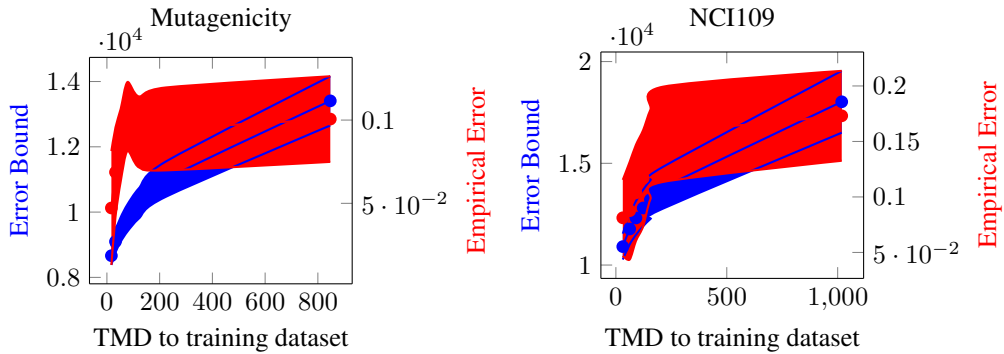

\centering
\errorPlot{Mutagenicity}
\errorPlot{NCI109}

\caption{Error‑bound curves for Mutagenicity, and NCI109.
Each plot shows our theoretical bound (blue, left axis)
and the empirical generalization error (red, right axis)
as a function of TMD to the training set. Shaded areas indicate
$\pm$1 standard deviation across 10 random splits.}
\vspace{-2.5mm}
\label{fig:svg-bounds-only}
\end{figure}

\textbf{Task 2: Real-World Datasets}
We evaluate our framework on six graph classification datasets from the TUDataset \citep{morris2020tudatasetcollectionbenchmarkdatasets}. Results on \textsc{BZR}, \textsc{Mutagenicity}, and \textsc{NCI109} include: \Cref{fig:tmd-vs-acc-three-datasets}, which plots test accuracy versus $\zeta$-TMD distance, and \Cref{fig:svg-bounds-only}, which compares our bound to the observed generalization gap. Additional results on \textsc{PROTEINS}, \textsc{AIDS}, \textsc{COX2}, and experiments on fixed encoders via molecular fingerprints \citep{Gainza2019DecipheringIF} appear in \Cref{sec:appendix_experiments}.

\textbf{Results and Discussion.}
In Task 1, GNNs that explicitly incorporate task-relevant cycle information, in particular $ \mathcal{F}_4 $-MPNNs, outperform MPNNs and expressive GNNs like Local Folklore 2-GNNs. Since the labels in Task 1 strongly correlated with $ \mathcal{F}_4 $-TMD, these results align with our theoretical findings in \Cref{sec:expressivity_hurt}: GNNs that effectively leverage features strongly correlated with the task generalize better than more expressive models. 
On real-world datasets, we observe that classification accuracy declines as test graphs become more distant from the training set, in line with our theoretical insights in \Cref{thm:gin_gen_bound_final}. Importantly, our bound closely aligns with the observed generalization gap across these datasets. By explicitly capturing structure–label correlation, our framework yields significantly tighter generalization bounds compared to standard PAC-Bayes bounds \citep{liao2021a}, which are often orders of magnitude larger (e.g., on the order of $10^{16}$ compared to our $10^4$).

\section{Conclusion}
\label{sec:conclusion}
We introduced a framework for analyzing GNN generalization in settings where graph labels correlate with different pseudometrics. Our analysis provided generalization bounds that emphasize the role of structural similarity between training and test data. We show that increasing expressivity does not necessarily degrade generalization if it aligns with the task. However, both theoretical and empirical results show that excessive expressivity can worsen generalization and predictive performance.

Empirical results confirm that GNNs whose embeddings align with task-relevant structures, i.e., those that are proven to be Lipschitz continuous with respect to pseudometrics strongly correlated with the labels, achieve better generalization. We show that performance degrades for test graphs that are structurally distant from the training set, supporting our theoretical findings. Additionally, we identify cases where increased expressivity can either improve or hinder generalization, depending on task alignment, providing insights for GNN design.

\textbf{Limitations.}
While our framework enables comparisons between $ \mathcal{F}$-MPNNs, in general the dependence of our bound in  \Cref{thm:gin_gen_bound_final} on different TMDs makes direct model comparisons challenging.  These bounds serve as qualitative guidelines rather than precise estimates, as they are not tight and may not reflect practical performance. Moreover, the relevant pseudometric is often unknown and possibly expensive to compute, limiting direct applicability and leaving the identification of suitable metrics an open problem.

\textbf{Future Work.}
Our generalization bounds improve with increased structural similarity between test and train graphs. Future work could explore augmentation techniques that generate synthetic graphs to improve this similarity, thereby boosting generalization. Additionally, developing strategies to adapt model expressivity to input data could enhance performance while maintaining flexibility. 

\section*{Acknowledgements}
S. Maskey is funded by the NSF-Simons Research Collaboration on the Mathematical and Scientific Foundations of Deep Learning (MoDL) (NSF DMS 2031985).

R. Paolino is funded by the Munich Center for Machine Learning (MCML).

F. Jogl is funded by the Center for Artificial Intelligence and Machine Learning (CAIML) at TU Wien.

G. Kutyniok acknowledges support by the DAAD programme Konrad Zuse Schools of Excellence in Artificial Intelligence, sponsored by the Federal Ministry of Education and Research.  G. Kutyniok acknowledges support by the project ”Genius Robot” (01IS24083), funded by the Federal Ministry of Education and Research (BMBF). G. Kutyniok acknowledges support by the gAIn project, which is funded by the Bavarian Ministry of Science and the Arts (StMWK Bayern) and the Saxon Ministry for Science, Culture and Tourism (SMWK Sachsen). G. Kutyniok acknowledges partial support by the Munich Center for Machine Learning (MCML), as well as the German Research Foundation under Grants DFG-SPP-2298, KU 1446/31-1 and KU 1446/32-1. Furthermore, G. Kutyniok is supported by LMUexcellent, funded by the Federal Ministry of Education and Research (BMBF) and the Free State of Bavaria under the Excellence Strategy of the Federal Government and the Länder as well as by the Hightech Agenda Bavaria.

J. Lutzeyer is supported by the French National Research Agency (ANR) via the ``GraspGNNs'' JCJC grant (ANR-24-CE23-3888).

\bibliography{example_paper}
\bibliographystyle{icml2025}

\newpage

\appendix

\glsaddall
\printnoidxglossary[type=symbols,style=mylongblack]
\newpage

\section{Detailed Related Work}
\label{app:detailed_rw}
\subsection{Expressivity of GNNs}
Expressivity in standard neural networks is often associated with their ability to approximate functions within a specific function class. For example, early works showed that MLPs can approximate any continuous function \citep{cybenko1989approximation, hornik1989multilayer}. In the context of GNNs, however, expressivity is more commonly measured by the ability to distinguish between non-isomorphic graphs. This focus stems from computational challenges associated with achieving universality in GNNs and is further supported by the Stone-Weierstrass theorem: a GNN that can distinguish all graphs is also capable of approximating any continuous function on graphs \citep{chen2019equivalence, dasoulasColoringGraphNeural2021}. Consequently, practical research often centers on characterizing the distinguishing power of specific GNN architectures \citep{xu*HowPowerfulAre2018, morrisWeisfeilerLemanGo2023}.

\citet{xu*HowPowerfulAre2018, morrisWeisfeilerLemanGo2019} established that the expressive power of standard MPNNs is limited by the $1$-WL test. To overcome this limitation, later works introduced higher-order GNNs based on $k$-WL and its local variants \citep{maron2018invariant, morris2020weisfeiler, geerts2022expressiveness}. These models are theoretically universal \citep{pmlr-v97-maron19a, keriven2019universal}, meaning they can distinguish any non-isomorphic graphs and approximate continuous functions on graphs. However, their expressivity comes at the cost of exponential time and space complexity with respect to $k$, making them impractical for large-scale applications. To reduce this complexity, \citet{morris2020weisfeiler, zhangWeisfeilerLehmanQuantitativeFramework2024} proposed local $k$-WL variants, while \citet{abboudShortestPathNetworks2022} introduced $k$-hop GNNs, which expand the receptive field to $k$-hop neighborhoods. Despite these improvements, the computational complexity of these approaches remains exponential in $k$.

Subgraph-based models further enhance expressivity by decomposing graphs into subgraphs and aggregating their information \citep{papp2022theoretical, bevilacquaEquivariantSubgraphAggregation2021, youIdentityawareGraphNeural2021, frascaUnderstandingExtendingSubgraph2022, huangBoostingCycleCounting2022}. Although these models are more expressive than $1$-WL, their power is bounded by $3$-WL \citep{frascaUnderstandingExtendingSubgraph2022}. Moreover, subgraph GNNs increase computational complexity significantly, often scaling quadratically or cubically with the number of nodes $N$, which worsens the computational complexity of standard MPNNs by a factor of $N$. 

Most subgraph-based GNNs associate a family of subgraphs with specific nodes or edges by either deleting or marking nodes. However, other strategies for subgraph representation have also been explored. For instance, \citet{michelPathNeuralNetworks2023a, GrazianiExpressivePowerPathBased, paolino2024weisfeiler} focus on paths to enhance expressivity, while \citet{toenshoff2023walkingweisfeilerlemanhierarchy} leverage random walks for similar purposes.

Positional encodings (PEs) and structural encodings (SEs) have emerged as effective strategies to enhance the expressivity of MPNNs. PEs augment node representations with additional information, such as unique node identifiers \citep{vignac2020building}, random features \citep{abboudSurprisingPowerGraph2021a, sato2021random}, or spectral features like eigenvectors \citep{lim2022sign, maskey2022generalized}. In contrast, SEs enrich MPNNs by embedding structural information about the graph. Examples of SEs include subgraph counts \citep{Bouritsas_2023} and homomorphism counts \citep{pmlr-v119-nguyen20c, barcelo2021graph, pmlr-v202-welke23a, jin2024homomorphism}.

While PEs and SEs enhance the expressivity of MPNNs by modifying the initial node features, another line of research focuses on using computational graphs that differ from the input graph. For instance, \citet{dimitrovPLANERepresentationLearning2023} and \citet{bause2023maximally} propose graph transformations that enable MPNNs to achieve universality for specific classes of graphs, such as (outer-)planar graphs. More broadly, \citet{jogl2024expressivity} demonstrate that many expressive GNN variants, including $k$-GNNs and subgraph GNNs, can be \emph{simulated} by applying suitable graph transformations followed by standard message passing.

While incorporating SEs does not increase the forward-pass complexity of MPNNs, it introduces significant preprocessing overhead. For instance, computing homomorphism counts for graphs with treewidth $k$ requires $\mathcal{O}(N^k)$ operations, where $N$ is the number of nodes. This preprocessing complexity becomes exponential in $k$, making it computationally prohibitive for achieving expressivity beyond $k$-WL.  

Most prior work evaluates GNN expressivity using the $k$-WL hierarchy, which provides a qualitative measure of distinguishing power but does not quantify the specific substructures a GNN can encode. To address this gap, \citet{zhangWeisfeilerLehmanQuantitativeFramework2024} proposed homomorphism counts as a quantitative measure of expressivity. Building on the work of \citet{Lovsz1967OperationsWS}, they demonstrated that homomorphism counts are a complete graph invariant, meaning that two graphs are isomorphic if and only if their homomorphism counts are identical. \citet{Tinhofer1986GraphIA, Tinhofer1991ANO} showed that $1$-WL is equivalent to counting homomorphisms from graphs with treewidth one, while \citet{dellLovAszMeets2018} extended this to prove that $k$-WL corresponds to counting homomorphisms from graphs with treewidth $k$.

Recent studies have further explored the relationship between homomorphism counts and GNN expressivity. For example, \citet{barcelo_gnns_local_graph_params} showed that MPNNs augmented with homomorphism counts as initial node features can count homomorphisms of trees augmented with the included patterns. Similarly, \citet{paolino2024weisfeiler} demonstrated that MPNNs enriched with specific path information can count homomorphisms of cactus graphs.

\subsection{Generalization Bounds for GNNs}
The generalization capabilities of Graph Neural Networks (GNNs) have been studied from various theoretical perspectives. \citet{scarselli2018vapnik} provided an early understanding of GNN capacity by deriving generalization bounds for implicitly defined GNNs based on their VC-dimension. Building on this, \citet{du_graphNTK} analyzed the generalization behavior of GNNs in the infinite-width limit using the Graph Neural Tangent Kernel (GNTK), offering insights into their asymptotic performance as network width grows unbounded.

Focusing on data-dependent approaches, \citet{pmlr-v119-garg20c} and \citet{liao2021a} investigated the generalization properties of specific MPNNs with sum aggregation. By employing Rademacher complexity and PAC-Bayes methods, they established bounds that depend on the observed training data, shedding light on how factors like data distribution and architectural choices influence generalization.

\citet{levie2023graphonsignal} introduced the graphon-signal cut distance, a metric for measuring similarity between graph-signal distributions, and demonstrated that MPNNs are Lipschitz-continuous with respect to this distance. This insight enabled the derivation of generalization bounds for MPNNs in the context of arbitrary graph-signal distributions. However, these bounds exhibit a slow convergence rate of $O(1/\log\log(\sqrt{m}))$, where $m$ is the number of training graphs. This slow rate arises from the generality of their assumptions, which accommodate highly flexible graph-signal distributions.

The connection between GNN expressivity and generalization was further explored by \citet{morris2023wl}, who showed that the number of graphs distinguishable by the 1-WL test is directly linked to the VC-dimension of GNNs. This result highlights the role of the Weisfeiler-Lehman hierarchy in understanding both the combinatorial expressivity and theoretical capacity of GNNs. In the restricted setting of linear separability, margin-based bounds have been proposed to partially bridge theory and practice \citep{franksweisfeiler}, yet our broader understanding of how expressivity influences generalization remains incomplete.

\citet{li2024bridginggeneralizationexpressivitygraph} provide a novel perspective on the generalization behavior of graph neural networks by decoupling the representation learning component from the classification step. Their framework considers fixed graph encoders—such as MPNNs, $k$-WL, or homomorphism-based models—which map graphs into an embedding space. A separate, typically parametric, classifier (e.g., a softmax-based MLP) is then applied to these embeddings. This setting allows for a focused study of the generalization ability of classifiers conditioned on precomputed graph representations, shifting attention away from the learning dynamics of the GNN and toward the geometry and concentration properties of the induced embedding distributions.

Their analysis formalizes how generalization depends on two key factors: intra-class concentration, i.e., how tightly embeddings from the same class cluster, and inter-class separation, i.e., how well embeddings of different classes are separated. These are quantified using the 1-Wasserstein distance between class-conditional embedding distributions. The main theoretical result establishes a generalization bound on the classifier’s margin loss, showing it can be upper-bounded in terms of these geometric quantities. Importantly, the bound incorporates the expressivity of the graph encoder through a Lipschitz constant that quantifies how much the embedding distribution of a more expressive encoder (bounding in distinguishing power) can distort the geometry of a less expressive one (that is used for calculating the graph embeddings). This captures how changes in expressivity influence intra-class concentration and inter-class separation, and thus directly affect generalization.

Although the results elegantly characterize the trade-off between expressivity and generalization, a central limitation is that the graph encoders are \emph{fixed}. They are fixed feature extractors, often derived from CRAs or fixed GNN architectures. As such, the framework does not directly reflect the behavior of trainable GNNs in practical deep learning pipelines, where representation learning and classifier fitting are tightly coupled. Additionally, in contrast to our correlation-based analysis, the generalization bounds in \citet{li2024bridginggeneralizationexpressivitygraph} require the existence of a fixed positive margin, which is a strong assumption that may not hold in practice. Our framework is more general, as it does not rely on margin separability and can quantify generalization even in the presence of label noise or overlapping classes. Nonetheless, the insights are valuable: they reveal conditions under which more expressive encoders can improve generalization—specifically, when expressivity increases intra-class concentration without excessively harming inter-class separation. In this way, the paper offers a principled theoretical basis for interpreting empirical phenomena observed in GNN performance across datasets and model classes.

Related to our work, \citet{maskey2022generalization, maskey2024generalization} analyzed scenarios where graph labels are correlated with random graph models, based on graphons. They demonstrated that MPNNs generalize better as the size of the sampled graphs increases, since the statistical properties of larger graphs more closely approximate those of the underlying random graph models. 

The approach by \citet{maskey2022generalization, maskey2024generalization} is limited because it assumes labels are linked to random graph models, where many specific assumptions are made about the underlying graphon governing the data. These assumptions may not hold in practical scenarios, making their results less general and potentially less applicable to real-world tasks. In contrast, our framework accommodates arbitrary correlations between graph labels and structural features, as long as they can be described by a Lipschitz-continuous distribution. This broader scope makes our method suitable for analyzing a wide variety of graph datasets, including those where the graph generation process is not well understood or where random graph model assumptions are too restrictive.

The work most closely related to ours is \citep{ma2021subgroup}, which studies generalization in a semi-supervised node classification setting. Their analysis considers a scenario where node labels are correlated with features derived from the node's local neighborhood and its attributes. Using a PAC-Bayes approach that heavily inspired our work, they show that generalization improves when the extracted features are similar between the training and test sets. However, their framework is limited by the assumption of a fixed, non-learnable graph encoder, and their results do not generalize to multi-graph settings or models with learnable parameters.

In contrast, our approach provides a general framework to analyze GNN generalization in settings where graph labels are correlated with structural features. This is achieved by introducing a pseudometric, such as the Tree Mover’s Distance, which captures structural differences between graphs. Unlike prior works, our framework does not rely on restrictive assumptions about the underlying graph distribution. Instead, we assume only that the labels are generated by a Lipschitz-continuous probability distribution with respect to the pseudometric. This allows us to analyze a broad range of tasks where the graph structure plays a critical role in determining labels. By explicitly connecting generalization bounds to structural alignment between training and test graphs, our framework offers a flexible and robust method to study GNN performance in diverse applications.

Our framework overcomes these limitations by supporting learnable GNN architectures and extending naturally to multi-graph settings. This allows for a broader analysis of generalization, including GNNs beyond 1-WL expressivity.
Moreover, our approach bridges the gap between theory and practice by providing insights into how the interplay between model capacity, structural similarity, and feature alignment affects performance.

\section{Simulatable Color Refine Algorithms}
\label{app:simulatable_CRA}
It is possible to represent many different GNNs as MPNNs without loss in expressivity. For this, \citet{jogl2024expressivity} developed the concept of \emph{simulation}. Intuitively, a GNN with $t > 0$ layers can be simulated if we can map its input domain to the set of graphs and achieve the same expressivity with a $t$-layer MPNN on this adapted set of graphs. To generalize to different types of GNNs, one represents GNNs $\phi$ as color refinement functions that iteratively refine a coloring on some relational structure $X$.

\begin{definition}
    \label{def:simulatable_cr_alg} 

    Let $\phi$ be a color update function. Let $R$ be a mapping from the domain of $\phi$ to the set of graphs.\footnote{\citet{jogl2024expressivity} introduces some additional restrictions on $R$ that we omit for the sake of simplicity.} We consider two arbitrary relational structures from the domain of $\phi$, say $X$ and  $X'$. We say $\phi$ can be \emph{strongly simulated} under $R$ if for every $t \geq 0$ it holds that $\text{WL}^{t}(R(X)) = \text{WL}^{t}(R(X'))$ implies that $\phi^t(X) = \phi^t(X')$.
\end{definition}

As an example, consider $3$-WL from Section~\ref{sec:example_k_gnn}. It can be seen as a color update function $c$ that operates on a set of labeled 3-tuples, i.e.\ in iteration $t > 0$ it refines the coloring of tuple $(v_1, v_2, v_3) \in V^3$ by utilizing the coloring $c^{(t-1)}$:

$$c_{(v_1, v_2, v_3)}^t = \text{HASH} \left( c_{(v_1, v_2, v_3)}^{t-1}, \left( C_1^t((v_1, v_2, v_3)), \ldots, C_3^t((v_1, v_2, v_3)) \right) \right)$$ where 
$$C_1^t ((v_1, v_2, v_3)) = \text{HASH} \left( \{\!\{c_{(w, v_2, v_3)}^{t-1} \mid w \in V\}\!\} \right),$$
$$C_2^t ((v_1, v_2, v_3)) = \text{HASH} \left( \{\!\{c_{(v_1, w, v_3)}^{t-1} \mid w \in V\}\!\} \right),$$
$$C_3^t ((v_1, v_2, v_3)) = \text{HASH} \left( \{\!\{c_{(v_1, v_2, w)}^{t-1} \mid w \in V\}\!\} \right).$$
Instead of using 3-WL, we can create the graph $G^{\otimes3}$ with vertices $(v_1, v_2, v_3) \in V^3$ and three types of edges 
$$E_1 = \cup_{v_1, v_2, v_3 \in V} \{ \{ (v_1, v_2, v_3), (w, v_2, v_3)  \} \mid w \in V\},$$
$$E_2 = \cup_{v_1, v_2, v_3 \in V} \{ \{ (v_1, v_2, v_3), (v_1, w, v_3)  \} \mid w \in V\},$$
$$E_3 = \cup_{v_1, v_2, v_3 \in V} \{ \{ (v_1, v_2, v_3), (v_1, v_2, w)  \} \mid w \in V\}.$$

Observe, that for a tuple $(v_1, v_2, v_3)$ the neighborhood $E_j$ contains exactly those tuples as aggregated in the definition of $C_j$. We can merge these three edge sets $E_1, E_2, E_3$ into a single edge set $E$ by using edge features that encode from which of the three sets the edge originates. Such a transformation $R$ it allows for the strong simulation of 3-WL.

\subsection{Other Strongly‑Simulatable Architectures}
\label{app:strong_sim}

A GNN/WL variant is \emph{strongly simulated} when there exists a
structure‑to‑graph encoding~$R$ such that applying $R$ to the input and
running a depth–$t$ \textbf{MPNN} reproduces—\emph{layer by
layer}—the colours (or hidden states) produced by the original depth–$t$
model (\Cref{def:simulatable_cr_alg}).  \Cref{tab:strong_sim_gnns}
summarises some architectures that admit such a simulation and
sketches the key transformation behind~$R$; full proofs are in
\citep{jogl2024expressivity}.

\begin{table}[t!]
  \scriptsize
  \setlength{\tabcolsep}{6pt}
  \centering
  \begin{tabular}{@{}p{3.5cm}p{9cm}@{}}
    \toprule
    \textbf{Model / Algorithm} & \textbf{Graph transformation $R(G)$}\\
    \midrule
    VVC-GNN \citep{sato2019approximation}
      & For a given port ordering, write on each edge direction the port of the source and target node.\\[6pt]

    $k$-WL / $k$-GNN \citep{morrisWeisfeilerLemanGo2019}
      & Compute all $k$-tuples of nodes. Create a node for each $k$-tuple and encode the isomorphism class of each tuple as a node feature. Connect tuples differing in one position encoding this postion as an edge feature.\\[6pt]

    $\delta$-$k$-(L)WL / GNN \citep{morris2020weisfeiler}
      & Similar as the $k$-tuple graph above: when connecting tuples add a local/global flag encoding whether the nodes that differ in the tuple form an edge in the original graph. For $\delta$-$k$-LWL, only connect tuples when these nodes do form an edge.\\[6pt]

    $(k,s)$-LWL / SpeqNet \citep{morris2022speqnets}
      & Similar as $\delta$-$k$-WL above: keep only tuple-vertices whose induced subgraph has $\le s$ components.\\[6pt]

    GSN-e / GSN-v \citep{Bouritsas_2023}
      & For pre-computed subgraph pattern counts, augment original node and edge features with these counts.\\[6pt]

    DS-WL / DS-GNN \citep{bevilacquaEquivariantSubgraphAggregation2021}
      & For a given policy $\pi$ to generate subgraphs, create a graph by taking the disjoint union of all extracted subgraphs $\pi(G)$. \\[6pt]

    $k$-OSWL / OSAN \citep{qian2022ordered}
      & For each $k$ tuple of vertices in the graph (``$k$-ordered subgraphs''), create a copy of all vertices in the original graph and use node features to encode the atomic type of that node in this ordered subgraph. In each subgraph, either link all vertices or only neighbors in the original graph.\\[6pt]

    Mk-GNN \citep{papp2022theoretical}
      & For a given set of marked nodes, on each edge encode whether the target node is marked or unmarked.\\[6pt]

    GMP \citep{wijesingheNEWPERSPECTIVEHOW2022}
      & Attach structural coefficients as node or edge features.\\[6pt]

    Shortest-Path Nets \citep{abboudShortestPathNetworks2022}
      & For $i \in 1, \ldots, k$ add an edge between every pair of nodes with shortest path distance $i$ and encode $i$ as a feature on that edge.\\[6pt]

    Generalised-Distance WL \citep{zhang2023rethinking}
      & For a given distance metric $d$ and graph $G$, add an edge between every pair of nodes $u, v$ and encode the metric $d_G(u,v)$ as a feature on this edge.\\
    \bottomrule
    \\
  \end{tabular}
  \caption{GNN families whose per‐layer updates can be \emph{exactly} reproduced by a 1-WL–equivalent MPNN on the transformed graph $R(G)$.}
  \label{tab:strong_sim_gnns}
\end{table}

\section{Tree Mover's Distance}
\label{app:tmd}
We summarize some important results from \cite{chuang2022tree} and \cite{davidson2024holderstabilitymultisetgraph}. We first start with the definition of the tree mover's distance which provides us with a tool to compare two graphs based on their computational trees quantitatively.

\subsection{Tree Mover's Distance}
The definition and notations in this section largely follow   \citep{chuang2022tree}.

\begin{definition}
    Let $G=(V,E)$ be a graph. We define the \emph{depth-$T$ computational tree}  $T_v^T$ of node $v$ recursively by connecting the neighbors of the leaf nodes of $T_v^{T-1}$ to the tree. We set $T_v^1\coloneq v$ as the single node tree without any edges. The multiset of depth-$T$ computation trees defined by $G$ is denoted by $\mathcal{T}_G^T \coloneqq \{\{T_v^T\}\}_{v \in V}$.  Additionally, for a tree $T$ with root $r$, we denote by $T_r$ the multiset of subtrees that root at the descendants of $r$.
\end{definition}

In other words, the depth-$T$ computational tree $T_v^T$ of node $v$ is the 1-WL computational tree of node v after $T-1$ iterations. To compare two multisets of computational tree we need to \emph{augments trees}.

\begin{definition}
A \emph{blank tree $T_{\emptyset}$} is defined as a tree graph that contains a single node and no edge, where the node feature is the zero vector $\mathbf{0}_p \in \mathbb{R}^p$. We define $T_{\emptyset}^n$ as the multiset of $n$ blank trees.
\end{definition}

\begin{definition}
Let $\mathcal{T}_v, \mathcal{T}_u$ be two multisets of trees. We  define $\rho$ as function that augments a pair of trees with blank trees as follows:
\begin{equation}
\rho : (\mathcal{T}_v, \mathcal{T}_u) \mapsto \left(\mathcal{T}_v \cup T_{\emptyset}^{\max(|\mathcal{T}_u| - |\mathcal{T}_v|, 0)}, \mathcal{T}_u \cup T_{\emptyset}^{\max(|\mathcal{T}_v| - |\mathcal{T}_u|, 0)}\right).  
\end{equation}
\end{definition}

\begin{definition}
    Let $w : \mathbb{N} \to \mathbb{R}^+$ be a depth-dependent weighting function.
    For two trees $T_a, T_b$, we define the tree distance $\mathrm{TD}_w(T_a, T_b)$ between $T_a$ and $T_b$ recursively as
    \begin{equation}
    \mathrm{TD}_w(T_a, T_b) := \begin{cases}
    \| x_{a} - x_{b} \| + w(T) \cdot \text{OT}_{\mathrm{TD}_w} (\rho(\mathcal{T}_{a}, \mathcal{T}_{b})) & \text{if } T > 1 \\
    \| x_{a} - x_{b} \| & \text{otherwise,}
    \end{cases}
    \end{equation}
    where $T = \max(\text{Depth}(T_a), \text{Depth}(T_b))$ and 
\end{definition}

We note that the optimal transport OT with respect to some metric $d$ between two multisets $\mathbf{x} = \{\{x_1,\ldots,x_n\}\}$ and $\mathbf{y} = \{\{y_1,\ldots,y_n\}\}$ of the same size $n$ is defined via
\begin{equation}
   \text{OT}_{d}\left( \mathbf{x}, \mathbf{y} \right) = \min_{\sigma} \sum_{i} d\left(x_i,y_{\sigma(i)}\right). 
\end{equation}

\begin{definition}
    Let $G, H \in \mathcal{G}$, $w:\mathbb{N} \to \mathbb{R}^+$, and $T \geq 0$. The \emph{tree mover's distance} between $G$ and $H$ is defined as
\begin{equation}
\text{TMD}_w^T(G, H) = \text{OT}_{\mathrm{TD}_w} \left( \rho(\mathcal{T}_{G}^T, \mathcal{T}_{H}^T) \right),
\end{equation}
where $\mathcal{T}_{G}^T$ and $\mathcal{T}_{H}^T$ are multisets of the depth-$T$ computation trees of graphs $G$ and $H$, respectively.
\end{definition}

We note that, for simplicity, we omit the weighting function   $w$ the corresponding subscript in the definition of $\text{TMD}_w^T$ and simply write $\text{TMD}_w^T$ in the main part of this manuscript.

\subsection{Proofs in \texorpdfstring{\Cref{sec:gen_tmd}}{Section 3}}
\label{sec:proof_gen_tmd}

\begin{proof}[Proof of \Cref{thm:augmented_TMD_is_pmetric}]
    By \citep[Theorem 6]{chuang2022tree}, $\text{TMD}^t_w$ is a pseudometric. It is easy to see that $\zeta\text{-TMD}^t_w$ is also a pseudometric. For example, given $G,H \in \mathcal{G}$, we have
    \begin{equation}
        \begin{aligned}
            \zeta\text{-TMD}(G, H) & = \text{TMD}(R^\zeta(G), R^\zeta(H)) \\
            & = \text{TMD}(R^\zeta(H), R^\zeta(G)) \\
            & = \zeta\text{-TMD}(H, G),
        \end{aligned}
    \end{equation}
    showing the symmetry of $\zeta\text{-TMD}_w^t$. \end{proof}

\begin{proof}[Proof of \Cref{thm:augmented_TMD_more_expressive}]
    By \citep[Theorem 7]{chuang2022tree}, if two graphs $G'$ and $H'$ are determined to be non-isomorphic in WL iteration $T$ and $w(t) > 0$ for all $0 < t \leq T+1$, then $\text{TMD}^{T+1}_w(G', H') > 0$.

    If $\zeta$ distinguishes $G$ and $H$ after $T$ iterations, then WL determines $R^\zeta(G)$ and $R^\zeta(H)$ to be non-isomorphic, i.e., $\text{TMD}^{T+1}_w(R^{\zeta}(G), R^{\zeta}(H)) > 0$. Then, 
    \begin{equation*}
        \zeta\text{-TMD}^{T+1}_w(G, H) > 0.
    \end{equation*}
\end{proof}

We reformulate and prove  a more general version of \Cref{{thm:zeta-GNN_Lip}}, where we consider general message functions on multisets that are Lipschitz continuous on the multiset domain, and update function that are Lipschitz on the standard Euclidean latent space. This follows the approach in \citep{davidson2024holderstabilitymultisetgraph}, where a similar version was proved in Theorem F.2. in their manuscript. However, in contrast, we calculate the explicit constant of the Lipschitz constant which we will need for later proofs.

\Cref{thm:zeta-GNN_Lip} then follows as a corollary as permutations-invariant aggregation functions composed of a MLP followed by sum aggregation are Lipschitz continuous on multisets.

\begin{lemma}
\label{lemma:thm4.4}
 Consider a MPNN of the form 
 \begin{equation}
     x_v^{(t)} = f^{(t)}\left(
     x_v^{(t-1)}, g^{(t)} \left( \{\{  x_u^{(t-1)}, \}\}_{u \in \mathcal{N}(v)} \right) 
     \right)
 \end{equation}
 with message and update functions $ \left(g^{(t)}, f^{(t)}\right)_{t=1}^T $. Consider also a global readout function  $ e $ of the form
  \[
h(G) = c \left(\{\{ x_v^{(T)}\}\}_{v \in V(G)}  \right).
\]
Suppose that for all $ t = 1, \ldots, T $, the message and update functions are Lipschitz continuous with Lipschitz constants bounded by $ L_{g^{(t)}} $ and $ L_{f^{(t)}} $, respectively. Suppose that the readout function is Lipschitz continuous with Lipschitz constants bounded by $L_c$. Then, for all layers $ t = 0, 1, \ldots, T $ and for all pairs of graphs $ G, H \in \mathcal{G} $ and all pairs of nodes $ u \in V(G) $ and $ v \in V(H) $, we have
\begin{equation}
\label{eq1:lemma:thm4.4}
     \left\| x_u^{(t)} - x_v^{(t)} \right\| \leq \left( \prod_{t'=1}^{t}L_{g^{(t')}} L_{f^{(t')}} \right) 2^{t} \text{TD}\left( T_u^{(t+1)},T_v^{(t+1)} \right).
\end{equation}
For the global output, we have 
\begin{equation}
     \left\| h(G) - h(H) \right\| \leq L_c\left( \prod_{t=1}^{T}L_{g^{(t)}} L_{f^{(t)}} \right) 2^{T} \text{TMD}^{T+1}\left( G, H \right).
\end{equation}
\end{lemma}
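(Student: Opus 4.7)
The plan is to prove the node-level inequality~\eqref{eq1:lemma:thm4.4} by induction on the layer index $t$ and then deduce the graph-level bound by one further application of the Lipschitz readout. The base case $t = 0$ is immediate: the features are the inputs, $T_u^{(1)}$ is the single-node tree carrying $x_u$, so $\text{TD}(T_u^{(1)}, T_v^{(1)}) = \|x_u - x_v\|$, and the bound holds with empty product and $2^0 = 1$.

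For the inductive step, assume~\eqref{eq1:lemma:thm4.4} at layer $t-1$. Unfolding the update rule and using the Lipschitz property of $f^{(t)}$ with respect to the sum-of-norms on its two-component input,
\[
\|x_u^{(t)} - x_v^{(t)}\| \;\leq\; L_{f^{(t)}} \bigl( \|x_u^{(t-1)} - x_v^{(t-1)}\| + \|g^{(t)}(M_u) - g^{(t)}(M_v)\| \bigr),
\]
with $M_u \coloneqq \{\!\{ x_w^{(t-1)} \}\!\}_{w \in \mathcal{N}(u)}$ and $M_v$ analogous. The first summand is controlled directly by the inductive hypothesis combined with the monotonicity $\text{TD}(T_u^{(t)}, T_v^{(t)}) \leq \text{TD}(T_u^{(t+1)}, T_v^{(t+1)})$, which is immediate from the recursive definition of TD. For the second summand, the Lipschitz property of $g^{(t)}$ on blank-augmented multisets of Euclidean vectors gives $\|g^{(t)}(M_u) - g^{(t)}(M_v)\| \leq L_{g^{(t)}}\,\text{OT}_{\|\cdot\|}(\rho(M_u, M_v))$. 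The core step is then to \emph{lift} this Euclidean OT on features to a TD-OT on depth-$t$ subtrees: applying the inductive hypothesis to each matched feature pair produces
\[
\text{OT}_{\|\cdot\|}(\rho(M_u, M_v)) \;\leq\; \Bigl(\prod_{t'=1}^{t-1} L_{g^{(t')}} L_{f^{(t')}}\Bigr) 2^{t-1}\, \text{OT}_{\text{TD}}\bigl(\rho(\mathcal{T}_u^{(t+1)}, \mathcal{T}_v^{(t+1)})\bigr),
\]
where matched pairs against the zero vectors introduced by $\rho$ are handled by the same estimate against the blank subtree, whose features propagate as zeros across layers. Combining the two summands with $\|x_u - x_v\| + \text{OT}_{\text{TD}}(\rho(\mathcal{T}_u^{(t+1)}, \mathcal{T}_v^{(t+1)})) = \text{TD}(T_u^{(t+1)}, T_v^{(t+1)})$ leaves a factor $(1 + L_{g^{(t)}})$, which I bound by $2 L_{g^{(t)}}$ (harmless under the standard convention that the Lipschitz constant of sum-aggregated message functions is at least one). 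This produces the missing factor $2$ and closes the induction.

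The graph-level bound follows from one further application of the Lipschitz readout $c$:
\[
\|h(G) - h(H)\| \;\leq\; L_c\, \text{OT}_{\|\cdot\|}\bigl(\rho(\{\!\{x_v^{(T)}\}\!\}_{V(G)}, \{\!\{x_v^{(T)}\}\!\}_{V(H)})\bigr),
\]
into which the $t = T$ estimate is substituted pairwise, yielding $L_c \bigl(\prod_{t=1}^T L_{g^{(t)}} L_{f^{(t)}}\bigr) 2^T\, \text{TMD}^{T+1}(G, H)$ by the definition of TMD. The main obstacle is the lifting of a Euclidean OT on features to a TD-OT on subtrees in the presence of blank-tree augmentation; I would treat it cleanly by carrying along the parallel induction $\|x_w^{(t-1)}\| \leq \bigl(\prod_{t'=1}^{t-1} L_{g^{(t')}} L_{f^{(t')}}\bigr) 2^{t-1}\, \text{TD}(T_w^{(t)}, T_\emptyset)$, so that matched pairs involving the zero vector are handled on exactly the same footing as genuine node-to-node pairs.
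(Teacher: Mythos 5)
Your proposal is correct and follows essentially the same argument as the paper: the same induction on layer depth, the same decomposition of the $f^{(t)}$-Lipschitz bound into a carry-over term and a message term, the same move of bounding the Euclidean OT from above by evaluating the TD-optimal matching and applying the inductive hypothesis termwise, and the same absorption of $(1 + L_{g^{(t)}})$ into $2L_{g^{(t)}}$ under the WLOG normalization $L_{g^{(t)}} \geq 1$. The paper writes this out without isolating the blank-tree augmentation as a separate concern, but your parallel-induction fix $\|x_w^{(t-1)}\| \leq \left(\prod_{t'<t} L_{g^{(t')}} L_{f^{(t')}}\right) 2^{t-1}\,\mathrm{TD}\left(T_w^{(t)}, T_\emptyset\right)$ is exactly the technical point that makes the termwise estimate valid on padded pairs, so your treatment is if anything slightly more careful.
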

 \begin{proof}
 Without loss of generality, suppose that $L_{f^{(t)}}, L_{g^{(t)}} \geq 1$. 
We show \Cref{{eq1:lemma:thm4.4}} by induction. For $t=0$ \Cref{{eq1:lemma:thm4.4}} holds trivially.

\paragraph{Step 1: Bound the difference between message function outputs.}
    \begin{equation}
    \begin{aligned}
        & \left\| g^{(t)}\left( \{\{x^{(t-1)}_s\}\}_{s \in \mathcal{N}(u)} \right) - g^{(t)}\left( \{\{x^{(t-1)}_{s'}\}\}_{s' \in \mathcal{N}(u)} \right)  \right\| \\
        & \leq L_{g^{(t)}} \text{WD}_{\|\cdot\|}\left(\{\{x^{(t-1)}_s\}\}_{s \in \mathcal{N}(v)},\{\{x^{(t-1)}_{s'}\}\}_{s' \in \mathcal{N}(u)}\right) \\
        & = L_{g^{(t)}} \min_{\tau \in S_n} \sum_{u} \left\| x_{s}^{(t-1)} -x_{\tau(s)}^{(t-1)} \right\| \\
        & \leq L_{g^{(t)}}  \sum_{s} \left\| x_{s}^{(t-1)} -x_{\tau^*(s)}^{(t-1)} \right\| \\
        & = L_{g^{(t)}} \left( \prod_{t'=1}^{t-1}L_{g^{(t')}} L_{f^{(t')}} \right) 2^{t-1} \sum_{s} \text{TD}\left(
            T_{s}^{(t)}, T_{\tau^*(s)}^{(t)}
        \right) \\
        & = L_{g^{(t)}} \left( \prod_{t'=1}^{t-1}L_{g^{(t')}} L_{f^{(t')}} \right) 2^{t-1} \text{WD}_{\text{TD}}\left(  \{\{T_{s}^{(t)}\}\}_{s \in \mathcal{N}(v)}, \{\{T_{s'}^{(t)}\}\}_{s' \in \mathcal{N}(u)} \right),
    \end{aligned}
    \end{equation}
    where $\tau^*$ is the optimal permutation in the definition of $\text{WD}_{\text{TD}}\left(  \{\{T_{s}^{(t)}\}\}_{s \in \mathcal{N}(v)}, \{\{T_{s'}^{(t)}\}\}_{s' \in \mathcal{N}(u)} \right)$. 

\paragraph{Step 2: Bound the difference between update function outputs.}
\begin{equation}
    \begin{aligned}
        & \left\| x_u^{(t)} - x_v^{(t)} \right\| \\
        & = \left\| f^{(t)} \left(x_s^{(t-1)}, g^{(t)}\left( \{\{x^{(t-1)}_s\}\}_{s \in \mathcal{N}(u)} \right)\right) - f^{(t)} \left(x_v^{(t-1)}, g^{(t)}\left( \{\{x^{(t-1)}_{s'}\}\}_{s' \in \mathcal{N}(v)} \right)\right)  \right\|  \\
        & \leq L_{f^{(t)}} \left(\left\| x_u^{(t-1)} - x_v^{(t-1)}\right\| + \left\| g^{(t)}\left( \{\{x^{(t-1)}_s\}\}_{s \in \mathcal{N}(u)} \right) - g^{(t)}\left( \{\{x^{(t-1)}_{s'}\}\}_{s' \in \mathcal{N}(v)} \right)  \right\|\right) \\
        & \leq L_{f^{(t)}} \left(\left\| x_u^{(t-1)} - x_v^{(t-1)}\right\| + L_{g^{(t)}} \left( \prod_{t'=1}^{t-1}L_{g^{(t')}} L_{f^{(t')}} \right) 2^{t-1} \text{WD}_{\text{TD}}\left(\{\{T_{s}^{(t)}\}\}_{s \in \mathcal{N}(v)}, \{\{T_{s'}^{(t)}\}\}_{s' \in \mathcal{N}(u)}  \right)\right) \\
        & \leq L_{f^{(t)}} \Bigg(\left( \prod_{t'=1}^{t-1}L_{g^{(t')}} L_{f^{(t')}} \right) 2^{t-1} \text{TD}\left( T_{u}^{(t)}, T_{v}^{(t)} \right) \\
        & + L_{g^{(t)}} \left( \prod_{t'=1}^{t-1}L_{g^{(t')}} L_{f^{(t')}} \right) 2^{t-1} \text{WD}_{\text{TD}}\left(\{\{T_{s}^{(t)}\}\}_{s \in \mathcal{N}(v)}, \{\{T_{s'}^{(t)}\}\}_{s' \in \mathcal{N}(u)}  \right)\Bigg) \\
        & \leq \left( \prod_{t'=1}^{t}L_{g^{(t')}} L_{f^{(t')}} \right) 2^{t}\text{TD}\left(  T_{v}^{(t+1)}, T_{u}^{(t+1)} \right).
    \end{aligned}
\end{equation}

This finishes the proof of the first claim.

\paragraph{Step 3: Bound the different between global outputs.}
The second claim follows by calculating,
\begin{equation}
    \begin{aligned}
        \left\| h(G) - h(H) \right\| & = \left\|  c\left(\{\{x^{(T)}_u\}\}_{u \in V(G)}\right) - c\left(\{\{x^{(T)}_v\}\}_{v \in V(G)} \right) \right\| \\
        & \leq L_c \text{WD}_{\|\cdot\|}( \{\{x^{(T)}_u\}\}_{u \in V(G)}, \{\{x^{(T)}_v\}\}_{v \in V(G)} ) \\
        & =  L_c \min_\tau \sum_s \|x_{s}^{(T)}-x_{\tau(s)}^{(T)}\| \\
        & \leq L_c \left( \prod_{t'=1}^{T}L_{g^{(t')}} L_{f^{(t')}} \right) 2^{T} \sum_s \text{TD}\left(  T_{s}^{(T+1)}, T_{\tau^*(s)}^{(T+1)} \right) \\
        & = L_c \left( \prod_{t'=1}^{T}L_{g^{(t')}} L_{f^{(t')}} \right) 2^{T}  \text{TMD}^{T+1}\left( G, H \right),
    \end{aligned}
\end{equation}
where $\tau^*$ is the optimal permutation in the definition of $\text{TMD}^{T+1}\left( G, H \right)$.
 \end{proof}   

We now consider the case where Lipschitz continuous functions on multisets are implemented via sum aggregation, followed by Lipschitz continuous functions on the Euclidean domain. This includes scenarios where the message, update, and readout functions are MLPs with Lipschitz continuous activation functions, such as ReLU.

 \begin{corollary}
\label{lemma:thm4.4:2}
 Consider a MPNN of the form 
 \[
x_v^{(t+1)} = f^{(t+1)} \left( x_v^{(t)}, \sum_{ u \in \mathcal{N}(v) } g^{(t+1)} \left(x_u^{(t)} \right) \right),
\]
 with message and update functions $ \left(g^{(t)}, f^{(t)}\right)_{t=1}^T $. Consider also a global readout function  $ e $ of the form
 \[
h(G) = e \left(\sum_{ v \in V(G) } x_v^{(T)} \right).
\]
Suppose that for all $ t = 1, \ldots, T $, the message and update functions are Lipschitz continuous with Lipschitz constants bounded by $ L_{g^{(t)}} $ and $ L_{f^{(t)}} $, respectively. Suppose that the readout function is Lipschitz continuous with Lipschitz constants bounded by $L_c$. Then, for all layers $ t = 0, 1, \ldots, T $ and for all pairs of graphs $ G, H \in \mathcal{G} $ and all pairs of nodes $ u \in V(G) $ and $ v \in V(H) $, we have
\begin{equation}
\label{eq1:lemma:thm4.4:2}
     \left\| x_u^{(t)} - x_v^{(t)} \right\| \leq \left( \prod_{t'=1}^{t}L_{g^{(t')}} L_{f^{(t')}} \right) 2^{t} \text{TD}\left( T_u^{(t+1)},T_v^{(t+1)} \right).
\end{equation}
For the global output, we have 
\begin{equation*}
     \left\| h(G) - h(G) \right\| \leq L_c\left( \prod_{t=1}^{T}L_{g^{(t)}} L_{f^{(t)}} \right) 2^{T} \text{TMD}^{T+1}\left( G, H \right).
\end{equation*}
\end{corollary}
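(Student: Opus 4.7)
The plan is to derive this corollary as an immediate specialization of \Cref{lemma:thm4.4}. The key observation is that an MPNN whose message and readout are implemented via elementwise application of a Lipschitz map followed by sum aggregation is a particular instance of the multiset-valued MPNNs already covered by that lemma, with Lipschitz constants inherited directly from the underlying elementwise maps.

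To make the reduction explicit, I would define the multiset-valued message function $\tilde g^{(t)}(M) \coloneqq \sum_{x \in M} g^{(t)}(x)$ and the multiset-valued readout $\tilde c(M) \coloneqq c\!\left(\sum_{x \in M} x\right)$, and verify that each is Lipschitz with respect to the multiset Wasserstein distance $W_{\|\cdot\|}$ used inside the $\mathrm{TMD}$. For two multisets $M_1 = \{x_i\}$, $M_2 = \{y_i\}$ of equal size (after padding with blank trees, whose feature vector is zero) and a permutation $\sigma$ realising the Wasserstein coupling, the triangle inequality together with the Lipschitz property of $g^{(t)}$ gives
\begin{equation*}
\bigl\|\tilde g^{(t)}(M_1) - \tilde g^{(t)}(M_2)\bigr\| \leq \sum_i \|g^{(t)}(x_i) - g^{(t)}(y_{\sigma(i)})\| \leq L_{g^{(t)}} \sum_i \|x_i - y_{\sigma(i)}\| = L_{g^{(t)}}\, W_{\|\cdot\|}(M_1, M_2),
\end{equation*}
so $\tilde g^{(t)}$ is $L_{g^{(t)}}$-Lipschitz in the multiset sense. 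The identical argument with the identity map in place of $g^{(t)}$ shows that sum aggregation is itself $1$-Lipschitz as a function of a multiset, so $\tilde c$ inherits the Lipschitz constant $L_c$ from $c$.

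With these two reductions in hand, the MPNN in the corollary satisfies verbatim the hypotheses of \Cref{lemma:thm4.4}, and both claimed inequalities follow directly by invoking that lemma with the identified constants $L_{g^{(t)}}$, $L_{f^{(t)}}$, and $L_c$. The only subtlety I anticipate is the treatment of padding by blank trees: the Wasserstein distance inside the $\mathrm{TMD}$ compares multisets of equal size by appending zero-feature blanks, whereas the MPNN's sum in the corollary is taken only over actual neighbours. When $g^{(t)}(0)=0$ and $c(0)=0$ the padded and unpadded sums coincide and the reduction is immediate; otherwise one centres $g^{(t)}$ by replacing it with $g^{(t)}(\cdot)-g^{(t)}(0)$ and absorbs the constant shift into the bias of $f^{(t)}$, which leaves both the node updates and all Lipschitz constants unchanged. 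This small bookkeeping around padding is the only genuinely nontrivial part of the argument.
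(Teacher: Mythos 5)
Your reduction is the same one the paper uses: its entire proof is the one-line observation that for an elementwise $L_f$-Lipschitz map $f$, the induced multiset map $M \mapsto \sum_{x\in M} f(x)$ is $L_f$-Lipschitz on multisets, after which \Cref{lemma:thm4.4} applies verbatim. Your first two paragraphs spell out exactly that calculation, so the core of the argument is correct and matches the paper.

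Where you go further is in flagging the padding issue, and here your instinct is sharper than the paper's terse justification. The multiset Wasserstein distance in \Cref{lemma:thm4.4} pads the smaller neighbourhood with zero-feature blanks, while $\sum_{u\in\mathcal{N}(v)} g^{(t)}(x_u)$ runs only over real neighbours; reconciling the two does require $g^{(t)}(\mathbf{0})=\mathbf{0}$, and that hypothesis is never stated in the corollary. (In the paper's downstream application this holds because the MLPs are bias-free with homogeneous activations, but the corollary as stated does not say so.) However, your proposed repair for the case $g^{(t)}(\mathbf{0})\neq\mathbf{0}$ does not work. Replacing $g^{(t)}$ by $g^{(t)}(\cdot)-g^{(t)}(\mathbf{0})$ shifts the aggregated message by $|\mathcal{N}(v)|\,g^{(t)}(\mathbf{0})$, which depends on the degree of $v$; it is not a constant and therefore cannot be absorbed into a fixed bias of $f^{(t)}$. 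The honest fix is simply to add the hypothesis $g^{(t)}(\mathbf{0})=\mathbf{0}$ (equivalently, restrict to bias-free MLPs with homogeneous activations, which is the regime the rest of the paper works in), after which your reduction goes through exactly as you wrote it.
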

\begin{proof}
    This follows directly from \Cref{lemma:thm4.4}, since for any Lipschitz continuous function $ f $ with Lipschitz constant $ L_f $, the induced multiset function $ \tilde{f} : X \mapsto \sum_{x \in X} f(x) $ remains Lipschitz continuous with a Lipschitz constant bounded by $ L_f $. 
\end{proof}

\begin{theorem}
Let $ \zeta $ be a strongly simulatable CRA, and let $ h : \mathcal{G} \rightarrow \mathbb{R}^K $ be a $ \zeta $-MPNN with $ T $ layers, where the message and update functions are Lipschitz continuous with Lipschitz constants bounded by $ L_{g^{(t)}} $ and $ L_{f^{(t)}} $, respectively. Suppose $h$ includes a global sum pooling layer followed by a Lipschitz continuous classifier $ c $ with Lipschitz constant $ L_c $. Then, for any graphs $ G $ and $ H $,
\begin{equation*}
\|h(G) - h(H)\| \leq L \cdot \zeta\text{-TMD}^{T+1}(G, H),
\end{equation*}
where $ L = L_c 2^T \prod_{t=1}^T L_{f^{(t)}} L_{g^{(t)}} $.
\end{theorem}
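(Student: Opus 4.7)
The plan is a direct reduction to the standard-MPNN Lipschitz bound (the sum-aggregation version of \Cref{lemma:thm4.4} proved just above) via the defining property of strong simulatability. In outline: identify the $\zeta$-MPNN on $G$ with a plain MPNN on the simulating graph $R^\zeta(G)$, apply the standard bound to the pair $(R^\zeta(G), R^\zeta(H))$, and collapse $\text{TMD}^{T+1}(R^\zeta(G), R^\zeta(H))$ into $\zeta\text{-TMD}^{T+1}(G,H)$ by definition.

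First I would set up the identification. By \Cref{def:simulatable_cr_alg}, $T$ iterations of the CRA $\zeta$ on $G$ are reproduced by $T$ iterations of $1$-WL on $R^\zeta(G)$. This lifts from color refinements to parametric message passing: a $\zeta$-MPNN $h$ with message/update pairs $(g^{(t)}, f^{(t)})_{t=1}^{T}$, global sum pooling, and classifier $c$ is realised as a standard MPNN $\tilde{h}$ on $R^\zeta(G)$ using the \emph{same} maps $g^{(t)}, f^{(t)}, c$, and in particular with the \emph{same} Lipschitz constants $L_{g^{(t)}}, L_{f^{(t)}}, L_c$. Hence $h(G) = \tilde{h}(R^\zeta(G))$ and $h(H) = \tilde{h}(R^\zeta(H))$.

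Second, I would invoke the already-proved Lipschitz bound for sum-aggregation MPNNs (the Corollary after \Cref{lemma:thm4.4}) applied to $\tilde{h}$ on the pair $(R^\zeta(G), R^\zeta(H))$, yielding
\[
\bigl\|\tilde{h}(R^\zeta(G)) - \tilde{h}(R^\zeta(H))\bigr\|
\;\le\; L_c\, 2^T \prod_{t=1}^{T} L_{g^{(t)}} L_{f^{(t)}}
\cdot \text{TMD}^{T+1}\bigl(R^\zeta(G), R^\zeta(H)\bigr).
\]
Applying the step-one identification on the left-hand side and substituting $\text{TMD}^{T+1}(R^\zeta(G), R^\zeta(H)) = \zeta\text{-TMD}^{T+1}(G,H)$ on the right-hand side (by definition of $\zeta$-TMD) yields exactly the claim with $L = L_c\, 2^T \prod_{t=1}^{T} L_{f^{(t)}} L_{g^{(t)}}$.

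The only nontrivial point is the identification in step one: strong simulatability is formally a statement about color partitions, so one must check that the specific transformation $R^\zeta$ additionally realises the $\zeta$-MPNN on the original feature space, i.e., the features carried by $R^\zeta(G)$ against which the Lipschitz constants of $g^{(t)}, f^{(t)}$ are measured agree with those used by the $\zeta$-MPNN on $G$. This holds for all standard instantiations catalogued in \Cref{tab:strong_sim_gnns} (feature augmentation, edge-labeled rewiring, $k$-tuple encodings, etc.), and in particular for $\mathcal{F}$-WL invoked in \Cref{cor:FMPNN_Lip}. Once this compatibility is stated, the remainder of the argument is mechanical and the chain of inequalities above delivers the theorem.
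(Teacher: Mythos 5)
Your proof is correct and follows exactly the same route as the paper: let $h'$ be the underlying MPNN of $h$, write $h(G)=h'(R^\zeta(G))$, apply the sum-aggregation Lipschitz bound (\Cref{lemma:thm4.4:2}) to $(R^\zeta(G), R^\zeta(H))$, and rewrite $\text{TMD}^{T+1}(R^\zeta(G),R^\zeta(H))$ as $\zeta\text{-TMD}^{T+1}(G,H)$ by definition. Your closing remark that one must verify the transformed-feature compatibility is a fair caveat that the paper leaves implicit, but it does not change the argument.
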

\begin{proof}
    Let $h'$ be the underlying MPNN of $h$. The proof of theorem follows from \Cref{lemma:thm4.4:2} by
    \begin{align*}
     \left\| h(G) - h(G) \right\| = & \left\| h'\left(R^\zeta(G)\right) - h'\left(R^\zeta(H)\right) \right\| \\
     & 
     \leq L_c\left( \prod_{t=1}^{T}L_{g^{(t)}} L_{f^{(t)}} \right) 2^{T} \text{TMD}^{T+1}\left( R^\zeta(G), R^\zeta(H) \right) \\
     & =  L_c\left( \prod_{t=1}^{T}L_{g^{(t)}} L_{f^{(t)}} \right) 2^{T} \zeta\text{-TMD}^{T+1}\left( G, H \right).
\end{align*}
\end{proof}

\subsection{Example II: \texorpdfstring{$k$}{k}-GNNs}
\label{sec:example_k_gnn}
The \emph{$k$-Weisfeiler-Leman ($k$-WL) test} enhances the expressive power of the 1-WL test by considering interactions between $k$-tuples of nodes. To strongly simulate $k$-WL, a \emph{product graph} $G^{\otimes k}$ is constructed, where each node represents a $k$-tuple of nodes from the original graph $G$, and edges are defined based on the adjacency relationships in $G$.

Similarly, \emph{$k$-MPNNs} operate directly on the product graph $G^{\otimes k}$, achieving the same level of expressivity as the $k$-WL.

\begin{corollary}
\label{cor:kGNN_Lip}
Let $h$ be an k\text{-MPNN} with $T$ layers. Then, there exists a constant $L$ such that for any graphs $G$ and $H$,
\begin{equation}
    \begin{aligned}
& \|h(G) - h(H)\| \leq L \cdot k\text{-TMD}^{T+1}(G, H).
    \end{aligned}
\end{equation}
\end{corollary}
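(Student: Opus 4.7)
The plan is to reduce the claim to a direct instantiation of \Cref{thm:zeta-GNN_Lip}. First, I would justify that $k$-WL is strongly simulatable by 1-WL via the product-graph construction $R^{k\text{-WL}}(G) := G^{\otimes k}$, whose nodes are the $k$-tuples of $V(G)$ with node features encoding the isomorphism type of each induced sub-tuple, and whose edges connect tuples differing in exactly one coordinate (with an edge feature indicating which coordinate). This construction, spelled out for instance in Section~\ref{app:simulatable_CRA} and Table~\ref{tab:strong_sim_gnns}, has the property that $t$ iterations of 1-WL on $R^{k\text{-WL}}(G)$ produce a coloring at least as fine as $t$ iterations of $k$-WL on $G$, which is precisely the requirement of \Cref{def:simulatable_cr_alg}.

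The second step is to observe that a $k$-MPNN is, by definition, a standard MPNN acting on the product graph $R^{k\text{-WL}}(G)$: its messages are aggregated along the tuple-edges and its updates are applied at the tuple-vertices. Hence, writing $\tilde{h}$ for this underlying MPNN, we have $h(G) = \tilde{h}(R^{k\text{-WL}}(G))$ and analogously for $H$. Thus $h$ is a $\zeta$-MPNN in the sense of \Cref{sec:generalization_bounds_final} with $\zeta = k\text{-WL}$.

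The third and final step is to invoke \Cref{thm:zeta-GNN_Lip} directly, under the implicit Lipschitz assumptions on message, update, and readout functions (matching those of the theorem). This yields
\[
\|h(G) - h(H)\| \;\le\; L_c \, 2^T \prod_{t=1}^T L_{f^{(t)}} L_{g^{(t)}} \cdot k\text{-TMD}^{T+1}(G, H),
\]
which is exactly the stated bound with $L = L_c \, 2^T \prod_{t=1}^T L_{f^{(t)}} L_{g^{(t)}}$, since by \Cref{thm:augmented_TMD_is_pmetric} (and the preceding definition) we have $k\text{-TMD}^{T+1}(G,H) = \text{TMD}^{T+1}(R^{k\text{-WL}}(G), R^{k\text{-WL}}(H))$.

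The only mild obstacle is verifying that the auxiliary feature channels introduced by $R^{k\text{-WL}}$ (tuple-isomorphism types and edge-coordinate markers) respect the Lipschitz hypotheses of \Cref{thm:zeta-GNN_Lip}. Because these features are computed deterministically from $G$, take values in a finite input-independent set, and are uniformly bounded, they only affect the implicit constants absorbed into $L_c$, $L_{f^{(t)}}$ and $L_{g^{(t)}}$, without altering the functional form of the bound. Beyond this bookkeeping, the corollary is a direct translation of \Cref{thm:zeta-GNN_Lip} through the $k$-WL strong-simulation pipeline, mirroring the argument used for \Cref{cor:FMPNN_Lip}.
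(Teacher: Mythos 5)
Your proof is correct and takes essentially the same route as the paper: identify $k$-WL as strongly simulatable via the product graph $G^{\otimes k}$, observe that a $k$-MPNN is precisely a standard MPNN run on $R^{k\text{-WL}}(G)$, and then instantiate \Cref{thm:zeta-GNN_Lip} with $\zeta = k$-WL. The paper states this corollary without a separate proof, treating it as an immediate consequence of the strong-simulation framework, and your argument spells out that implicit reasoning accurately (including the harmless bookkeeping about the deterministic, bounded auxiliary features).
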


\section{Graph Classification Setting}
\label{sec:graph_classification_setting}

We consider a classification problem with $K$ classes over a fixed set of training graphs $\mathcal{G}_{\mathrm{tr}}$ and test graphs $\mathcal{G}_{\mathrm{te}}$. Each graph $G$ is equipped with node features $x$, and we assume there exists a constant $D>0$ such that $\|x_i\|\le D$ for every node $i\in V(G)$ and every $G\in \mathcal{G}_{\mathrm{tr}}\cup\mathcal{G}_{\mathrm{te}}$. Let $N_{\mathrm{tr}} = |\mathcal{G}_{\mathrm{tr}}|$ and $N_{\mathrm{te}} = |\mathcal{G}_{\mathrm{te}}|$. Each graph $G$ has a label $y$ sampled from an unknown distribution $p$. Moreover, we assume that graphs closer in a given pseudometric $\pd$ are more likely to share the same label. Concretely, for each class $k\in \{1,\ldots,K\}$, there is a Lipschitz continuous function $\eta_k$ (with respect to some pseudometric $\pd$) such that
\[
\mathrm{Pr}\bigl(y = k \,\big|\, G \bigr) \;=\; \eta_k(G),
\]
and we denote by $C \coloneqq \max_k \mathrm{Lip}(\eta_k)$ the maximum Lipschitz constant among all $\{\eta_k\}$. Let $\xi$ be an upper bound on the distance (with respect to $\pd$) between any training graph in $\mathcal{G}_{\mathrm{tr}}$ and any test graph in $\mathcal{G}_{\mathrm{te}}$.

We now introduce a PAC-Bayes framework to derive a generalization bound for classifiers under this setting. We consider two different setups for learning on graphs:
\begin{enumerate}
\item \emph{$\zeta$-MPNNs:} Let $\zeta$ be a strongly simulatable color-refinement algorithm (see \Cref{def:simulatable_cr_alg}). A $\zeta$-MPNN has a fixed depth $T$ and a final MLP $e$ as the classifier. Each layer $t \in \{1,\dots,T\}$ has message and update functions denoted by $g^{(t)}$ and $f^{(t)}$, each with fixed depths $d(g^{(t)})$ and $d(f^{(t)})$, respectively. The corresponding weight matrices are written as $\{w(g^{(t)},s)\}_{s=1}^{d(g^{(t)})}$ and $\{w(f^{(t)},s)\}_{s=1}^{d(f^{(t)})}$. The final MLP classifier $e$ has parameters $\{w(e,s)\}_{s=1}^{d(e)}$. Define $b$ to be the maximum hidden dimension across these modules, and let $\mathcal{H}_{\zeta}$ be the class of all such $\zeta$-MPNN classifiers.
\item \emph{MLPs on non-learnable graph embeddings:} Here, the final MLP classifier $c$ has the same notations for its parameters as above, and we denote the hypothesis set by $\mathcal{H}_{\mathrm{latent}}$.
\end{enumerate}

We assume that every non-linearities in the MLPs are Lipschitz continuous and homogeneous.

Finally, we adapt the following assumption from \cite{ma2021subgroup}.

\begin{assumption}[Assumption on Concentrated Expected Loss Difference.]
\label{ass:concentrated_exp_loss_diff}
Let $P$ be a distribution over $\mathcal{H}$ obtained by sampling the (vectorized) trainable weight matrices from $\mathcal{N}(0,\sigma^2 I)$, with
\[
\sigma^2
\;\;\le\;
\frac{\bigl(\gamma/(8\xi)\bigr)^2 / D}{2\,b\bigl(\lambda\,N_{\mathrm{tr}}^{-\alpha} + \ln(2\,b\,D)\bigr)}.
\]
For any classifier $h \in \mathcal{H}$ with model parameters $\{w_j\}_j$, define
$T_h
\;:=\;
\max_{j}
\bigl\|w_j\bigr\|_2.$
Assume there exists an $0 < \alpha < \tfrac{1}{4}$ such that
\[
\mathrm{Pr}_{h \sim P}
\Bigl(
\mathcal{L}_{\mathrm{te}}^{\gamma/4}(h)
\;-\;
\mathcal{L}_{\mathrm{tr}}^{\gamma/2}(h)
\;>\;
N_{\mathrm{tr}}^{-\alpha}
\;+\;
c\,K\,\xi
\;\bigm|\;
T_h^D\,\xi
\;>\;
\tfrac{\gamma}{8}
\Bigr)
\;\le\;
e^{-N_{\mathrm{tr}}^{2\alpha}}.
\]
\end{assumption}

This assumption posits that when the model parameters $h$ sampled from $P$ do not exceed a certain norm threshold (i.e., $T_h^D\,\xi \le \tfrac{\gamma}{8}$) and the number of training samples $N_{\mathrm{tr}}$ is sufficiently large, then the expected margin loss on the test set will not exceed that on the training set by more than $N_{\mathrm{tr}}^{-\alpha} + cK\xi$. Informally, once the training set grows large enough (and model weights are not too large), the test performance cannot deviate significantly from the training performance. This property becomes trivially true if all samples in $\mathcal{G}_{\mathrm{tr}}$ and $\mathcal{G}_{\mathrm{te}}$ are i.i.d. since $\mathcal{L}_{\mathrm{te}}^{\gamma/2}-\mathcal{L}_{\mathrm{tr}}^{\gamma/2}\leq 0$ in that case.

\section{Proofs of the Results \texorpdfstring{\Cref{sec:generalization_bounds_final}}{Section 4}}
\label{sec:appendix_proofs}
In this section, we provide detailed proofs of the theoretical results presented in the \Cref{sec:generalization_bounds_final}. 

We note that the proof of \Cref{thm:gin_gen_bound_final} follows the proof of Theorem 3 in \citep{ma2021subgroup}, carefully adapted for graph classification tasks and learnable graph encoders. 

Specifically, the proof leverages the PAC-Bayes framework, properties of the chosen pseudometric, and structural constraints on the hypothesis space. For clarity, we decompose it into a series of steps, each contributing to the final result. 

In \Cref{app:proof_prelim}, we present intermediate results that are independent of the choice of pseudometric and hypothesis space. To emphasize this generality, we denote the pseudometric by $ \pd $ and the hypothesis space by $ \mathcal{H} $. This ensures that the results in \Cref{app:proof_prelim} apply regardless of whether the pseudometric is defined in the graph space or latent space, and whether the classifier is an end-to-end learnable GNN or a deterministic graph encoder followed by a learnable MLP classifier.
This approach allows us to establish in \Cref{app:sec:proof:gin_gen_bound} the generalization bound for end-to-end learnable GNNs, and in the following subsection, the bound for graph classifiers with fixed encoders--both derived from the results in \Cref{app:proof_prelim}.

We note that if $ \pd $ is defined on $ \mathcal{G} $, the corresponding classifier $ h: \mathcal{G} \to \{1,\ldots, K\} $ is assumed to be Lipschitz with respect to $ \pd $. If instead $ \pd $ is defined on $ \mathbb{R}^b $, the classifier $ h: \mathbb{R}^b \to \{1,\ldots, K\} $, typically an MLP, is assumed to be Lipschitz with respect to $ \pd $. In the latter case, classification is performed via $ h \circ e(G) $, where $ e: \mathcal{G} \to \mathbb{R}^b $ is a deterministic graph embedding network.

\subsection{Preliminaries for the Proofs in \texorpdfstring{\Cref{sec:generalization_bounds_final}}{Section 4}}
\label{app:proof_prelim}

\paragraph{Step 1: Deterministic Bound via PAC-Bayes}

We begin by establishing a deterministic bound on the test loss in terms of the training loss and the Kullback-Leibler (KL) divergence between the posterior and prior distributions over the hypothesis space $\mathcal{H}$.

\begin{lemma}
\label{lemma:deterministic_bound_final}
Let $\tilde{h} \in \mathcal{H}$ be any classifier and let $P$ be a prior distribution on $\mathcal{H}$ independent of the training data. For any $\lambda > 0$ and $\gamma \geq 0$, with probability at least $1 - \delta$ over the training sample $y_{\mathrm{tr}}$, for any posterior distribution $Q$ on $\mathcal{H}$ satisfying
\begin{equation*}
\mathbb{P}_{{h} \sim Q} \left( \max_{G \in \mathcal{G}_{\mathrm{tr}} \cup \mathcal{G}_{\mathrm{te}}} \| {h}(G) - \Tilde{h}(G) \|_{\infty} < \frac{\gamma}{8} \right) \geq \frac{1}{2},
\end{equation*}
the following bound holds:
\begin{equation}
\label{eq:deterministic_bound_final}
\mathcal{L}_{\mathrm{te}}^0(\tilde{h}) \leq \mathcal{L}_{\mathrm{tr}}^\gamma(\tilde{h}) + \frac{1}{\lambda} \left( 2 \left( D_{\mathrm{KL}}(Q \| P) + 1 \right) + \ln \frac{1}{\delta} + \frac{\lambda^2}{4 N_{\mathrm{tr}}} + D_{\mathrm{te}, \mathrm{tr}}^{\gamma/2}(P; \lambda) \right),
\end{equation}
where $D_{\mathrm{te}, \mathrm{tr}}^{\gamma/2}(P; \lambda) = \ln \mathbb{E}_{h \sim P} \exp\left( \lambda \left( \mathcal{L}_{\mathrm{te}}^{\gamma/2}(h) - \mathcal{L}_{\mathrm{tr}}^{\gamma}(h) \right) \right)$.
\end{lemma}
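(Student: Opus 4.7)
I will carry out a standard PAC-Bayes argument in the spirit of \citet{neyshabur2018a, ma2021subgroup}, where the ``correlated'' train/test coupling appears entirely through the prior-side MGF term $D_{\mathrm{te},\mathrm{tr}}^{\gamma/2}(P;\lambda)$. Concretely, I apply the Donsker--Varadhan variational formula
\[
\lambda\,\mathbb{E}_{h\sim Q}[\psi(h)]\;\le\;D_{\mathrm{KL}}(Q\|P)+\ln\mathbb{E}_{h\sim P}\exp(\lambda\psi(h))
\]
to the function $\psi(h):=\mathcal{L}_{\mathrm{te}}^{\gamma/2}(h)-\widehat{\mathcal{L}}_{\mathrm{tr}}^{\gamma}(h)$. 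Since $P$ is independent of $y_{\mathrm{tr}}$, I then take expectation over $y_{\mathrm{tr}}$ on the right-hand side and invoke a Chernoff/Markov step to trade the random MGF for its expectation, at a cost of $\ln(1/\delta)$ with probability at least $1-\delta$ over the training labels.

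Next, Fubini together with the fact that $\mathcal{L}_{\mathrm{te}}^{\gamma/2}(h)$ is independent of $y_{\mathrm{tr}}$ isolates the factor $\mathbb{E}_{y_{\mathrm{tr}}}\exp(-\lambda\widehat{\mathcal{L}}_{\mathrm{tr}}^{\gamma}(h))$. Conditional on the fixed training graphs, $\widehat{\mathcal{L}}_{\mathrm{tr}}^{\gamma}(h)$ is an average of $N_{\mathrm{tr}}$ independent $[0,1]$-valued indicators, so Hoeffding's lemma gives
\[
\mathbb{E}_{y_{\mathrm{tr}}}\exp\!\bigl(-\lambda\,\widehat{\mathcal{L}}_{\mathrm{tr}}^{\gamma}(h)\bigr)\;\le\;\exp\!\Bigl(-\lambda\,\mathcal{L}_{\mathrm{tr}}^{\gamma}(h)+\tfrac{\lambda^{2}}{4N_{\mathrm{tr}}}\Bigr).
\]
The remaining quantity $\ln\mathbb{E}_{h\sim P}\exp\bigl(\lambda(\mathcal{L}_{\mathrm{te}}^{\gamma/2}(h)-\mathcal{L}_{\mathrm{tr}}^{\gamma}(h))\bigr)$ is precisely $D_{\mathrm{te},\mathrm{tr}}^{\gamma/2}(P;\lambda)$, yielding a high-probability upper bound on $\lambda\,\mathbb{E}_{h\sim Q}[\psi(h)]$ whose right-hand side already contains all summands appearing in the claimed inequality, except for the prefactor of $2$ and the additive constant $+1$.

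Finally, I de-randomize with respect to $Q$ using margin stability: a sup-norm perturbation of size at most $\gamma/8$ shifts every per-class margin by at most $\gamma/4$, so on the event $E:=\{\|h-\tilde{h}\|_{\infty}<\gamma/8\}$ one has $\mathcal{L}_{\mathrm{te}}^{0}(\tilde{h})\le\mathcal{L}_{\mathrm{te}}^{\gamma/2}(h)$ and $\widehat{\mathcal{L}}_{\mathrm{tr}}^{\gamma/2}(h)\le\widehat{\mathcal{L}}_{\mathrm{tr}}^{\gamma}(\tilde{h})$. Since $\mathbb{P}_{h\sim Q}(E)\ge 1/2$ and the margin losses lie in $[0,1]$, conditioning $Q$ on $E$ (and applying the standard inequality $D_{\mathrm{KL}}(Q(\cdot\mid E)\|P)\le 2(D_{\mathrm{KL}}(Q\|P)+\ln 2)$) converts the $Q$-averaged statement into the pointwise inequality for $\tilde{h}$ and produces the prefactor $2$ and the constant $+1$ inside the parentheses. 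The main obstacle I foresee is precisely this last step: the margins in $\psi$ must be chosen so that they cascade correctly from $\tilde{h}$ to $h$ and back---this is what forces the specific margin $\gamma/2$ in $D_{\mathrm{te},\mathrm{tr}}^{\gamma/2}$---and one has to carefully verify the KL-inflation for the conditioned posterior so that every numerical constant lines up with the stated bound.
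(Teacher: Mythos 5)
Your high-level strategy matches the paper's: condition the posterior $Q$ onto the sup-norm ball of radius $\gamma/8$ around $\tilde{h}$, bound the KL of the conditional posterior, apply a PAC-Bayes change-of-measure plus a Chernoff/Markov step to trade the random moment-generating function for the deterministic $D_{\mathrm{te},\mathrm{tr}}$ term, and then cascade margins to return from the random $h$ to the fixed $\tilde{h}$. The paper delegates the middle step to ``Theorem 1 of Ma et al.~(2021),'' whereas you spell it out via Donsker--Varadhan, Fubini, and Hoeffding's lemma; that is a more self-contained derivation of the same argument, not a different route.

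However, there is a genuine margin-bookkeeping gap in your proposal, and you almost diagnose it yourself in your final sentence. You set $\psi(h)=\mathcal{L}_{\mathrm{te}}^{\gamma/2}(h)-\widehat{\mathcal{L}}_{\mathrm{tr}}^{\gamma}(h)$, but then in the de-randomization step you invoke the inequality $\widehat{\mathcal{L}}_{\mathrm{tr}}^{\gamma/2}(h)\le\widehat{\mathcal{L}}_{\mathrm{tr}}^{\gamma}(\tilde{h})$, which concerns $\widehat{\mathcal{L}}_{\mathrm{tr}}^{\gamma/2}(h)$, not the $\widehat{\mathcal{L}}_{\mathrm{tr}}^{\gamma}(h)$ that actually appears in $\psi$. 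What you would need on the event $E=\{\|h-\tilde{h}\|_\infty<\gamma/8\}$ is $\widehat{\mathcal{L}}_{\mathrm{tr}}^{\gamma}(h)\le\widehat{\mathcal{L}}_{\mathrm{tr}}^{\gamma}(\tilde{h})$, with the same margin on both sides; a $\gamma/8$ sup-norm perturbation can only shift each per-class margin by $\gamma/4$, so this does not follow. The margin cascade needs room: it gives $\widehat{\mathcal{L}}_{\mathrm{tr}}^{\gamma'}(h)\le\widehat{\mathcal{L}}_{\mathrm{tr}}^{\gamma'+\gamma/4}(\tilde{h})$, which lands on $\widehat{\mathcal{L}}_{\mathrm{tr}}^{\gamma}(\tilde{h})$ only if $\gamma'\le 3\gamma/4$. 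The fix is to define $\psi(h)=\mathcal{L}_{\mathrm{te}}^{\gamma/4}(h)-\widehat{\mathcal{L}}_{\mathrm{tr}}^{\gamma/2}(h)$, which is exactly what the paper's proof uses: on $E$ one then has $\mathcal{L}_{\mathrm{te}}^{0}(\tilde{h})\le\mathcal{L}_{\mathrm{te}}^{\gamma/4}(h)$ and $\widehat{\mathcal{L}}_{\mathrm{tr}}^{\gamma/2}(h)\le\widehat{\mathcal{L}}_{\mathrm{tr}}^{\gamma}(\tilde{h})$, and both sides line up. (A side effect worth noting: with that $\psi$, the Hoeffding step produces $\ln\mathbb{E}_{h\sim P}\exp\bigl(\lambda(\mathcal{L}_{\mathrm{te}}^{\gamma/4}(h)-\mathcal{L}_{\mathrm{tr}}^{\gamma/2}(h))\bigr)$, whereas the lemma's displayed formula for $D_{\mathrm{te},\mathrm{tr}}^{\gamma/2}(P;\lambda)$ has margins $\gamma/2$ and $\gamma$; the paper's own notation is therefore inconsistent between the statement and the proof, and you should track the margins that the proof actually yields rather than the ones in the displayed definition.)
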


\begin{proof}[Proof of Lemma \ref{lemma:deterministic_bound_final}]
The proof adapts Theorem 2 from \citet{ma2021subgroup} from the node to the graph classification setting. We present the proof for completeness.

First, define a subset $\mathcal{H}_{\Tilde{h}} \subset \mathcal{H}$ as:
\begin{equation}
\label{eq:deterministic_bound_final_eq1}
    \mathcal{H}_{\Tilde{h}} = \left\{ h \in \mathcal{H} \, | \, 
    \max_{G \in \mathcal{G}_{\mathrm{tr}} \cup \mathcal{G}_{\mathrm{te}}} \| h(G) - \Tilde{h}(G)\|_\infty \leq \frac{\gamma}{8}
    \right\}.
\end{equation}

Using this subset $\mathcal{H}_{\Tilde{h}}$, define the modified distribution  $Q'$ over $\mathcal{H}_{\Tilde{h}}$ as:
\begin{equation}
    Q'(h) = \begin{cases}
        \frac{1}{\mathbb{P}_{{h} \sim Q} \left( h \in \mathcal{H}_{\tilde{h}} \right)} Q(h), \text{ if } h \in \mathcal{H}_{\Tilde{h}}, \\
        0, \text{ otherwise}
    \end{cases}
\end{equation}

We aim to show 
\begin{equation}
\label{eq:deterministic_bound_final_eq3}
    \mathcal{L}_{\mathrm{te}}^{0}(\tilde{h}) \leq \mathcal{L}_{\mathrm{te}}^{\gamma/4}(h) \quad \text{ and } \quad \hat{\mathcal{L}}_{\mathrm{tr}}^{\gamma/2}({h}) \leq \hat{\mathcal{L}}_{\mathrm{tr}}^{\gamma}(\tilde{h}).
\end{equation}

The first inequality holds since 
\begin{align*}
    & \mathcal{L}_{\mathrm{te}}^{0}(\tilde{h}) - \mathcal{L}_{\mathrm{te}}^{\gamma/4}(h) \\
    & = \mathbb{E}_{y_i \sim \text{Pr}(y \mid G_i), G_i \in \mathcal{G}_{\mathrm{te}}} \left[ \widehat{\mathcal{L}}^\gamma_{\mathrm{te}}(\tilde{h}) \right] - \mathbb{E}_{y_i \sim \text{Pr}(y \mid g_i), G_i \in \mathcal{G}_{\mathrm{te}}} \left[ \widehat{\mathcal{L}}^{\gamma/4}_t(h) \right] \\
    & = \mathbb{E}_{y_i \sim \text{Pr}(y \mid G_i), G_i \in \mathcal{G}_{\mathrm{te}}} \left[ \frac{1}{N_{\mathrm{te}}} \sum_{G_i \in \mathcal{G}_{\mathrm{te}}} \mathbbm{1}\left[ \Tilde{h}(G_i)[y_i] \leq \left(0 + \max_{k \neq y_{G_i}} h(G_i)[k]\right) \right] \right] \\
    & - \mathbb{E}_{y_i \sim \text{Pr}(y \mid G_i), G_i \in \mathcal{G}_{\mathrm{te}}} \left[ \frac{1}{N_{\mathrm{te}}}
     \sum_{G_i \in \mathcal{G}_{\mathrm{te}}} \mathbbm{1}\left[ {h}(G_i)[y_i] \leq \left(\gamma/4 + \max_{k \neq y_{G_i}} h(G_i)[k]\right) \right]
     \right] \\
     & = \mathbb{E}_{y_i \sim \text{Pr}(y \mid G_i), G_i \in \mathcal{G}_{\mathrm{te}}} \left[ \frac{1}{N_{\mathrm{te}}} \sum_{G_i \in \mathcal{G}_{\mathrm{te}}} \mathbbm{1}\left[ \Tilde{h}(G_i)[y_i] \leq \left(0 + \max_{k \neq y_{G_i}} \Tilde{h}(G_i)[k]\right) \right] \right. \\
     & \quad \left. - \mathbbm{1}\left[ {h}(G_i)[y_i] \leq \left(\gamma/4 + \max_{k \neq y_{G_i}} h(G_i)[k]\right) \right]
     \vphantom{\frac{1}{N_{\mathrm{te}}} \sum_{G_i \in \mathcal{G}_{\mathrm{te}}}} \right], 
\end{align*}
i.e., it remains to show that if $ \Tilde{h}(G_i)[y_i] \leq \left(0 + \max_{k \neq y_{G_i}} h(G_i)[k]\right)$ holds, then also $ {h}(G_i)[y_i] \leq \left(\gamma/4 + \max_{k \neq y_{G_i}} h(G_i)[k]\right)$. This is clear by \Cref{eq:deterministic_bound_final_eq1},
\begin{align*}
    {h}(G_i)[y_i] & \leq \gamma/8 + \Tilde{h}(G_i)[y_i] \\
    & \leq \gamma/8 + \max_{k \neq y_{G_i}} \Tilde{h}(G_i)[k] \\
    & \leq \gamma/4 + \max_{k \neq y_{G_i}} {h}(G_i)[k].
\end{align*}
Similarly, one can prove the second inequality in \Cref{eq:deterministic_bound_final_eq3}.

Therefore, with probability at least $1-\delta$ over the samples $y_{\mathrm{tr}}$, we get
\begin{align*}
    \mathcal{L}_{0}^{\mathrm{te}} & \leq \mathbb{E}_{h \sim Q'} \mathcal{L}_{\mathrm{te}}^{\gamma/4}(h) \\
    & \leq \mathbb{E}_{h \sim Q'} \hat{\mathcal{L}}_{\mathrm{tr}}^{\gamma/2}(h) + \frac{1}{\lambda}\left(
        D_{\mathrm{KL}}(Q' || P)  + \ln\frac{1}{\delta} + \frac{\lambda^2}{4 N_{\mathrm{tr}}} + D_{\mathrm{te}, \mathrm{tr}}^{\gamma/2}(P; \lambda)
    \right) \\
    & \leq \hat{\mathcal{L}}_{\mathrm{tr}}^{\gamma}(\Tilde{h}) + \frac{1}{\lambda}\left(
        D_{\mathrm{KL}}(Q' || P)  + \ln\frac{1}{\delta} + \frac{\lambda^2}{4 N_{\mathrm{tr}}} + D_{\mathrm{te}, \mathrm{tr}}^{\gamma/2}(P; \lambda)
    \right).
\end{align*}
The second inequality applies Theorem 1 from \citet{ma2021subgroup}, while the first and last inequalities follow directly from \eqref{eq:deterministic_bound_final_eq3} and the definitions of $\mathcal{H}_{\tilde{h}}$ and $Q'$. The remaining steps align with the proof of Theorem 2 from \citet{ma2021subgroup}.
\end{proof}

\paragraph{Step 2: Bounding the Discrepancy Term}

Next, we bound the term $D_{\mathrm{te}, \mathrm{tr}}^{\gamma/2}(P; \lambda)$ from \Cref
{eq:deterministic_bound_final} in \Cref
{lemma:deterministic_bound_final}, which captures the difference in expected losses between the test and training distributions under the prior $P$. We begin by bounding $\mathcal{L}_{\mathrm{te}}^{\gamma/2}(h) - \mathcal{L}_{\mathrm{tr}}^{\gamma}(h)$ in this step.

\begin{lemma}
\label{lemma:bound_last_term_final}
Let $h \in \mathcal{H}$ be any classifier with Lipschitz constant $L$ with respect to the pseudometric $\pd$. For any $\gamma \geq 0$, if $L \xi \leq \frac{\gamma}{4}$, then
\begin{equation*}
\mathcal{L}_{\mathrm{te}}^{\gamma/2}(h) - \mathcal{L}_{\mathrm{tr}}^{\gamma}(h) \leq C K \xi,
\end{equation*}
where $K$ is the number of classes and $C$ is the maximum Lipschitz constant of the functions $\eta_k$.
\end{lemma}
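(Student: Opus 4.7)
My plan is to exploit the Lipschitz structure of both the classifier $h$ and the label functions $\{\eta_k\}$, transporting each test graph to a nearby training graph via a nearest-neighbor map $\phi : \mathcal{G}_{\mathrm{te}} \to \mathcal{G}_{\mathrm{tr}}$ defined by $\phi(G) \in \arg\min_{H \in \mathcal{G}_{\mathrm{tr}}} \pd(G, H)$. By the definition of $\xi$, every such $\phi(G)$ satisfies $\pd(G, \phi(G)) \leq \xi$. The two Lipschitz constants will play distinct roles: the hypothesis $L\xi \leq \gamma/4$ will absorb the shift in the outputs of $h$ at the cost of halving the margin, while $C\xi$ will control the mismatch in label distributions at the cost of an $CK\xi$ additive term.

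First I would establish a per-class indicator domination. Since $\|h(G) - h(\phi(G))\|_\infty \leq L\xi \leq \gamma/4$, a margin of at least $\gamma$ for class $k$ at $\phi(G)$ forces a margin of at least $\gamma/2$ for the same class at $G$, so
\begin{equation*}
\mathbbm{1}\bigl[h(G)[k] \leq \gamma/2 + \max_{k' \neq k} h(G)[k']\bigr]
\;\leq\;
\mathbbm{1}\bigl[h(\phi(G))[k] \leq \gamma + \max_{k' \neq k} h(\phi(G))[k']\bigr].
\end{equation*}
Second, I would use the $C$-Lipschitzness of each $\eta_k$ to get $\sum_k |\eta_k(G) - \eta_k(\phi(G))| \leq CK\xi$. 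Writing the per-graph expected margin loss as $\sum_k \eta_k(G)\,\mathbbm{1}_k(G,\gamma/2)$, adding and subtracting $\eta_k(\phi(G))$, and applying the indicator domination gives the per-test-graph bound
\begin{equation*}
\sum_k \eta_k(G)\,\mathbbm{1}_k(G,\gamma/2)
\;\leq\;
\sum_k \eta_k(\phi(G))\,\mathbbm{1}_k(\phi(G),\gamma) \;+\; CK\xi.
\end{equation*}

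Averaging over $G \in \mathcal{G}_{\mathrm{te}}$ and then controlling $\tfrac{1}{N_{\mathrm{te}}}\sum_{G \in \mathcal{G}_{\mathrm{te}}} \ell^\gamma(\phi(G))$ by $\mathcal{L}_{\mathrm{tr}}^\gamma(h)$ will conclude the argument. I expect the main obstacle to be precisely this last averaging step: the pushforward of the uniform measure on $\mathcal{G}_{\mathrm{te}}$ along $\phi$ need not agree with the uniform measure on $\mathcal{G}_{\mathrm{tr}}$, so closing the gap requires either a bijective coupling (as when $N_{\mathrm{te}} = N_{\mathrm{tr}}$), a dominated-measure argument, or the transductive convention of \citet{ma2021subgroup} in which the training and test supports are effectively identified. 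Once the coupling is fixed, the Lipschitz estimates above compose mechanically to give the stated bound.
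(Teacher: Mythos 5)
Your Lipschitz and indicator-domination calculations are correct and mirror the paper's, but the coupling strategy is genuinely different and, as you anticipate, it does not close. The paper never introduces a nearest-neighbor map $\phi$; instead it exploits the fact that both margin losses can be rewritten as averages over the \emph{full product} $\mathcal{G}_{\mathrm{tr}} \times \mathcal{G}_{\mathrm{te}}$ with the common normalizer $\tfrac{1}{N_{\mathrm{te}} N_{\mathrm{tr}}}$. That is, one writes
\begin{equation*}
\mathcal{L}^{\gamma/2}_{\mathrm{te}}(h) - \mathcal{L}^\gamma_{\mathrm{tr}}(h)
= \frac{1}{N_{\mathrm{te}} N_{\mathrm{tr}}}\sum_{G_i \in \mathcal{G}_{\mathrm{tr}}}\sum_{G_j \in \mathcal{G}_{\mathrm{te}}}\sum_{k=1}^K
\bigl(\eta_k(G_j)\,\mathcal{L}^{\gamma/2}(G_j, k) - \eta_k(G_i)\,\mathcal{L}^{\gamma}(G_i, k)\bigr),
\end{equation*}
and then applies the per-pair bound — precisely the one you derive, split into $\eta_k(G_j)(\mathcal{L}^{\gamma/2}(G_j,k)-\mathcal{L}^{\gamma}(G_i,k)) + \mathcal{L}^{\gamma}(G_i,k)(\eta_k(G_j)-\eta_k(G_i))$ — uniformly to each term. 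The product decomposition makes the measure coupling trivial: no pushforward of $\phi_\#$ versus the training marginal ever arises. The price is that the per-pair bound must hold for \emph{every} train–test pair, which is exactly what the appendix's setup assumes ($\xi$ there is defined as an upper bound on $\pd$ between \emph{any} training and \emph{any} test graph, stronger than the one-sided Hausdorff $\xi_\pd$ of \Cref{eq:def_structural_sim}).

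The gap you flag in your final paragraph is thus real and not a technicality: with the one-sided nearest-neighbor $\phi$, the average $\tfrac{1}{N_{\mathrm{te}}}\sum_{G} \ell^\gamma(\phi(G))$ can strictly exceed $\mathcal{L}_{\mathrm{tr}}^\gamma(h)$ (e.g.\ if $\phi$ collapses all test graphs onto the single worst training graph). None of the three fixes you list is invoked by the paper; the product-measure rewriting sidesteps the coupling problem entirely at the cost of a stronger hypothesis on $\xi$. If you want to keep a nearest-neighbor coupling and the weaker one-sided $\xi$, you would need a genuinely new argument bounding the pushforward discrepancy, and the statement as written would not follow.
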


\begin{proof}[Proof of Lemma \ref{lemma:bound_last_term_final}]
We set
\begin{equation*}
\mathcal{L}^{\gamma}(G, y) \coloneqq \mathbbm{1}\left[ h(G)[y] \leq \gamma + \max_{k \neq y} h(G)[k] \right],
\end{equation*}
where $h(G)[k]$ denotes the output score for class $k$.

Then, we can write
    \begin{equation}
    \label{eq:1:lemma:bound_last_term-1}
        \begin{aligned}
            & \mathcal{L}^{\gamma/2}_{\mathrm{te}}(h) - \mathcal{L}^\gamma_{\mathrm{tr}}(h) \\
            & = \mathbb{E}_{y^{\mathrm{te}}}\left[ \sum_{G_j \in \mathcal{G}_{\mathrm{te}}  }\frac{1}{N_{\mathrm{te}}} \mathcal{L}^{\gamma/2}(G_j, y_j)  \right] - 
             \mathbb{E}_{y^{\mathrm{tr}}}\left[ \sum_{G_i \in \mathcal{G}_{\mathrm{tr}}  }\frac{1}{N_{\mathrm{tr}}} \mathcal{L}^{\gamma}(G_i, y_i)  \right] \\
             & = \frac{1}{N_{\mathrm{te}}} \sum_{G_j \in \mathcal{G}_{\mathrm{te}}  } \sum_{k=1}^K \eta_k(G_j)  \mathcal{L}^{\gamma/2}(G_j, k) - 
             \frac{1}{N_{\mathrm{tr}}} \sum_{G_i \in \mathcal{G}_{\mathrm{tr}}  } \sum_{k=1}^K \eta_k(G_i)  \mathcal{L}^{\gamma}(G_i, k)\\
             & = \frac{1}{N_{\mathrm{te}}} \frac{1}{N_{\mathrm{tr}}} \sum_{G_i \in \mathcal{G}_{\mathrm{tr}}  }\sum_{G_j \in \mathcal{G}_{\mathrm{te}}  } \sum_{k=1}^K \eta_k(G_j)  \mathcal{L}^{\gamma/2}(G_j, k) - 
             \frac{1}{N_{\mathrm{tr}}} \frac{1}{N_{\mathrm{te}}} \sum_{G_j \in \mathcal{G}_{\mathrm{te}}  } \sum_{G_i \in \mathcal{G}_{\mathrm{tr}}  } \sum_{k=1}^K \eta_k(G_i)  \mathcal{L}^{\gamma}(G_i, k)\\ 
             & = \frac{1}{N_{\mathrm{te}} N_{\mathrm{tr}}}\sum_{G_i \in \mathcal{G}_{\mathrm{tr}}  }\sum_{G_j \in \mathcal{G}_{\mathrm{te}}  } \sum_{k=1}^K \left(\eta_k(G_j)  \mathcal{L}^{\gamma/2}(G_j, k) -  \eta_k(G_i)  \mathcal{L}^{\gamma}(G_i, k) \right)\\
             & = \frac{1}{N_{\mathrm{te}} N_{\mathrm{tr}}}\sum_{G_i \in \mathcal{G}_{\mathrm{tr}}  }\sum_{G_j \in \mathcal{G}_{\mathrm{te}}  } \sum_{k=1}^K \left(\eta_k(G_j)  \mathcal{L}^{\gamma/2}(G_j, k) -  \eta_k(G_j)  \mathcal{L}^{\gamma}(G_i, k) \right) \\
             & \quad + \left(\eta_k(G_j)  \mathcal{L}^{\gamma}(G_i, k) -  \eta_k(G_i)  \mathcal{L}^{\gamma}(G_i, k) \right)\\
             & = \frac{1}{N_{\mathrm{te}} N_{\mathrm{tr}}}\sum_{G_i \in \mathcal{G}_{\mathrm{tr}}  }\sum_{G_j \in \mathcal{G}_{\mathrm{te}}  } \sum_{k=1}^K\eta_k(G_j)   \left(\mathcal{L}^{\gamma/2}(G_j, k) -  \mathcal{L}^{\gamma}(G_i, k) \right) + \mathcal{L}^{\gamma}(G_i, k) \left(\eta_k(G_j)   -  \eta_k(G_i)  \right)
             \\
             & \leq \frac{1}{N_{\mathrm{te}} N_{\mathrm{tr}}}\sum_{G_i \in \mathcal{G}_{\mathrm{tr}}  }\sum_{G_j \in \mathcal{G}_{\mathrm{te}}  } \sum_{k=1}^K  \left(\mathcal{L}^{\gamma/2}(G_j, k) -  \mathcal{L}^{\gamma}(G_i, k) \right) +  \left(\eta_k(G_j)   -  \eta_k(G_i)  \right).
        \end{aligned}
    \end{equation}
    The last inequality holds since $L^\gamma$ and $\eta_k$ are bounded by $1$.
    We have, by assumption on the probability distribution, 
    \begin{equation}
    \label{eq:bound_last_term_final:0}
        \eta_k(G_j) - \eta_k(G_i) \leq C \cdot\pd\left(G_j, G_i\right) \leq C \xi.
    \end{equation}

    Furthermore, by assumption,
    \begin{equation}
    \label{eq:bound_diff_between_train_test_graphs}
        \| h(G_i) - h(G_j) \|_\infty \leq L \cdot\pd\left(G_j, G_i\right) \leq L \xi \leq \frac{\gamma}{4}.
    \end{equation}
    Then, for any $k=1, \ldots, K$,
    \begin{equation}
    \label{eq:bound_last_term_final:1}
        L^{\gamma/2}\left(h(G_j), k\right) \leq L^{\gamma}\left(h(G_i), k\right).
    \end{equation}
    This is true since both $L^{\gamma/2}\left(h(G_j), k\right)$ and $L^{\gamma}\left(h(G_i), k\right)$ are $0$-$1$-valued. If $L^{\gamma/2}\left(h(G_j), k\right) = 1$, we have by \cref{eq:bound_diff_between_train_test_graphs},
    \begin{equation*}
    \begin{aligned}
        h(G_i)[k] & \leq  \gamma /4 + h(G_i)[k]  \\
        & \leq   \gamma /4 + \gamma/2 +  \max_{l=1, \ldots, K} h(G_j)[l] \\
        & \leq  \gamma /4 + \gamma/2 + \gamma /4 +  \max_{l=1, \ldots, K} h(G_i)[l] \\
        & = \gamma +    \max_{l=1, \ldots, K} h(G_i)[l], 
    \end{aligned}
    \end{equation*}
    i.e., $L^{\gamma}\left(h(G_i), k\right) = 1$ as well.
    
    Finally, we continue with the calculation in \cref{eq:1:lemma:bound_last_term-1},
    \begin{equation}
        \begin{aligned}
             \mathcal{L}^{\gamma/2}_{\mathrm{te}}(h) - \mathcal{L}^\gamma_{\mathrm{tr}}(h) 
             & \leq \frac{1}{N_{\mathrm{te}} N_{\mathrm{tr}}}\sum_{G_i \in \mathcal{G}_{\mathrm{tr}}  }\sum_{G_j \in \mathcal{G}_{\mathrm{te}}  } \sum_{k=1}^K  \left(\mathcal{L}^{\gamma/2}(G_j, k) -  \mathcal{L}^{\gamma}(G_i, k) \right) +  \left(\eta_k(G_j)   -  \eta_k(G_i)  \right) \\
            & \leq \frac{1}{N_{\mathrm{tr}}} \sum_{G_i \in \mathcal{G}_{\mathrm{tr}}} \frac{1}{N_{\mathrm{te}}} \sum_{G_j \in \mathcal{G}_{\mathrm{te}}} \sum_{k=1}^K \left( 0 + C \xi\right)\\
            & = CK \xi,
        \end{aligned}
    \end{equation}
    where the second inequality holds by \Cref{eq:bound_last_term_final:0} and \Cref{eq:bound_last_term_final:1}.
\end{proof}

\paragraph{Step 3: Bounding the Discrepancy Term $D^{\gamma/2}_{\mathrm{tr},\mathrm{te}}(P; \lambda)$.}

\begin{lemma}
\label{lemma:Bound_D_tr_te}
Let $\alpha > 0$.    For any
$0 < \lambda \leq N^{2\alpha}$ and 
$\gamma \geq 0$, assume the “prior” $P$ on $\mathcal{H}$ is defined by sampling the vectorized trainable weight matrices from $\mathcal{N}(0, \sigma^2 I)$ for some $\sigma^2 \leq \frac{\left(\gamma/8m\right)^{2/D}}{2b(\lambda N^{-\alpha}_{\mathrm{tr}} +\ln 2bD)}$. We have
    \begin{equation}
        D^{\gamma/2}_{\mathrm{tr},\mathrm{te}}(P; \lambda) \leq \ln 3 + \lambda CK \xi, \end{equation}
    where $D_{{\mathrm{te}},{\mathrm{tr}}}^{\gamma/2}(P; \lambda) = \ln \mathbb{E}_{h \sim P} e^{\lambda \left( \mathcal{L}_\mathrm{te}^{\gamma/2}(h) - \mathcal{L}_{\mathrm{tr}}^{\gamma}(h)\right)}$.
\end{lemma}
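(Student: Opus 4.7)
The plan is to split the expectation defining $D^{\gamma/2}_{\mathrm{tr},\mathrm{te}}(P;\lambda)$ according to the event $\mathcal{A} = \{h \in \mathcal{H} : T_h^D\,\xi \leq \gamma/8\}$, on which the sampled classifier has parameters small enough for the pointwise bound of \Cref{lemma:bound_last_term_final} to apply, versus its complement $\mathcal{A}^c$, on which \Cref{ass:concentrated_exp_loss_diff} combined with Gaussian tail bounds on the weights takes over. Writing
\[
\mathbb{E}_{h\sim P}\bigl[e^{\lambda(\mathcal{L}^{\gamma/2}_{\mathrm{te}}(h)-\mathcal{L}^{\gamma}_{\mathrm{tr}}(h))}\bigr]
\;=\;\mathrm{I}_{\mathcal{A}}+\mathrm{I}_{\mathcal{A}^c},
\]
the target is to show each summand is at most (roughly) $\tfrac{3}{2}\,e^{\lambda C K\xi}$, so that $\ln(\mathrm{I}_{\mathcal{A}}+\mathrm{I}_{\mathcal{A}^c}) \leq \ln 3 + \lambda C K\xi$.

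On $\mathcal{A}$, I would invoke \Cref{thm:zeta-GNN_Lip} (or its fixed-encoder analogue) to bound the Lipschitz constant $L$ of $h$ with respect to $\pd$ by a universal multiple of $\prod_j\|w_j\|_2 \leq T_h^D$, with constants absorbed into $C$. Then on $\mathcal{A}$ we have $L\xi \leq \gamma/8 \leq \gamma/4$, so \Cref{lemma:bound_last_term_final} gives the pointwise inequality $\mathcal{L}^{\gamma/2}_{\mathrm{te}}(h)-\mathcal{L}^{\gamma}_{\mathrm{tr}}(h) \leq C K\xi$, whence $\mathrm{I}_{\mathcal{A}} \leq e^{\lambda C K\xi}$.

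On $\mathcal{A}^c$, I would further split into a ``typical'' sub-event on which the loss difference stays below $N_{\mathrm{tr}}^{-\alpha}+C K\xi$ and its complement $\mathcal{B}$. By \Cref{ass:concentrated_exp_loss_diff} the conditional mass of $\mathcal{B}$ given $\mathcal{A}^c$ is at most $e^{-N_{\mathrm{tr}}^{2\alpha}}$; on the typical sub-event the integrand is at most $e^{\lambda(N_{\mathrm{tr}}^{-\alpha}+C K\xi)} \leq e\cdot e^{\lambda C K\xi}$ because $\lambda \leq N_{\mathrm{tr}}^{2\alpha}$; and on $\mathcal{B}$ I use only the crude bound $e^{\lambda}$ multiplied by $e^{-N_{\mathrm{tr}}^{2\alpha}}$. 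Finally, $\Pr(\mathcal{A}^c)=\Pr\bigl(T_h > (\gamma/(8\xi))^{1/D}\bigr)$ is controlled by the standard chi tail $\Pr(\|w_j\|_2 > \sigma(\sqrt{b}+t)) \leq e^{-t^2/2}$ together with a union bound over the $D$ Gaussian weight matrices; the specific choice of $\sigma^2$ in the hypothesis is tuned so that this tail is at most $e^{-\lambda N_{\mathrm{tr}}^{-\alpha}}$, which absorbs the factor $e^\lambda$ on $\mathcal{B}$. Combining the three pieces gives $\mathrm{I}_{\mathcal{A}^c} \leq 2\,e^{\lambda C K\xi}$, and summing with $\mathrm{I}_{\mathcal{A}}$ yields the claim.

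The main obstacle is this last step: the Gaussian concentration of $T_h$ must be executed with the exact constants appearing in the hypothesis so that the crude $e^\lambda$ on $\mathcal{B}$ is killed by both the tail $e^{-\lambda N_{\mathrm{tr}}^{-\alpha}}$ and the exponentially small $e^{-N_{\mathrm{tr}}^{2\alpha}}$, and the exponent $(\gamma/(8\xi))^{2/D}$ appearing in the upper bound on $\sigma^2$ is exactly what is needed for the union-bounded chi-tail to collapse to this form. Everything else is a clean two-way case split plus invocation of the already-established Lipschitz/lemma machinery; the delicate numerical bookkeeping linking $\sigma^2$, $b$, $D$, $\lambda$, $\alpha$, and $\xi$ is where the real content of the proof sits.
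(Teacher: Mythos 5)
Your decomposition into the event $\mathcal{A}=\{T_h^D\xi\le\gamma/8\}$ and its complement, and the further split of $\mathcal{A}^c$ by \Cref{ass:concentrated_exp_loss_diff}, is exactly the structure of the paper's proof. However, one step in your accounting is wrong. You claim that on the ``typical'' sub-event of $\mathcal{A}^c$ the integrand satisfies
\begin{equation*}
e^{\lambda(N_{\mathrm{tr}}^{-\alpha}+CK\xi)}\;\le\;e\cdot e^{\lambda CK\xi}\quad\text{``because }\lambda\le N_{\mathrm{tr}}^{2\alpha}\text{''}.
\end{equation*}
This is false: $\lambda\le N_{\mathrm{tr}}^{2\alpha}$ gives $\lambda N_{\mathrm{tr}}^{-\alpha}\le N_{\mathrm{tr}}^{\alpha}$, which is typically much larger than $1$, so $e^{\lambda N_{\mathrm{tr}}^{-\alpha}}$ is in no way bounded by $e$. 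You are also mis-allocating the tail factor: the bound $\Pr(\mathcal{A}^c)\le e^{-\lambda N_{\mathrm{tr}}^{-\alpha}}$ multiplies the \emph{entire} conditional expectation over $\mathcal{A}^c$, not just the $\mathcal{B}$ piece.

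The correct bookkeeping, which the paper carries out, is the following cancellation. Write
\begin{equation*}
\mathrm{I}_{\mathcal{A}^c}\;\le\;\Pr(\mathcal{A}^c)\Bigl(\Pr(\mathcal{B}\mid\mathcal{A}^c)\,e^{\lambda}\;+\;\bigl(1-\Pr(\mathcal{B}\mid\mathcal{A}^c)\bigr)\,e^{\lambda N_{\mathrm{tr}}^{-\alpha}+\lambda CK\xi}\Bigr).
\end{equation*}
Substituting $\Pr(\mathcal{A}^c)\le e^{-\lambda N_{\mathrm{tr}}^{-\alpha}}$ and $\Pr(\mathcal{B}\mid\mathcal{A}^c)\le e^{-N_{\mathrm{tr}}^{2\alpha}}$ and distributing gives two terms: the first is $e^{\lambda - N_{\mathrm{tr}}^{2\alpha} - \lambda N_{\mathrm{tr}}^{-\alpha}}\le e^{\lambda - N_{\mathrm{tr}}^{2\alpha}}\le 1$ (this is where $\lambda\le N_{\mathrm{tr}}^{2\alpha}$ is actually used), and the second is $e^{-\lambda N_{\mathrm{tr}}^{-\alpha}}\cdot e^{\lambda N_{\mathrm{tr}}^{-\alpha}+\lambda CK\xi}=e^{\lambda CK\xi}$, with the $e^{\pm\lambda N_{\mathrm{tr}}^{-\alpha}}$ factors cancelling exactly. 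Adding $\mathrm{I}_{\mathcal{A}}\le e^{\lambda CK\xi}$ then yields $\mathrm{I}_{\mathcal{A}}+\mathrm{I}_{\mathcal{A}^c}\le 2e^{\lambda CK\xi}+1\le 3e^{\lambda CK\xi}$, hence the claimed $\ln 3+\lambda CK\xi$. Your plan has all the right ingredients, but as written the justification of the ``typical'' piece does not hold, and the role of the $e^{-\lambda N_{\mathrm{tr}}^{-\alpha}}$ factor is misattributed.
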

\begin{proof}
    First, set $T_h \coloneqq \max_{j=1, \ldots, D} \|w_j\|_2$. We prove this lemma by partitioning $\mathcal{H}$ into two events: one with high probability, where the spectral norms of the model parameters satisfy the conditions in \cref{lemma:bound_last_term_final}, and its complement. For the latter event, we use \Cref{ass:concentrated_exp_loss_diff}.
    
    For any $j=1, \ldots, D$, we have, by \citep{10.1561/2200000048}, for any $t > 0$,
    \begin{equation*}
        \mathrm{Pr}\left( \|w_j\|_2 \geq t \right) \leq 2b e^{-\frac{t^2}{2b \sigma^2}},
    \end{equation*}
    where $b$ is the maximum width of all hidden layers of the considered classifier. We set $t=\left( \frac{\gamma}{8 \xi} \right)^{1/D}$. Applying a union bound leads to
    \begin{equation}
    \label{eq:union_bound:Bound_D_tr_te}
        \mathrm{Pr}\left(T_h^D \xi>\frac{\gamma}{8} \right) = \mathrm{Pr}\left(T_h > \left(\frac{\gamma}{8\xi}\right)^{1/D} \right) \leq 2b D e^{-\frac{(\gamma/8\varepsilon)^{2/D}}{2b \sigma^2}} \leq e^{-\lambda N_{\mathrm{tr}}^{-\alpha}}, 
    \end{equation}
    where the last inequality uses the condition $\sigma^2 \leq \frac{\left(\gamma/8\xi\right)^{2/D}}{2b\left(\lambda N_{\mathrm{tr}}^{-\alpha} + \ln 2bD\right)}$.

    For any $h$ satisfying $T_h^D \xi \leq  \frac{\gamma}{8}$, by \cref{lemma:bound_last_term_final}, we have $e^{\lambda\left( \mathcal{L}_{\mathrm{te}}^{\gamma/4}(h) - \mathcal{L}_{\mathrm{tr}}^{\gamma/2}(h) \right)} \leq  e^{ \lambda C K \xi}$.

    The complement event, i.e., $T_h^D \xi>\frac{\gamma}{8} $ occurs with probability at most $e^{-\lambda N_{\mathrm{tr}}^{-\alpha}}$. We decompose $D^{\gamma/2}_{\mathrm{tr},\mathrm{te}}(P; \lambda)$ as follows,
    \begin{equation}
        \begin{aligned}
            D^{\gamma/2}_{\mathrm{tr},\mathrm{te}}(P; \lambda) & = \ln \mathbb{E}_{h \sim P} e^{\lambda \left(\mathcal{L}^{\gamma/4}_{\mathrm{te}}(h) - \mathcal{L}^{\gamma/2}_{\mathrm{tr}}(h)\right)} \\
            & \leq \ln\left(\mathrm{Pr}\left(T_h^D \xi \leq \frac{\gamma}{8} \right)e^{ \lambda C K \xi} + \mathrm{Pr}\left(T_h^D \xi > \frac{\gamma}{8} \right) \mathbb{E}_{h \sim P \, | \, T_h^D \xi > \frac{\gamma}{8}} e^{\lambda\left( \mathcal{L}_{\mathrm{te}}^{\gamma/4}(h) - \mathcal{L}_{\mathrm{tr}}^{\gamma/2}(h) \right)} \right) \\
             & \leq \ln\left(e^{ \lambda C K \xi} + e^{-\lambda N_{\mathrm{tr}}^{-\alpha}} \mathbb{E}_{h \sim P \, | \, T_h^D \xi > \frac{\gamma}{8}} e^{\lambda\left( \mathcal{L}_{\mathrm{te}}^{\gamma/4}(h) - \mathcal{L}_{\mathrm{tr}}^{\gamma/2}(h) \right)} \right) \\
             & \leq \ln\left(e^{ \lambda C K \xi} + e^{-\lambda N_{\mathrm{tr}}^{-\alpha}} \left(e^{-N_{\mathrm{tr}}^{2\alpha}} \cdot e^\lambda + \left(1-e^{-N_{\mathrm{tr}}^{2\alpha}}\right) \cdot e^{\lambda N^{-\alpha}_{\mathrm{tr}} +\lambda CK \xi} \right) \right) \\
             & = \ln\left(e^{ \lambda C K \xi} + e^{\lambda-N_{\mathrm{tr}}^{2\alpha}}  + e^{-\lambda N_{\mathrm{tr}}^{-\alpha}} \left(\left(1-e^{-N^{2\alpha}}\right) \cdot e^{\lambda N^{-\alpha}_{\mathrm{tr}} +\lambda CK \xi} \right) \right) \\
             & = \ln\left(e^{ \lambda C K \xi} + e^{\lambda-N_{\mathrm{tr}}^{2\alpha}}  +  \left(\left(1-e^{-N_{\mathrm{tr}}^{2\alpha}}\right) \cdot e^{\lambda CK \xi} \right) \right) \\
             & \leq \ln\left(2e^{ \lambda C K \xi} + 1 \right) \\
             & \leq \ln 3 + \lambda C K \xi.
        \end{aligned}
    \end{equation}
    The first inequality holds by decomposing the domain over the expectation/integral into the event in which $T^D_h \xi \leq \frac{\gamma}{8}$ holds and its complement. The second inequality holds by $\mathrm{Pr}\left( T^D_h \xi \leq \frac{\gamma}{8} \right) \leq 1 $ and \cref{eq:union_bound:Bound_D_tr_te}. The third inequality holds by \Cref{ass:concentrated_exp_loss_diff}. The second-to-last inequality holds by the assumption $0 < \lambda \leq N^{2\alpha}$. The remaining equations and inequalities are algebraic reformulations.
\end{proof}

\subsection{Proof of\texorpdfstring{ \Cref{thm:gin_gen_bound_final}}{Theorem 4.1}}
\label{app:sec:proof:gin_gen_bound}

We first present auxiliary results for the proof of \Cref{thm:gin_gen_bound_final}, focusing on MPNNs, which may be end-to-end learnable. At the end of this chapter, we provide the full proof of \Cref{thm:gin_gen_bound_final}, building on the results from this and the previous section. For simplicity, we derive the results for MPNNs; the corresponding results for $ \zeta $-MPNNs applied to a graph $ G $ follow by applying the derived results for standard MPNNs to the strong simulation $ \zeta $. Let us denote the hypothesis space of MPNNs by $\mathcal{H}_{\text{mpnn}}$, where the parameters follow the assumptions in \Cref{thm:gin_gen_bound_final}.

\paragraph{Step 4: MPNNs are stable under weight pertubations.}
We proceed by proving the following result that shows that MPNNs are stable under small pertubations of their weights.

\begin{lemma}
\label{lemma:mpnn_perturbation_bound}
    Let $\tilde{h}$ be any classifier in $\mathcal{H}_{\text{mpnn}}$ with learnable weight matrices $\mathbf{w} = \left\{\{w(f^{(t)},s)\}_{s=1}^{d(f^{(t)})},\{w(g^{(t)},s)\}_{s=1}^{d(g^{(t)})},\{w(c,s)\}_{s=1}^{d(c)} \right\}$ and $\tilde{\beta} > 0$. Let $\mathcal{G}_{B, d}$ be the set of graphs with maximum degree $d$ and input node features in a ball of radius $B$. Then, 
\begin{equation}
\begin{aligned}
    & \max_{G \in \mathcal{G}_{B, d}} |\Tilde{h}_{\mathbf{w} + \mathbf{u}}(G) - \Tilde{h}_{\mathbf{w}}(G)| \\
    & \leq e \left(\prod_{s=1}^{d(c)} \left\| w(c, s) \right\|_2 \right) 
    \Biggl(\prod_{k=0}^{T-1} \left( \prod_{s}^{d(f^{(k+1)})} \|w(f^{(k+1)},s)\| \right)  \cdot
    \!\Biggl(
      1 \;+\; d \Bigl( \prod_{s}^{d(g^{(k+1)})} \|w(g^{(k+1)},s)\| \Bigr)
    \Biggr) \Biggr) B  \\
    &  \cdot \sum_{t=0}^{T-1} \left(
\sum_{s}\!\frac{\|u(g^{(t+1)},s)\|}{\|w(g^{(t+1)},s)\|}
+ \sum_{s}\!\frac{\|u(f^{(t+1)},s)\|}{\|w(f^{(t+1)},s)\|} 
    \right)
\end{aligned}
\end{equation}
\end{lemma}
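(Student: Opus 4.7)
I will adapt the standard MLP perturbation bound of \citet{neyshabur2018a} to the MPNN setting by interleaving two inductions over the $T$ message passing layers: a norm-propagation induction (how large the intermediate $x_v^{(t)}$ can be) and a perturbation-propagation induction (how large $x_v^{(t),\mathbf w+\mathbf u}-x_v^{(t),\mathbf w}$ can be). The two local tools, both standard for MLPs with $1$-homogeneous Lipschitz activations (e.g.\ ReLU), are the operator bound $\|M_{\mathbf w}(x)\|\le\|x\|\prod_s\|w_s\|$ and, whenever $\|u_s\|\le\tfrac{1}{L}\|w_s\|$ for every $s$ of an $L$-layer MLP, the perturbation bound
\begin{equation*}
\|M_{\mathbf w+\mathbf u}(x)-M_{\mathbf w}(x)\|\;\le\;e\,\|x\|\Bigl(\prod_{s=1}^{L}\|w_s\|\Bigr)\sum_{s=1}^{L}\frac{\|u_s\|}{\|w_s\|},
\end{equation*}
which comes from a single application of $\prod_s(1+a_s)\le 1+e\sum_s a_s$ (valid when $\sum_s a_s\le 1$).

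\textbf{Step 1 (activation norms).} I will first establish, by induction on $t$, that for every node $v$ and every $G\in\mathcal{G}_{B,d}$,
\begin{equation*}
\|x_v^{(t)}\|\;\le\;B\prod_{k=0}^{t-1}\Bigl(\prod_s\|w(f^{(k+1)},s)\|\Bigr)\Bigl(1+d\prod_s\|w(g^{(k+1)},s)\|\Bigr).
\end{equation*}
The base case uses $\|x_v^{(0)}\|\le B$. For the step, the operator bound applied to $g^{(t+1)}$ yields each message norm at most $\|x_u^{(t)}\|\prod_s\|w(g^{(t+1)},s)\|$; summing over at most $d$ neighbors gives the aggregated message norm; combining with the self-term $x_v^{(t)}$ inside $f^{(t+1)}$ produces the factor $1+d\prod_s\|w(g^{(t+1)},s)\|$; and a final application of the operator bound to $f^{(t+1)}$ closes the induction.

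\textbf{Step 2 (perturbation propagation).} Setting $\Delta_v^{(t)}:=x_v^{(t),\mathbf w+\mathbf u}-x_v^{(t),\mathbf w}$, I split $\Delta_v^{(t+1)}$ via the triangle inequality into a \emph{new} contribution---the difference obtained by perturbing only layer $t+1$ while feeding it the unperturbed activations, which is bounded by the perturbation tool applied to $g^{(t+1)}$ and $f^{(t+1)}$ on the Step~1 activation norms and produces the $\sum_s\|u(f^{(t+1)},s)\|/\|w(f^{(t+1)},s)\|$ and $d\sum_s\|u(g^{(t+1)},s)\|/\|w(g^{(t+1)},s)\|$ terms of this layer---and a \emph{propagated} contribution, obtained by pushing $\Delta^{(t)}$ (and the at most $d$ neighbors' $\Delta$'s) through the perturbed layer $t+1$ via the operator bound, which contributes the layerwise Lipschitz factor $\prod_s\|w(f^{(t+1)},s)\|(1+d\prod_s\|w(g^{(t+1)},s)\|)$. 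Unrolling the recursion $T$ times telescopes the products in the statement and accumulates an additive double sum over $(t,s)$ of the relative perturbations. Finally, applying the operator bound to the classifier $c$ on the sum-pooled graph representation multiplies the whole expression by $\prod_s\|w(c,s)\|$, yielding the displayed inequality.

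\textbf{Main obstacle.} The delicate point is producing a single global $e$ rather than an $e^T$ that would arise from naively invoking the MLP perturbation tool at every message passing layer. The remedy is to keep the relative perturbations accumulating \emph{additively} throughout the induction---carrying them inside a bare $\prod_k(1+a_k)$---and to invoke $\prod_k(1+a_k)\le 1+e\sum_k a_k$ exactly once, at the very end. This step requires the aggregate relative perturbation $\sum_{t,s}\|u(\cdot,s)\|/\|w(\cdot,s)\|$ to be at most $1$, a condition met in the downstream use by the small-variance prior chosen in \Cref{thm:gin_gen_bound_final}. A secondary bookkeeping task is correctly tracking the factor $d$ from sum aggregation and the ``$+1$'' from the self-term $x_v^{(t)}$ so that the layerwise growth rate matches $\prod_s\|w(f^{(k+1)},s)\|(1+d\prod_s\|w(g^{(k+1)},s)\|)$ exactly.
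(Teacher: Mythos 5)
Your approach is essentially the same as the paper's: the same two nested inductions (the activation-norm bound of your Step~1 matches \Cref{lemma:max_norm_of_node_features}, and your split into propagated and new contributions in Step~2 is exactly the paper's $(A)+(B)$ decomposition and the unrolled recurrence $\delta_T\le\sum_t\bigl(\prod_k A_k\bigr)B_t$), and the same device for producing a single $e$: the paper uniformly caps each per-matrix relative perturbation at $1/D$ so that the accumulated factor is $(1+1/D)^D\le e$, which is the cumulative-exponent version of your $\prod_k(1+a_k)\le 1+e\sum_k a_k$ once $\sum_k a_k\le 1$.

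One genuine gap, however: in Step~3 you push the perturbation through the classifier $c$ with \emph{only} the operator bound, contributing the multiplicative factor $\prod_s\|w(c,s)\|$. But $\mathbf w$ includes $\{w(c,s)\}$ and correspondingly $\mathbf u$ perturbs $c$ as well, so the classifier also produces a \emph{new} perturbation. The paper's proof accordingly splits $\delta_{T+1}$ into a propagated term $(C)$ and a classifier-perturbation term $(D)$, the latter appending an additional summand $\sum_{s}\|u(c,s)\|/\|w(c,s)\|$ to the final double sum. (That summand is in fact dropped in the displayed inequality of the lemma statement, which looks like a typo; the derivation in the paper produces it, and the bound on $\sigma$ in \Cref{lemma:adding_pert_keeps_output_mpnn} relies on the version that carries it.) As written, your argument only applies when the classifier weights are left unperturbed, which is inconsistent with the downstream PAC-Bayes posterior used in \Cref{thm:gin_gen_bound_final}.
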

\begin{proof}
We denote by $x^{(t)}_v$ the representation of node $v$ after $t$ layers of the MPNN $\Tilde{h}_{\mathbf{w}}$ with standard weights $\mathbf{w}$ and $\tilde{x}^{(t)}_v$ is the representation of node $v$ after $t+1$ layers of the MPNN $\Tilde{h}_{\mathbf{w}+\mathbf{u}}$ with perturbed weights $\mathbf{w+u}$.  We similarly define the message terms ${m}_v^{(t)}$ and $\Tilde{m}_v^{(t)}$. We further define
    \begin{equation*}
        \delta_{t+1} \coloneqq \max_{v} \left\| 
        \tilde{x}^{(t+1)}_v-x^{(t+1)}_v
        \right\|.
    \end{equation*}

We calculate 
\begin{equation*}
\begin{aligned}
        \delta_{t+1} &= \max_{v} \left\| 
        \tilde{x}^{(t+1)}_v-x^{(t+1)}_v
        \right\| \\
        & = \max_v 
        \left\| 
        f_{w+u}^{(t+1)}( \tilde{x}_v^{(t)}, \tilde{m}_v^{(t+1)} ) -
        f_{w}^{(t+1)}( {x}_v^{(t)}, {m}_v^{(t+1)} )
        \right\| \\
        & \leq 
        \left\| 
        f_{w+u}^{(t+1)}( \tilde{x}_{v^*}^{(t)}, \tilde{m}_{v^*}^{(t+1)} ) -  f_{w+u}^{(t+1)}( {x}_{v^*}^{(t)}, {m}_{v^*}^{(t+1)} ) 
        \right\| 
        \\
        & \quad
        +
        \left\|
         f_{w+u}^{(t+1)}( {x}_{v^*}^{(t)}, {m}_{v^*}^{(t+1)} ) -
        f_{w}^{(t+1)}( {x}_{v^*}^{(t)}, {m}_{v^*}^{(t+1)} )
        \right\| \\
        & = (A) + (B),
\end{aligned}
\end{equation*}
where $v^*$ is the node where the maximum is taken.

\textbf{Bound $(A)$.}
We begin by bounding the first term $(A)$. It holds
\begin{equation}
    \begin{aligned}
        (A) & \leq L({f^{(t+1)}_{w+u}})\left(
            \|\tilde{x}_v^{(t)} - x_v^{(t)}\|
            + \| \tilde{m}_v^{(t)} - m_v^{(t)} \|
        \right) \\
        & \leq L({f^{(t+1)}_{w+u}})\left(
            \delta_{t}
            +   \left\| \sum_{\tilde{u} \in \mathcal{N}(v)}g_{w + u}^{(t+1)}(\tilde{x}^{(t)}_{\tilde{u}}) - g_{w}^{(t+1)}({x}_{\tilde{u}}^{(t)})\right\|
        \right) \\
        & \leq L({f^{(t+1)}_{w+u}})\left(
            \delta_{t}
            + d  \max_{u^* \in \mathcal{N}(v)} \left\| g_{w + u}^{(t+1)}(\tilde{x}^{(t)}_{u^*}) - g_{w}^{(t+1)}({x}_{u^*}^{(t)})\right\|
        \right) \\
        & \leq  L({f^{(t+1)}_{w+u}})\left(
            \delta_{t}
            + d \max_{u^* \in \mathcal{N}(v)} \left\| g_{w + u}^{(t+1)}(\tilde{x}^{(t)}_{u^*}) - g_{w}^{(t+1)}({x}_{u^*}^{(t)})\right\|
        \right) 
    \end{aligned}   
\end{equation}
We calculate, 
\begin{equation}
    \begin{aligned}
         & \| g_{w + u}^{(t+1)}(\tilde{x}^{(t)}_{u^*}) - g_{w}^{(t+1)}({x}_{u^*}^{(t)})\| \\
         & \leq \| g_{w + u}^{(t+1)}(\tilde{x}^{(t)}_{u^*}) - g_{w+u}^{(t+1)}({x}^{(t)}_{u^*})\| + \|g_{w+u}^{(t+1)}({x}^{(t)}_{u^*}) - g_{w}^{(t+1)}({x}_{u^*}^{(t)})\| \\
         & \leq \left( 1+ \frac{1}{\D} \right)^{d(g^{(t+1)})} \left( \prod_s \|w(g^{(t+1)}, s)\| \right) \max_{u^* \in \mathcal{N}(v)} \|x_{u^*}^{(t)}\| \sum_{s} \frac{\|u(g^{(t+1)},s)\|}{\|w(g^{(t+1)},s)\|} \\
         & + L(g^{(t+1)}_{w+u}) \max_{u^* \in \mathcal{N}(v)} \|\tilde{x}_{u^*}^{(t)} - x_{u^*}^{(t)} \|
    \end{aligned}   
\end{equation}
Hence, 
\begin{equation}
\begin{aligned}
    (A) & \leq  L({f^{(t+1)}_{w+u}})\Bigg(
            \delta_{t}
            + d  \left( 1+ \frac{1}{\D} \right)^{d(g^{(t+1)})} \left( \prod_s \|w(g^{(t+1)}, s)\| \right) \|x^{(t)}\| \sum_{s} \frac{\|u(g^{(t+1)},s)\|}{\|w(g^{(t+1)},s)\|} \\
         & + L(g^{(t+1)}_{w+u}) \delta_t \Bigg) 
\end{aligned}
\end{equation}

\textbf{Bound $(B)$.}
We continue by bounding the first term $(B)$. It holds
\begin{equation}
    \begin{aligned}
        (B) & \leq \left( 1+ \frac{1}{\D} \right)^{d(f^{(t+1)})} \left( \prod_s \|w(f^{(t+1)}, s)\| \right) \left( \|x_{v^*}^{(t)}\| +  \|m_{v^*}^{(t+1)}\| \right) \sum_{s} \frac{\|u(f^{(t+1)},s)\|}{\|w(f^{(t+1)},s)\|}.
    \end{aligned}   
\end{equation}

Together, we have
\begin{equation}
    \begin{aligned}
        \delta_{t+1} & \leq  L({f^{(t+1)}_{w+u}})\Bigg(
            \delta_{t}
            + d  \left( 1+ \frac{1}{\D} \right)^{d(g^{(t+1)})} \left( \prod_s \|w(g^{(t+1)}, s)\| \right)  \|x^{(t)}\| \sum_{s} \frac{\|u(g^{(t+1)},s)\|}{\|w(g^{(t+1)},s)\|} + L(g^{(t+1)}_{w+u}) \delta_t \Bigg) \\
         & + \left( 1+ \frac{1}{\D} \right)^{d(f^{(t+1)})} \left( \prod_s \|w(f^{(t+1)}, s)\| \right) \left( \|x_{v^*}^{(t)}\| +  \|m_{v^*}^{(t+1)}\| \right) \sum_{s} \frac{\|u(f^{(t+1)},s)\|}{\|w(f^{(t+1)},s)\|} \\
         & =   L({f^{(t+1)}_{w+u}})\Bigg(( 1 +L(g^{(t+1)}_{w+u}) ) \Bigg)\delta_t \\
         & +
         L({f^{(t+1)}_{w+u}})\Bigg(
         d  \left( 1+ \frac{1}{\D} \right)^{d(g^{(t+1)})} \left( \prod_s \|w(g^{(t+1)}, s)\| \right)  \|x^{(t)}\| \sum_{s} \frac{\|u(g^{(t+1)},s)\|}{\|w(g^{(t+1)},s)\|} \Bigg) \\
         & + \left( 1+ \frac{1}{\D} \right)^{d(f^{(t+1)})} \left( \prod_s \|w(f^{(t+1)}, s)\| \right) \left( \|x_{v^*}^{(t)}\| +  \|m_{v^*}^{(t+1)}\| \right) \sum_{s} \frac{\|u(f^{(t+1)},s)\|}{\|w(f^{(t+1)},s)\|}. 
    \end{aligned}
\end{equation}

We solve this recurrence relation to get
\begin{equation*}
    \delta_{T} \leq \sum_{t=0}^{T-1} 
\Biggl(\prod_{k=t+1}^{T-1} A_{k}\Biggr)\,B_{t},
\end{equation*}
where
\begin{equation}
A_{t} \;=\; 
L({f^{(t+1)}_{w+u}})
\!\Bigl(
  1 \;+\; L\!\bigl(g^{(t+1)}_{w+u}\bigr)
\Bigr),
\end{equation}
\begin{equation}
\begin{aligned}
B_{t} & \;=\;
L({f^{(t+1)}_{w+u}})
\Biggl(
   d\;
   \Bigl(1 + \tfrac{1}{\D}\Bigr)^{d(g^{(t+1)})}
   \,\left(\prod_{s}^{d(g^{(t+1)})}\!\|w\bigl(g^{(t+1)},s\bigr)\|\right)\;
   \\ & \cdot 
   \left( \prod_{k=1}^t L({f^{(k)}_w}) \left(
        1 + d L({g^{(k)}_w})
        \right)\right) B\;
   \sum_{s}\!\frac{\|u(g^{(t+1)},s)\|}{\|w(g^{(t+1)},s)\|}
\Biggr) \\
& \;+\;
\Bigl(
  1 + \tfrac{1}{\D}
\Bigr)^{d(f^{(t+1)})}
\;\left(\prod_{s}^{d(f^{(t+1)})}\!\|w\bigl(f^{(t+1)},s\bigr)\|\right) \\ & \cdot 
\;\Bigl((1 + d L({g^{(t+1)}_w})) \left( \prod_{k=1}^t L({f^{(k)}_w}) \left(
        1 + d L({g^{(k)}_w})
        \right)\right) B\Bigr)
\sum_{s}\!\frac{\|u(f^{(t+1)},s)\|}{\|w(f^{(t+1)},s)\|},
\end{aligned}
\end{equation}
where
\begin{equation}
    L(f^{(t+1)}_{w+u}) = 
\left( \prod_{s = 1}^{d(f^{(t+1)})} \|w(f^{(t)}, s) + u(f^{(t)}, s)\| \right),
\end{equation}
\begin{equation}
    L(g^{(t+1)}_{w+u}) = \left( \prod_{s = 1}^{d(g^{(t+1)})} \|w(g^{(t)}, s) + u(g^{(t)}, s)\| \right),
\end{equation}
and the product $\displaystyle \prod_{k=t+1}^{T-1} A_{k}$ is taken to be $1$ when $t = T$ (empty product).

\textbf{Final Step}
As a final step we need to incorporate the MLP classifier after the $T$ message passing layers. For this we calculate,
\begin{equation}
\begin{aligned}
            \delta_{T+1} & = \left\| f_{w+u} \left( \frac{1}{N} \sum_{v=1}^N  \Tilde{x}^{(T)}_v \right) - f_{w} \left(\frac{1}{N}  \sum_{v=1}^N  {x}^{(T)}_v\right)  \right\| \\
            & \leq \left\| f_{w+u} \left( \frac{1}{N}\sum_{v=1}^N  \Tilde{x}^{(T)}_v \right) - f_{w+u} \left( \frac{1}{N} \sum_{v=1}^N  {x}^{(T)}_v \right)\right\| \\
            & + \left\| f_{w+u} \left(\frac{1}{N} \sum_{i=1}^N  {x}^{(T)}_v \right)-f_{w} \left(\frac{1}{N} \sum_{v=1}^N  {x}^{(T)}_v \right)  \right\| \\
            & \leq \left( 1 + \frac{1}{\D}\right)^{d(f)} \left(\prod_{s=1}^{d(f)} \left\| w(c,s) \right\|_2 \right) \delta_T + \left(1+\frac{1}{\D}\right)^{d(f)} \left( \prod_{s=1}^{d(f)} \|w(c,s)\|_2 \right)  \left\|{x}^{(T)} \right\|_2  \sum_{i=1}^{d(f)}\frac{\|u(f,s)\|_2}{\|w(c,s)\|_2}  \\
            & \leq \left( 1 + \frac{1}{\D}\right)^{d(f)} \left(\prod_{s=1}^{d(f)} \left\| w(c, s) \right\|_2 \right) \sum_{t=0}^{T-1} 
\Biggl(\prod_{k=t+1}^{T-1} A_{k}\Biggr)\,B_{t} \\
            & + \left(1+\frac{1}{\D}\right)^{d(f)} \left( \prod_{s=1}^{d(f)} \|w(c, s)\|_2 \right)  \left( \prod_{t=1}^T L({f^{(t)}_w}) \left(
        1 + d L({g^{(t)}_w})
        \right)\right) B \sum_{i=1}^{d(f)}\frac{\|u(f,s)\|_2}{\|w(c,s)\|_2} \\
        & = (C) + (D)
        \end{aligned}
\end{equation}

For the term $(C)$, we calculate
\begin{equation}
    \begin{aligned}
        & \left( 1 + \frac{1}{\D}\right)^{d(c)} \left(\prod_{s=1}^{d(c)} \left\| w(c, s) \right\|_2 \right) \sum_{t=0}^{T-1} 
        \Biggl(\prod_{k=t+1}^{T-1} A_{k}\Biggr)\,B_{t} \\
        & \leq \left( 1 + \frac{1}{\D}\right)^{d(c)} \left(\prod_{s=1}^{d(c)} \left\| w(c, s) \right\|_2 \right) \sum_{t=0}^{T-1} 
        \Biggl(\prod_{k=t+1}^{T-1} L({f^{(k+1)}_{w+u}})
        \!\Bigl(
          1 \;+\; L\!\bigl(g^{(k+1)}_{w+u}\bigr)
        \Bigr), \Biggr)\,B_{t} \\
        & = \left( 1 + \frac{1}{\D}\right)^{d(c)} \left(\prod_{s=1}^{d(c)} \left\| w(c, s) \right\|_2 \right) \sum_{t=0}^{T-1} 
        \Biggl(\prod_{k=t+1}^{T-1} \left( 1 + \frac{1}{\D}\right)^{d(f^{(k+1)})} \left( \prod_{s}^{d(f^{(k+1)})} \|w(f^{(k+1)},s)\| \right) \\ &\cdot
        \!\Biggl(
          1 \;+\; \left( 1 + \frac{1}{\D}\right)^{d(g^{(k+1)})} \Bigl( \prod_{s}^{d(g^{(k+1)})} \|w(g^{(k+1)},s)\| \Bigr)
        \Biggr) \Biggr) \cdot B_{t} 
    \end{aligned}
\end{equation}
For $B_t$, we calculate 
\begin{equation}
    \begin{aligned}
        B_{t} & \;=\;
L({f^{(t+1)}_{w+u}})
\Biggl(
   d\;
   \Bigl(1 + \tfrac{1}{\D}\Bigr)^{d(g^{(t+1)})}
   \,\left(\prod_{s}^{d(g^{(t+1)})}\!\|w\bigl(g^{(t+1)},s\bigr)\|\right)\; \\ & \cdot 
   \left( \prod_{k=1}^t L({f^{(k)}_w}) \left(
        1 + d L({g^{(k)}_w})
        \right)\right) D\;
   \sum_{s}\!\frac{\|u(g^{(t+1)},s)\|}{\|w(g^{(t+1)},s)\|}
\Biggr) \\
& \;+\;
\Bigl(
  1 + \tfrac{1}{\D}
\Bigr)^{d(f^{(t+1)})}
\;\left(\prod_{s}^{d(f^{(t+1)})}\!\|w\bigl(f^{(t+1)},s\bigr)\|\right) \\ & \cdot 
\;\Bigl((1 + d L({g^{(t+1)}_w})) \left( \prod_{k=1}^t L({f^{(k)}_w}) \left(
        1 + d L({g^{(k)}_w})
        \right)\right) D\Bigr)
\sum_{s}\!\frac{\|u(f^{(t+1)},s)\|}{\|w(f^{(t+1)},s)\|} \\
& = \Bigl(
  1 + \tfrac{1}{\D}
\Bigr)^{d(f^{(t+1)})}
\;\left(\prod_{s}^{d(f^{(t+1)})}\!\|w\bigl(f^{(t+1)},s\bigr)\|\right) \left( \prod_{k=1}^t L({f^{(k)}_w}) \left(
        1 + d L({g^{(k)}_w})
        \right)\right) B \\
        & \cdot \left(
d\;
   \Bigl(1 + \tfrac{1}{\D}\Bigr)^{d(g^{(t+1)})}
   \,\left(\prod_{s}^{d(g^{(t+1)})}\!\|w\bigl(g^{(t+1)},s\bigr)\|\right) \sum_{s}\!\frac{\|u(g^{(t+1)},s)\|}{\|w(g^{(t+1)},s)\|} \right.
    \\ & \left. +   \left(
        1 + d \left(\prod_{s}^{d(f^{(t+1)})}\!\|w\bigl(f^{(t+1)},s\bigr)\|\right) 
        \right) \sum_{s}\!\frac{\|u(f^{(t+1)},s)\|}{\|w(f^{(t+1)},s)\|} \vphantom{\prod_{s}^{d(f^{(t+1)})}}
        \right) \\
    & \leq \Bigl(
  1 + \tfrac{1}{\D}
\Bigr)^{d(f^{(t+1)})}
\;\left(\prod_{s}^{d(f^{(t+1)})}\!\|w\bigl(f^{(t+1)},s\bigr)\|\right) \left( \prod_{k=1}^t L({f^{(k)}_w}) \left(
        1 + d L({g^{(k)}_w})
        \right)\right) \\
        & \cdot B \left( 1 + d\;
   \Bigl(1 + \tfrac{1}{\D}\Bigr)^{d(g^{(t+1)})} \left(\prod_{s}^{d(g^{(t+1)})}\!\|w\bigl(g^{(t+1)},s\bigr)\|\right) \right)  \cdot \left(
\sum_{s}\!\frac{\|u(g^{(t+1)},s)\|}{\|w(g^{(t+1)},s)\|}
   + \sum_{s}\!\frac{\|u(f^{(t+1)},s)\|}{\|w(f^{(t+1)},s)\|} 
        \right) \\
& \leq \Bigl(1 + \tfrac{1}{\D}\Bigr)^{d(f^{(t+1)})+d(g^{(t+1)})} \left( \prod_{k=1}^{t+1} L({f^{(k)}_w}) \left(
        1 + d L({g^{(k)}_w})
        \right)\right) B  \cdot \left(
\sum_{s}\!\frac{\|u(g^{(t+1)},s)\|}{\|w(g^{(t+1)},s)\|}
   + \sum_{s}\!\frac{\|u(f^{(t+1)},s)\|}{\|w(f^{(t+1)},s)\|} 
        \right),
    \end{aligned}
\end{equation}
since $L({g^{(t+1)}_w}) = \prod_{s}^{d(g^{(t+1)})}\!\|w\bigl(g^{(t+1)},s\bigr)\|$ and $L({f^{(t+1)}_w}) = \prod_{s}^{d(f^{(t+1)})}\!\|w\bigl(f^{(t+1)},s\bigr)\|$, respectively.

Together, we get
\begin{equation}
    \begin{aligned}
        (C) & \leq \left( 1 + \frac{1}{\D}\right)^{D} \left(\prod_{s=1}^{d(c)} \left\| w(c, s) \right\|_2 \right) \sum_{t=0}^{T-1} 
        \Biggl(\prod_{k=t+1}^{T-1} \left( \prod_{s}^{d(f^{(k+1)})} \|w(f^{(k+1)},s)\| \right)  \cdot
        \!\Biggl(
          1 \;+\; \Bigl( \prod_{s}^{d(g^{(k+1)})} \|w(g^{(k+1)},s)\| \Bigr)
        \Biggr) \Biggr) \\
        & \cdot  \Bigl(1 + \tfrac{1}{\D}\Bigr)^{d(f^{(t+1)})+d(g^{(t+1)})}
        \left( \prod_{k=1}^{t+1} L({f^{(k)}_w}) \left(
        1 + d L({g^{(k)}_w})
        \right)\right) B  \cdot \left(
\sum_{s}\!\frac{\|u(g^{(t+1)},s)\|}{\|w(g^{(t+1)},s)\|}
   + \sum_{s}\!\frac{\|u(f^{(t+1)},s)\|}{\|w(f^{(t+1)},s)\|} 
        \right) \\
        & \leq \left( 1 + \frac{1}{\D}\right)^{\D} \left(\prod_{s=1}^{d(c)} \left\| w(c, s) \right\|_2 \right) 
        \Biggl(\prod_{k=0}^{T-1} \left( \prod_{s}^{d(f^{(k+1)})} \|w(f^{(k+1)},s)\| \right)  \cdot
        \!\Biggl(
          1 \;+\; d \Bigl( \prod_{s}^{d(g^{(k+1)})} \|w(g^{(k+1)},s)\| \Bigr)
        \Biggr) \Biggr) B  \\
        &  \cdot \sum_{t=0}^{T-1} \left(
\sum_{s}\!\frac{\|u(g^{(t+1)},s)\|}{\|w(g^{(t+1)},s)\|}
   + \sum_{s}\!\frac{\|u(f^{(t+1)},s)\|}{\|w(f^{(t+1)},s)\|} 
        \right) \\
        & \leq e \left(\prod_{s=1}^{d(c)} \left\| w(c, s) \right\|_2 \right) 
        \Biggl(\prod_{k=0}^{T-1} \left( \prod_{s}^{d(f^{(k+1)})} \|w(f^{(k+1)},s)\| \right)  \cdot
        \!\Biggl(
          1 \;+\; d \Bigl( \prod_{s}^{d(g^{(k+1)})} \|w(g^{(k+1)},s)\| \Bigr)
        \Biggr) \Biggr) B  \\
        &  \cdot \sum_{t=0}^{T-1} \left(
\sum_{s}\!\frac{\|u(g^{(t+1)},s)\|}{\|w(g^{(t+1)},s)\|}
   + \sum_{s}\!\frac{\|u(f^{(t+1)},s)\|}{\|w(f^{(t+1)},s)\|} 
        \right)
    \end{aligned}
\end{equation}
where $\D = d(f) + \sum_{k=1}^T \left( d\left( f^{(k)} \right) + d\left( g^{(k)} \right) \right)$. Hence,
\begin{equation}
    \begin{aligned}
        \delta_{T+1} & \leq (C) + (D) \\
        &  e \left(\prod_{s=1}^{d(c)} \left\| w(c, s) \right\|_2 \right) 
        \Biggl(\prod_{k=0}^{T-1} \left( \prod_{s}^{d(f^{(k+1)})} \|w(f^{(k+1)},s)\| \right)  \cdot
        \!\Biggl(
          1 \;+\; d \Bigl( \prod_{s}^{d(g^{(k+1)})} \|w(g^{(k+1)},s)\| \Bigr)
        \Biggr) \Biggr) B  \\
        &  \cdot \sum_{t=0}^{T-1} \left(
\sum_{s}\!\frac{\|u(g^{(t+1)},s)\|}{\|w(g^{(t+1)},s)\|}
   + \sum_{s}\!\frac{\|u(f^{(t+1)},s)\|}{\|w(f^{(t+1)},s)\|}\right)
\\
& + \left(1+\frac{1}{\D}\right)^{d(f)} \left( \prod_{s=1}^{d(f)} \|w(c, s)\|_2 \right)  \left( \prod_{t=1}^T L({f^{(t)}_w}) \left(
        1 + d L({g^{(t)}_w})
        \right)\right) B \sum_{i=1}^{d(f)}\frac{\|u(f,s)\|_2}{\|w(c,s)\|_2} \\
        & \leq 
        e \left(\prod_{s=1}^{d(c)} \left\| w(c, s) \right\|_2 \right) 
        \Biggl(\prod_{k=0}^{T-1} \left( \prod_{s}^{d(f^{(k+1)})} \|w(f^{(k+1)},s)\| \right)  \cdot
        \!\Biggl(
          1 \;+\; d \Bigl( \prod_{s}^{d(g^{(k+1)})} \|w(g^{(k+1)},s)\| \Bigr)
        \Biggr) \Biggr) B  \\
        &  \cdot \left(
        \sum_{t=0}^{T-1} \left(
\sum_{s}\!\frac{\|u(g^{(t+1)},s)\|}{\|w(g^{(t+1)},s)\|}
   + \sum_{s}\!\frac{\|u(f^{(t+1)},s)\|}{\|w(f^{(t+1)},s)\|}\right) + \sum_{i=1}^{d(f)}\frac{\|u(f,s)\|_2}{\|w(c,s)\|_2} \right).
    \end{aligned}
\end{equation}
\end{proof}

\paragraph{Step 5: Derive sufficient conditions for \Cref{lemma:deterministic_bound_final}}
We proceed with the following lemma that gives sufficient conditions under which the conditions of 
\Cref{lemma:deterministic_bound_final} are satisfied.
The following lemma is the first result that is specific to the hypothesis space $\mathcal{H}_{\text{mpnn}}$.

\begin{lemma}
\label{lemma:adding_pert_keeps_output_mpnn}
Let $\tilde{h}$ be any classifier in $\mathcal{H}_{\text{mpnn}}$ with learnable weight matrices $\mathbf{w} = \left\{\{w(f^{(t)},s)\}_{s=1}^{d(f^{(t)})},\{w(g^{(t)},s)\}_{s=1}^{d(g^{(t)})},\{w(c,s)\}_{s=1}^{d(c)} \right\}$ and $\tilde{\beta} > 0$.
Consider random perturbations to $\mathbf{w}$ given by $\mathbf{u} = \left\{\{u(f^{(t)},s)\}^{d(f^{(t)})}_{s=1},\{u(g^{(t)},s)\}_{s=1}^{d(g^{(t)})},\{u(c,s)\}_{s=1}^{d(c)} \right\}$, where each perturbation follows an independent Gaussian distribution $\mathcal{N}(0, \sigma^2 I)$. Suppose the following conditions hold:
\begin{itemize}
    \item The variance of the perturbation satisfies
    \begin{equation}
    \label{eq:bound_on_sigma_mpnn}
    \sigma \leq \frac{\gamma}{e^2 B \, \bigl( \tilde{\beta}^{D-(\sum_{k=1}^T d(g^{(k)}))-1} + d^{T-1} \tilde{\beta}^{D-1} \bigr) \sqrt{2h \ln(4Dh)}}.
    \end{equation}
    \item All weights $w \in \mathbf{w}$ satisfy $\|w\|_2 = \beta$, with $|\tilde{\beta} - \beta| \leq \frac{\tilde{\beta}}{D}$.
\end{itemize}
Then, with respect to the random draw of $\mathbf{u}$,
\begin{equation*}
\Pr\left(\max_{G \in \mathcal{G}_{\mathrm{tr}} \cup \mathcal{G}_{\mathrm{te}}} \|\tilde{h}_{\mathbf{w}}(G) - \tilde{h}_{\mathbf{w}+\mathbf{u}}(G)\|_\infty < \frac{\gamma}{8}\right) \geq \frac{1}{2}.
\end{equation*}
\end{lemma}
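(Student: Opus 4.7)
The plan is to apply the deterministic perturbation bound of \Cref{lemma:mpnn_perturbation_bound} on a high-probability event that simultaneously controls the operator norms of all $D$ Gaussian perturbation matrices, and verify the resulting output deviation is below $\gamma/8$ using the hypothesis \Cref{eq:bound_on_sigma_mpnn}.

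First I would invoke a standard Gaussian operator-norm concentration inequality: for an $h \times h$ matrix with i.i.d.\ $\mathcal{N}(0,\sigma^2)$ entries, $\Pr(\|u\|_2 > \sigma\sqrt{2h\ln(4Dh)}) \leq 1/(2D)$. A union bound over the $D$ perturbation matrices in $\mathbf{u}$ yields, with probability at least $1/2$, an event on which $\|u\|_2 \leq \eta := \sigma\sqrt{2h\ln(4Dh)}$ for every $u \in \mathbf{u}$. Under \Cref{eq:bound_on_sigma_mpnn} one also verifies $\eta \leq \beta/D$, so that each relative perturbation satisfies $\|u\|_2/\|w\|_2 \leq \eta/\beta$, the sum of the $D$ such relative perturbations appearing in \Cref{lemma:mpnn_perturbation_bound} is at most $D\eta/\beta$, and every perturbed spectral norm satisfies $\|w+u\|_2 \leq \beta(1+1/D)$.

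Next I would bound the weight-norm products in the prefactor of \Cref{lemma:mpnn_perturbation_bound}. Using $\|w\|_2 = \beta$ the prefactor equals $e\,\beta^{d(c)}\prod_{k=0}^{T-1}\beta^{d(f^{(k+1)})}\bigl(1+d\,\beta^{d(g^{(k+1)})}\bigr)B$. I would expand $\prod_k(1 + d\,\beta^{d(g^{(k+1)})})$ and retain the two dominant regimes: the all-ones contribution yields $\beta^{d(c)+\sum_k d(f^{(k)})} = \beta^{D-\sum_k d(g^{(k)})}$, while the worst case in which each factor contributes $d\,\beta^{d(g^{(k+1)})}$ yields $d^T\beta^{D}$. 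The hypothesis $|\tilde\beta - \beta|\leq \tilde\beta/D$ together with $(1+1/D)^{D}\leq e$ converts each $\beta^{D-1}$ into $e\,\tilde\beta^{D-1}$, absorbing one factor of $e$. Multiplying by $D\eta/\beta$, which consumes one power of $\beta$, and substituting $\eta = \sigma\sqrt{2h\ln(4Dh)}$, yields a bound of order $e^2 B\bigl(\tilde\beta^{D-\sum d(g)-1} + d^{T-1}\tilde\beta^{D-1}\bigr)\sigma\sqrt{2h\ln(4Dh)}$, which \Cref{eq:bound_on_sigma_mpnn} forces to be at most $\gamma/8$.

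The main obstacle is the clean algebraic manipulation of $\prod_k (1 + d\,\beta^{d(g^{(k+1)})})$: the naive bound $\prod(1+x_k)\leq 2^T\max(1,\prod x_k)$ would leak an unwanted $2^T$ factor, and the extra factor of $D$ coming from the union bound and from the $D$-term relative-perturbation sum must be cancelled against the $1/\beta$ arising in that sum rather than retained as a loose constant. Careful separation of the two regimes (no $g$-factor vs.\ every $g$-factor activated) and absorption of polynomial factors of $D$ into the $e^2$ and the $\sqrt{2h\ln(4Dh)}$ are therefore required to match the precise form stated in \Cref{eq:bound_on_sigma_mpnn}; all other steps—Gaussian concentration, $(1+1/D)^D\leq e$, and the relative-perturbation bookkeeping—are routine.
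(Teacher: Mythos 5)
Your proposal follows the paper's proof verbatim in structure: apply the Gaussian operator-norm tail bound to each of the $D$ perturbation matrices, union-bound to obtain an event of probability $\geq 1/2$ on which every $\|u\|_2 \leq \sigma\sqrt{2b\ln(4Db)}$, substitute into \Cref{lemma:mpnn_perturbation_bound}, and then reduce the prefactor using $\|w\|_2=\beta$ and $|\beta-\tilde\beta|\leq\tilde\beta/D$ together with $(1+1/D)^D\leq e$. Your "two-regime" decomposition of $\prod_{k}(1+d\,\beta^{d(g^{(k+1)})})$ into the all-$g$-activated term and the no-$g$-activated term is exactly the split the paper performs. So the route is the same, and you correctly identify $\eta\leq\beta/D$ as the implicit hypothesis of \Cref{lemma:mpnn_perturbation_bound} that needs verifying (the paper defers it to Step 2 of the proof of \Cref{thm:gin_gen_bound}, where $\beta\geq(\gamma/2B)^{1/D}$ is enforced).

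However, the obstacle you flag is genuine, and neither you nor the paper actually clears it. Expanding $\prod_{k=0}^{T-1}(1+a_k)=\sum_{S\subseteq\{0,\dots,T-1\}}\prod_{k\in S}a_k$ has $2^T$ terms, so the two-term bound $\prod_k(1+a_k)\leq \prod_k a_k + 1$ used in the paper (and implicit in your "separation of the two regimes") is simply false in general: take $a_k\equiv 1$ and $T=2$ to get $4\not\leq 2$. A valid version requires keeping a factor of $2^T$, which would then have to appear in the denominator of \Cref{eq:bound_on_sigma_mpnn}. Likewise, the sum $\sum_{t}\bigl(\sum_s\|u(g^{(t+1)},s)\| + \sum_s\|u(f^{(t+1)},s)\|\bigr)+\sum_s\|u(c,s)\|$ has $D$ summands each bounded by $\sigma\sqrt{2b\ln(4Db)}$, but the paper drops the resulting factor of $D$ on the last line; your claim that the $D$ is "cancelled against the $1/\beta$" is not borne out by the paper's algebra either. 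Finally, the stated $\sigma$-bound in \Cref{eq:bound_on_sigma_mpnn} only forces the perturbation deviation to be at most $\gamma$, whereas the conclusion requires $<\gamma/8$ (the paper's own final display says $\leq\gamma/4$). These are all constant-factor slips that disappear inside the $\mathcal{O}(\cdot)$ of \Cref{thm:gin_gen_bound_final}, but the lemma as literally stated does not follow from \Cref{eq:bound_on_sigma_mpnn}; a corrected version needs an extra factor of order $2^T D$ (and an extra $8$) in the denominator of the $\sigma$-bound.
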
   

\begin{proof}
By $|\beta - \tilde{\beta}| \leq \frac{1}{D} \Tilde{\beta}$,
we get
\begin{equation*}
\frac{1}{e} \beta^{D-1} \leq \tilde{\beta}^{D-1} \leq e \beta^{D-1}.
\end{equation*}

We can bound the spectral norm of each perturbation matrix $u \in \mathbf{u}$, by \cite{10.1561/2200000048}, as follows:
\begin{equation*}
\Pr\left( \| u \|_2 > t \right) \leq 2b \exp\left( -\frac{t^2}{2b\sigma^2} \right),
\end{equation*}
where $b$ represents the hidden dimension. Applying a union bound over all layers, we obtain that with probability at least $1/2$, the spectral norm of each perturbation $u \in \mathbf{u}$ is bounded by
\begin{equation*}
\sigma \sqrt{2b \ln\left( 4Db \right)}.
\end{equation*}
Substituting this spectral norm bound into \cref{lemma:mpnn_perturbation_bound}, we have with probability at least $1/2$,
\begin{equation*}
\begin{aligned}
& \max_{G \in \mathcal{G}_{\mathrm{tr}} \cup \mathcal{G}_{\mathrm{te}}} | \Tilde{h}_{\mathbf{w} + \mathbf{u}}(G) - \Tilde{h}_{\mathbf{w}}(G) | \\
& \leq e \left(\prod_{s=1}^{d(c)} \left\| w(c, s) \right\|_2 \right) 
        \prod_{k=0}^{T-1} \left( \prod_{s}^{d(f^{(k+1)})} \|w(f^{(k+1)},s)\| \right)  \cdot
        \!\Biggl(
          1 \;+\; d \prod_{s}^{d(g^{(k+1)})} \|w(g^{(k+1)},s)\| 
        \Biggr)  B  \\
        &  \cdot \left(\sum_{t=0}^{T-1} \left(
\sum_{s}\!\frac{\|u(g^{(t+1)},s)\|}{\|w(g^{(t+1)},s)\|}
   + \sum_{s}\!\frac{\|u(f^{(t+1)},s)\|}{\|w(f^{(t+1)},s)\|} 
        \right) + \sum_{i=1}^{d(f)}\frac{\|u(f,s)\|_2}{\|w(c,s)\|_2} \right) \\
        & \leq e \left(\prod_{s=1}^{d(c)} \left\| w(c, s) \right\|_2 \right) 
        \prod_{k=0}^{T-1} \left( \prod_{s}^{d(f^{(k+1)})} \|w(f^{(k+1)},s)\| \right)  \cdot
         d \prod_{s}^{d(g^{(k+1)})} \|w(g^{(k+1)},s)\| 
      B  \\
        & \cdot \left(\sum_{t=0}^{T-1} \left(
\sum_{s}\!\frac{\|u(g^{(t+1)},s)\|}{\|w(g^{(t+1)},s)\|}
   + \sum_{s}\!\frac{\|u(f^{(t+1)},s)\|}{\|w(f^{(t+1)},s)\|} 
        \right) + \sum_{i=1}^{d(f)}\frac{\|u(f,s)\|_2}{\|w(c,s)\|_2} \right) \\
        & + e \left(\prod_{s=1}^{d(c)} \left\| w(c, s) \right\|_2 \right) 
        \prod_{k=0}^{T-1} \left( \prod_{s}^{d(f^{(k+1)})} \|w(f^{(k+1)},s)\| \right)  
      B  \\
        &  \cdot \left(\sum_{t=0}^{T-1} \left(
\sum_{s}\!\frac{\|u(g^{(t+1)},s)\|}{\|w(g^{(t+1)},s)\|}
   + \sum_{s}\!\frac{\|u(f^{(t+1)},s)\|}{\|w(f^{(t+1)},s)\|} 
        \right) + \sum_{i=1}^{d(f)}\frac{\|u(f,s)\|_2}{\|w(c,s)\|_2} \right) \\
        & = e \beta^{\D-1} d^{T-1} B \cdot \left( \sum_{t=0}^{T-1} \left(
\sum_{s}\!{\|u(g^{(t+1)},s)\|}
   + \sum_{s}\!{\|u(f^{(t+1)},s)\|}\right) +\sum_{s=1}^{d(f)} \|u(f,s)\| \right) \\
   & + e \beta^{\D-(\sum_{k=1}^T d(g^{(k)}))-1}  B \cdot\left( \sum_{t=0}^{T-1} \left(
\sum_{s}\!{\|u(g^{(t+1)},s)\|}
   + \sum_{s}\!{\|u(f^{(t+1)},s)\|}\right) + \sum_{s=1}^{d(f)} \|u(f,s)\|  \right)\\
   & \leq e^2 \tilde{\beta}^{\D-1} d^{T-1} B \cdot \left( \sum_{t=0}^{T-1} \left(
\sum_{s}\!{\|u(g^{(t+1)},s)\|}
   + \sum_{s}\!{\|u(f^{(t+1)},s)\|}\right) + \sum_{s=1}^{d(f)} \|u(f,s)\| \right) \\
   & + e^2 \tilde{\beta}^{\D-(\sum_{k=1}^T d(g^{(k)}))-1}  B \cdot \left( \sum_{t=0}^{T-1} \left(
\sum_{s}\!{\|u(g^{(t+1)},s)\|}
   + \sum_{s}\!{\|u(f^{(t+1)},s)\|}\right) + \sum_{s=1}^{d(f)} \|u(f,s)\| \right) \\
   & = e^2 \left( \tilde{\beta}^{\D-(\sum_{k=1}^T d(g^{(k)}))-1} + d^{T-1} \tilde{\beta}^{\D-1}   \right)  B \cdot \left( \sum_{t=0}^{T-1} \left(
\sum_{s}\!{\|u(g^{(t+1)},s)\|}
   + \sum_{s}\!{\|u(f^{(t+1)},s)\|}\right) + \sum_{s=1}^{d(f)} \|u(f,s)\| \right)
\\
   & \leq e^2 \left( \tilde{\beta}^{\D-(\sum_{k=1}^T d(g^{(k)}))-1} + d^{T-1} \tilde{\beta}^{\D-1}   \right)  B  \sigma \sqrt{2b \ln\left( 4\D b \right)} \\
   & \leq \frac{\gamma}{4},
\end{aligned}
\end{equation*}
where for the last inequality we used \Cref{{eq:bound_on_sigma_mpnn}}.
\end{proof}

\paragraph{Step 6: Putting everything together}
We finish this section by reformulating and proving \Cref{thm:gin_gen_bound_final}.

\begin{theorem}
\label{thm:gin_gen_bound}
    Let $\tilde{h}$ be any classifier in $\mathcal{H}_{\text{mpnn}}$ with parameters
$\{w_i\}_{i=1}^D$. For any  $\gamma \geq 0$, $\alpha \geq 1/4$ and large enough $N_{\mathrm{tr}}$, with
probability at least $1 - \delta$ over the sample of $y_{\mathrm{tr}}$, we have
\begin{equation}
\begin{aligned}
\mathcal{L}_{\mathrm{te}}^0(\Tilde{h}) - \hat{\mathcal{L}}_{\mathrm{tr}}^\gamma(\Tilde{h}) \leq \mathcal{O}\left(
\frac{b \sum_i \|w_i\|_F^2}{N_{\mathrm{tr}}^{2 \alpha} \left(\gamma/8\right)^{2/D}} \xi^{2/D}   +   \frac{1}{N_{\mathrm{tr}}^{2 \alpha}} h^2\ln2 h \frac{ D C \left( 2dB \right)^{1/D}}{\gamma^{1/D}\delta}   +  \frac{1}{N_{\mathrm{tr}}^{1-2\alpha}} + CK\xi
\right).
\end{aligned}
\end{equation}
\end{theorem}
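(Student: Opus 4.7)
The plan is to combine the preliminary results \Cref{lemma:deterministic_bound_final}, \Cref{lemma:Bound_D_tr_te}, and \Cref{lemma:adding_pert_keeps_output_mpnn} into a single PAC-Bayes bound, following the template of \citet{neyshabur2018a} adapted to the correlated-label setting of \citet{ma2021subgroup}. I would fix a reference value $\tilde\beta$ for the maximum spectral norm across all learnable matrices, take the prior $P$ to be $\mathcal{N}(0,\sigma^2 I)$ on the vectorized weights, and take the posterior $Q = \mathcal{N}(\mathbf{w},\sigma^2 I)$ centered at the trained weights of $\tilde h$, so that $D_{\mathrm{KL}}(Q\|P) = \tfrac{1}{2\sigma^2}\sum_i \|w_i\|_F^2$.

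The variance $\sigma^2$ is then chosen to satisfy simultaneously \eqref{eq:bound_on_sigma_mpnn} and the premise of \Cref{lemma:Bound_D_tr_te}: roughly $\sigma^2 \asymp (\gamma/(8\xi))^{2/D}/\bigl(b(\lambda N_{\mathrm{tr}}^{-\alpha} + \ln(2bD))\bigr)$, divided by the relevant product of spectral norms appearing in \eqref{eq:bound_on_sigma_mpnn}. This single choice activates both auxiliary lemmas: \Cref{lemma:adding_pert_keeps_output_mpnn} certifies that with probability at least $1/2$ under $Q$ the perturbed network stays within $\gamma/8$ of $\tilde h$ uniformly over $\mathcal{G}_{\mathrm{tr}}\cup\mathcal{G}_{\mathrm{te}}$, and \Cref{lemma:Bound_D_tr_te} certifies $D_{\mathrm{te},\mathrm{tr}}^{\gamma/2}(P;\lambda)\leq \ln 3 + \lambda CK\xi$. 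Plugging both into \Cref{lemma:deterministic_bound_final} and setting $\lambda = N_{\mathrm{tr}}^{2\alpha}$ separates the right-hand side into the four claimed summands: $D_{\mathrm{KL}}(Q\|P)/\lambda$, after substituting $\sigma^{-2}$, collapses into the capacity term $b\sum_i\|w_i\|_F^2 \xi^{2/D}/(N_{\mathrm{tr}}^{2\alpha}(\gamma/8)^{2/D})$; the $\ln(1/\delta)/\lambda$ and $\lambda/(4N_{\mathrm{tr}})$ pieces contribute the $1/N_{\mathrm{tr}}^{2\alpha}$ regularization term and the $1/N_{\mathrm{tr}}^{1-2\alpha}$ residual; and $D_{\mathrm{te},\mathrm{tr}}^{\gamma/2}(P;\lambda)/\lambda$ leaves the $CK\xi$ structural-similarity term.

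The principal obstacle is that \Cref{lemma:adding_pert_keeps_output_mpnn} assumes all layer norms lie in a multiplicative $1\pm 1/D$ window around the common reference $\tilde\beta$, whereas a trained $\mathcal{H}_\zeta$-MPNN has heterogeneous norms. I would address this with the standard cover-and-union-bound device: construct a geometric grid of $\tilde\beta$ values spanning $[1/(2dB),\,\text{max admissible}]$ of size $\mathcal{O}\bigl(D\ln(2bDC(2dB)^{1/D})\bigr)$, apply the deterministic bound separately at each grid point with failure budget $\delta/|\mathrm{grid}|$, and observe that every trained model whose bound is non-vacuous has its layer-wise maximum norm within a $(1+1/D)$ factor of some grid element (models outside the grid satisfy the bound trivially because $\xi\tilde\beta^D\gg\gamma$). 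Propagating the resulting logarithmic factor through the $\ln(1/\delta)/\lambda$ contribution yields the $b^2\ln(2bDC(2dB)^{1/D})/(\gamma^{1/D}\delta N_{\mathrm{tr}}^{2\alpha})$ summand; the remaining products of spectral norms get absorbed into $C$ via the theorem's spectral-norm bound, and the $d^{T-1}$ factor from \eqref{eq:bound_on_sigma_mpnn} enters through the $(2dB)^{1/D}$ dependence, finishing the derivation.
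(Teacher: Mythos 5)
Your proposal matches the paper's proof essentially step for step: the same PAC-Bayes construction with Gaussian prior $P=\mathcal{N}(0,\sigma^2 I)$ and posterior $Q=\mathcal{N}(\mathbf{w},\sigma^2 I)$, the same three auxiliary lemmas (\Cref{lemma:deterministic_bound_final}, \Cref{lemma:Bound_D_tr_te}, \Cref{lemma:adding_pert_keeps_output_mpnn}), the same choice $\lambda=N_{\mathrm{tr}}^{2\alpha}$, the same four-term decomposition, and the same covering-grid-plus-union-bound device (Step 2 of the paper's proof) to handle heterogeneous layer norms and produce the $b^2\ln(\cdot)$ factor. One small imprecision: the paper takes $\sigma$ to be the \emph{minimum} of the two admissible expressions and then argues that for $N_{\mathrm{tr}}$ large enough the first one (with no spectral-norm product) is the binding constraint, so $D_{\mathrm{KL}}(Q\|P)/\lambda$ collapses to the stated capacity term; describing $\sigma^2$ as the first expression ``divided by the spectral-norm product'' would give a smaller variance and hence a looser $D_{\mathrm{KL}}$, so the min-plus-asymptotic-dominance argument is needed to recover the exact claimed rate.
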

    \begin{proof}
    The proof follows the proof of Theorem 1 in \citep{neyshabur2018a} Theorem 6 in \citep{ma2021subgroup}. 
    
    There are two main steps in the proof. In the first step, for a given constant $\beta > 0$, we first define the "prior" $P$ and the "posterior" $Q$ on $\mathcal{H}$ in a way complying with conditions in \Cref{lemma:deterministic_bound_final}, \Cref{{lemma:Bound_D_tr_te}}, and \Cref{lemma:adding_pert_keeps_output_mpnn}. Without loss of generality (due to the homogeneity of the activation function), we can assume that $\|w_i^{(j)}\|_2 = \beta$ for some $\beta \geq 0$. Then, for all classifiers with parameters satisfying $|\beta - \Tilde{\beta}| \leq \frac{\Tilde{\beta}}{D}$ and $\Tilde{\beta}$ being some value on a predefined grid in the parameters space, we can derive a generalization bound by applying \Cref{lemma:deterministic_bound_final}, \Cref{{lemma:Bound_D_tr_te}}, and \Cref{lemma:adding_pert_keeps_output_mpnn}.

In the second step, we investigate the number of $\Tilde{\beta}$ we need to cover all possible relevant classifier parameters and apply a union bound to get the final bound. The two steps are essentially the same as \cite{neyshabur2018a} with the first step differing by the need for incorporating \Cref{lemma:deterministic_bound_final}.

\textbf{Step 1.}
We first show the first step. Given a choice of $\Tilde{\beta}$ independent of the training data, let
\begin{equation}
\sigma = \min\left(\frac{\left(\gamma/8 \xi\right)^{1/D}}{\sqrt{2b\left(\lambda N_{\mathrm{tr}}^{-\alpha} + \ln 2bD\right)}},\quad 
 \frac{\gamma}{e^2 B \, \bigl( \tilde{\beta}^{D-(\sum_{k=1}^T d(g^{(k)}))-1} + d^{T-1} \tilde{\beta}^{D-1} \bigr) \sqrt{2h \ln(4Dh)}} \right).
\end{equation}
Assume the "prior" $P$ on $\mathcal{H}$ is defined by sampling the vectorized MLP parameters from $\mathcal{N}(0, \sigma^2 I)$. The "posterior" $Q$ on $\mathcal{H}$ is defined by first sampling a set of random perturbations $\{u_i \}_{i=1}^D$ and then adding them to $\{ w_i \}_{i=1}^D$. By \cref{lemma:adding_pert_keeps_output_mpnn}, we have
\begin{equation}
\Pr\left(\max_{G \in \mathcal{G}_{\mathrm{tr}} \cup \mathcal{G}_{\mathrm{te}}} \|\tilde{h}_{\mathbf{w}}(G) - \tilde{h}_{\mathbf{w}+\mathbf{u}}(G)\|_\infty < \frac{\gamma}{8}\right) \geq \frac{1}{2}.
\end{equation}
Therefore, by applying \Cref{lemma:deterministic_bound_final}, we get with probability at least $1-\delta$,
\begin{equation} 
\begin{aligned}
\mathcal{L}_{\mathrm{te}}^0(\Tilde{h}) - \hat{\mathcal{L}}_{\mathrm{tr}}^{\gamma}(\Tilde{h})  
& \leq \frac{1}{\lambda} \left(  2\left( D_{\mathrm{KL}}(Q||P) + 1\right) +  \ln \frac{1}{\delta} +  \frac{\lambda^2}{4N_{\mathrm{tr}}} + D_{\mathrm{te}, \mathrm{tr}}^{\gamma/2}(P; \lambda)\right) \\
& \leq \frac{1}{\lambda} \left(  2 \left( D_{\mathrm{KL}}(Q||P) + 1\right) +  \ln \frac{1}{\delta} +  \frac{\lambda^2}{4N_{\mathrm{tr}}} + \left(\ln   3 + \lambda CK\xi\right) \right) \\ 
& \leq \frac{2}{N_{\mathrm{tr}}^{2 \alpha}}   D_{\mathrm{KL}}(Q||P) +   \frac{1}{N_{\mathrm{tr}}^{2 \alpha}}  \ln \frac{1}{\delta}+  \frac{1}{4N_{\mathrm{tr}}^{1-2\alpha}} + \frac{2+ \ln   3}{N_{\mathrm{tr}}^{2 \alpha}}   + CK\xi.
\end{aligned}
\end{equation}
where we chose $\lambda = N_{\text{tr}}^{2 \alpha}$.

Moreover, since both $P$ and $Q$ are normal distributions, we know that
\begin{equation}
D_{\mathrm{KL}}(Q||P) \leq \sum_i \frac{\|w_i\|_2^2}{2\sigma^2}.
\end{equation}
Per assumption, both $B$ and $D$ are constant with respect to $N_{\mathrm{tr}}$. Hence, for large enough $N_{\mathrm{tr}}$, 
\begin{equation}
\frac{\left(\gamma/8 \xi\right)^{1/D}}{\sqrt{2b\left(\lambda N_{\mathrm{tr}}^{-\alpha} + \ln 2bD\right)}} < \frac{\gamma}{4(e^2 + 1)e^2 \, d^J B \, {\beta}^{D - 1} \sqrt{2h \ln\left( 4Dh \right)}},
\end{equation}
which implies,
\begin{equation}
\sigma = \frac{\left(\gamma/8 \xi\right)^{1/D}}{\sqrt{2b\left(\lambda N_{\mathrm{tr}}^{-\alpha} + \ln 2bD\right)}}
\end{equation}
and hence
\begin{equation}
D_{\mathrm{KL}}(Q||P) \leq 
\frac{b\left(N_{\mathrm{tr}}^{ \alpha} + \ln 2bD\right) \sum_i \|w_i\|^2_2}{\left(\gamma/8\right)^{2/D}} \xi^{2/D}.
\end{equation}
Therefore, with probability at least $1-\delta$,
\begin{equation}
\begin{aligned}
\mathcal{L}_{\mathrm{te}}^0(\Tilde{h}) - \mathcal{L}_{\mathrm{tr}}^\gamma(\Tilde{h}) & \leq \frac{2}{N_{\mathrm{tr}}^{2 \alpha}}   D_{\mathrm{KL}}(Q||P) +   \frac{1}{N_{\mathrm{tr}}^{2 \alpha}}  \ln \frac{1}{\delta}+  \frac{1}{4N_{\mathrm{tr}}^{1-2\alpha}} + \frac{2+ \ln   3}{N_{\mathrm{tr}}^{2 \alpha}}   + CK\xi \\
& \leq \mathcal{O}\left(
\frac{b \sum_i \|w_i\|_2^2}{N_{\mathrm{tr}}^{ \alpha} \left(\gamma/8\right)^{2/D}} \xi^{2/D}   +   \frac{1}{N_{\mathrm{tr}}^{2 \alpha}} \ln\frac{1}{\delta}  +  \frac{1}{N_{\mathrm{tr}}^{1-2\alpha}} +  CK\xi 
\right).
\end{aligned}
\end{equation}

\textbf{Step 2.}
Then we show the second step, i.e., finding out the number of $\Tilde{\beta}$ we need to cover all possible relevant classifier parameters. Similarly as \cite{neyshabur2018a}, we will show that we only need to consider $\left(\frac{\gamma}{2B}\right)^{1/D} \leq {\beta} \leq C$ (recall that the spectral norm of all weight matrices is bounded by $C$). If ${\beta} <\left(\frac{\gamma}{2B}\right)^{1/D}$, then for any graph $G \in \mathcal{G}_{\mathrm{tr}} \cup \mathcal{G}_{\mathrm{te}}$, we get $\|\Tilde{h}(G)\|_\infty \leq \frac{\gamma}{2}$, which implies that the bound trivially holds.
Since we only consider $\beta$ in the above range, a sufficient condition to make $|\beta-\Tilde{\beta}| \leq \frac{{\beta}}{D}$ hold would be $|\beta-\Tilde{\beta}| \leq \frac{1}{D} \left(\frac{\gamma}{2B}\right)^{1/D}$. Therefore, it suffices to find a covering for all possible weight matrices with radius $\frac{1}{D} \left(\frac{\gamma}{2B}\right)^{1/D}$ for a ball in $\mathbb{R}^{b \times b}$ of radius $C$. This can be satisfied by $\left(2 \frac{DC b\left(2B\right)^{1/D}}{\gamma^{1/D}}\right)^{b^2}$ balls. Taking a union bound, we get, with probability at least $1-\delta$,
\begin{equation}
\begin{aligned}
\mathcal{L}_{\mathrm{te}}^0(\Tilde{h}) - \mathcal{L}_{\mathrm{tr}}^\gamma(\Tilde{h}) \leq \mathcal{O}\left(
\frac{b \sum_j\sum_i \|w_i\|_2^2}{N_{\mathrm{tr}}^{2 \alpha} \left(\gamma/8\right)^{2/D}} \xi^{2/D}   +   \frac{1}{N_{\mathrm{tr}}^{2 \alpha}} b^2\ln2 b \frac{ D C \left( 2B \right)^{1/D}}{\gamma^{1/D}\delta}   +  \frac{1}{N_{\mathrm{tr}}^{1-2\alpha}} + CK\xi
\right).
\end{aligned}
\end{equation}
\end{proof}

\section{Fixed Graph Encoders}
\label{app:fixed_encoders}

Next, we apply our analysis to GNNs with fixed encoders. Here, the embedding function $ e: \mathcal{G} \to \mathbb{R}^b $ is fixed, and only the classifier $ c: \mathbb{R}^b \to \mathbb{R}^K $ is trainable. Let $ \mathcal{H} = \mathcal{C} \circ \mathcal{E} $, where $ \mathcal{E} $ now represents a fixed graph embedding network. The generalization bound in this setting simplifies to the following.
\begin{corollary}
\label{cor:mainthm_fixed_encoder}
Let $\mathrm{lat}$ be any pseudometric in the latent space $\mathbb{R}^d$.
Under mild assumptions (see \Cref{sec:graph_classification_setting}), for any $\gamma > 0$ and $0 < \alpha < \frac{1}{4}$, with probability at least $1 - \delta$ over the sample of training labels $y_{\mathrm{tr}}$, the test loss of any classifier $\tilde{h} \in \mathcal{H}$
    \begin{equation}
\mathcal{L}_{\mathrm{te}}^0(\tilde{h}) 
\;\;\le\;\; 
\widehat{\mathcal{L}}_{\mathrm{tr}}^\gamma(\tilde{h})
\;+\;
\mathcal{O}\!\Bigl(
    \underbrace{\tfrac{\mathrm{MC}(\mathcal{C})}{N_{\text{tr}}^{2\alpha}\,\gamma^{1/D}\,\delta}}_{\text{complexity term}}
    \;+\;
    \underbrace{C \,\xi_\mathrm{lat}}_{\substack{\text{structural}\\\text{similarity term}}}
\Bigr),
\end{equation}
where $\xi_{\mathrm{lat}} = \max_{G \in \mathcal{G}_{\text{te}}}  \min_{H \in \mathcal{G}_{\text{tr}}} \mathrm{lat}\left( g(G), g(H) \right)$ and the other constants are defined in \Cref{thm:gin_gen_bound_final}.
\end{corollary}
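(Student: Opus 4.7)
The plan is to run the PAC-Bayes pipeline of Appendix~\ref{sec:appendix_proofs} almost verbatim, but with the MLP classifier $c$ in place of the $\zeta$-MPNN and with the pseudometric lifted from the latent space to $\mathcal{G}$ via the fixed encoder. Concretely, define $\pd(G,H) := \mathrm{lat}(e(G),e(H))$: because $e$ is fixed and deterministic, this is a pseudometric on $\mathcal{G}$, and the test/train distance reduces to $\xi_{\pd} = \xi_{\mathrm{lat}}$. Crucially, Lemmas~\ref{lemma:deterministic_bound_final}, \ref{lemma:bound_last_term_final}, and \ref{lemma:Bound_D_tr_te} in Appendix~\ref{app:proof_prelim} are stated for a generic hypothesis class $\mathcal{H}$ and a generic pseudometric, so they apply as-is once we verify the Lipschitz hypothesis of Lemma~\ref{lemma:bound_last_term_final}: for $\tilde{h} = c \circ e$, the estimate $\|\tilde{h}(G)-\tilde{h}(H)\|_\infty \le L\,\pd(G,H)$ follows with $L = L_c$ provided $c$ is $L_c$-Lipschitz with respect to $\mathrm{lat}$, which holds for MLPs whose weight matrices have bounded spectral norm.

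Next, I would prove the MLP analogue of Lemma~\ref{lemma:mpnn_perturbation_bound}. This is much simpler than the MPNN case because there is no message-passing telescoping: a standard layer-by-layer expansion for $c$ with homogeneous activations gives
\[
  \max_{G}\,\|c_{\mathbf{w}+\mathbf{u}}(e(G)) - c_{\mathbf{w}}(e(G))\|_\infty \;\le\; e\,B_e\,\Bigl(\prod_{s=1}^{d(c)}\|w(c,s)\|\Bigr)\sum_{s=1}^{d(c)}\frac{\|u(c,s)\|}{\|w(c,s)\|},
\]
where $B_e := \max_{G \in \mathcal{G}_{\mathrm{tr}}\cup\mathcal{G}_{\mathrm{te}}}\|e(G)\|$ is a constant depending only on the fixed encoder. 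Combining with the Gaussian spectral-norm tail bound and a union bound over the $d(c)$ layers yields the analogue of Lemma~\ref{lemma:adding_pert_keeps_output_mpnn}: taking $\sigma$ of order $(\gamma/\xi_{\mathrm{lat}})^{1/d(c)}/\sqrt{b\ln(b\,d(c))}$ ensures the posterior-concentration condition $\Pr(\max_G\|\tilde{h}_{\mathbf{w}+\mathbf{u}}(G)-\tilde{h}_{\mathbf{w}}(G)\|_\infty < \gamma/8) \ge 1/2$ required to invoke Lemma~\ref{lemma:deterministic_bound_final}.

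The remainder then mirrors Theorem~\ref{thm:gin_gen_bound}: plug the KL bound $D_{\mathrm{KL}}(Q\|P) \le \sum_s\|w(c,s)\|_2^2/(2\sigma^2)$ into the deterministic bound, apply Lemma~\ref{lemma:Bound_D_tr_te} to control $D^{\gamma/2}_{\mathrm{tr},\mathrm{te}}(P;\lambda)$ (which contributes the $C K \xi_{\mathrm{lat}}$ term), and close with a covering of the spectral-norm ball at radius $(1/D)(\gamma/(2B_e))^{1/D}$ over the $d(c)$ MLP weight matrices. All MPNN-specific terms (the $d^{T-1}$ factor from Lemma~\ref{lemma:adding_pert_keeps_output_mpnn} and the sums over $g^{(t)}, f^{(t)}$ weights) disappear, leaving only $\mathrm{MC}(\mathcal{C})$ in the complexity term.

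The main obstacle is purely bookkeeping: checking that the fixed encoder does not smuggle graph-level complexity (maximum degree $d$, motif counts, $k$-tuple blow-up) into the bound, and that the Lipschitz correlation assumption $\Pr(y=k\mid G) = \eta_k(G)$ on $\mathcal{G}$ is consistent with Lipschitzness of $\eta_k$ with respect to the lifted pseudometric $\pd$. Both resolve immediately once the pseudometric is lifted through $e$, since every encoder-dependent quantity collapses either into the constant $B_e$ (absorbed by the spectral-norm bound $C$) or into the geometry already measured by $\xi_{\mathrm{lat}}$.
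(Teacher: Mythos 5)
Your proposal is correct and takes essentially the same approach as the paper: the paper's own proof is a single paragraph that says to rerun the proof of \Cref{thm:gin_gen_bound} with the MPNN message-passing layers stripped out (so that \Cref{lemma:mpnn_perturbation_bound} reduces to the standard Neyshabur-style MLP perturbation bound), reusing \Cref{lemma:deterministic_bound_final}, \Cref{lemma:bound_last_term_final}, and \Cref{lemma:Bound_D_tr_te} directly and adjusting the prior variance accordingly — which is exactly the pipeline you lay out. One small bookkeeping detail worth flagging: the paper takes $\sigma$ as the \emph{minimum} of two quantities (one ensuring the posterior-concentration condition for \Cref{lemma:deterministic_bound_final}, the other the variance hypothesis of \Cref{lemma:Bound_D_tr_te}); you only spell out the first, though you do later invoke \Cref{lemma:Bound_D_tr_te}, so the implicit second constraint is there — just make the $\min$ explicit when writing this out.
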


\begin{proof}
    The proof follows the same steps as in the proof of \Cref{thm:gin_gen_bound}. For example, \Cref{lemma:mpnn_perturbation_bound} can be shown for MLPs by simply assuming that there are no MPNN layers in \Cref{lemma:mpnn_perturbation_bound}.  \Cref{thm:gin_gen_bound} can then be proved for the hypothesis space $\mathcal{H}$ by defining the variance as 
    \begin{equation*}
    \sigma = \min\left(\frac{\left(\gamma/8 \xi\right)^{1/D}}{\sqrt{2b\left(\lambda N_{\mathrm{tr}}^{-\alpha} + \ln 2bD\right)}},\quad 
     \frac{\gamma}{e^2 B \, \tilde{\beta}^{D-1} \sqrt{2b \ln(4Db)}} \right).
    \end{equation*}
\end{proof}

In this case, the complexity of the graph embedding model no longer contributes to the bound, and only the complexity of the MLP classifier affects generalization. This reduced complexity underscores promise of fixed graph encoders. 

\section{Proof of Lemma on Comparison of TMD and \texorpdfstring{$\mathcal{F}$}{F}-TMD}
\label{proof:TMD_vs_FTMD}

\begin{lemma}
\label{lemma:F_TMD_increases_with_more_motifs}
For any $G, H \in \mathcal{G}$ and any non-empty family of features $\mathcal{F}' \subset \mathcal{F} \subset \mathcal{G}$, we have 
\begin{equation*}
\mathcal{F}'\text{-TMD}(G, H) \leq \mathcal{F}\text{-TMD}(G, H).
\end{equation*}
\end{lemma}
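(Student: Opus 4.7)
The strategy is to exploit the fact that $R^{\mathcal{F}'}(G)$ and $R^{\mathcal{F}}(G)$ share the same underlying graph structure (same vertex set, same edges) and differ only in the dimension of their node feature vectors: the feature vector of $R^{\mathcal{F}'}(G)$ is obtained from that of $R^{\mathcal{F}}(G)$ by projecting onto the coordinates corresponding to the patterns in $\mathcal{F}' \cup \{x\}$. Let $\pi : \mathbb{R}^{F+|\mathcal{F}|} \to \mathbb{R}^{F+|\mathcal{F}'|}$ denote this coordinate projection. Since $\pi$ is $1$-Lipschitz with respect to the Euclidean norm, $\|\pi(a) - \pi(b)\| \le \|a - b\|$ for every pair of feature vectors $a, b$, and, crucially, $\pi$ maps the zero vector to the zero vector, so blank trees in $R^{\mathcal{F}}$ correspond exactly to blank trees in $R^{\mathcal{F}'}$. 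Consequently, the computation tree $T_v^t$ of any node $v$ in $R^{\mathcal{F}'}(G)$ is obtained from the computation tree of $v$ in $R^{\mathcal{F}}(G)$ by applying $\pi$ to every node feature, while preserving the tree topology and the blank-tree augmentations.

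The second step is to establish the following monotonicity by induction on the tree depth $t$: for every pair of rooted trees $T_a, T_b$ arising from $R^{\mathcal{F}}$-computation trees,
\begin{equation*}
\text{TD}_w(\pi(T_a), \pi(T_b)) \le \text{TD}_w(T_a, T_b).
\end{equation*}
The base case $t = 1$ is just the $1$-Lipschitz property of $\pi$. For the inductive step, let $\sigma^\star$ denote the optimal permutation witnessing $\text{OT}_{\text{TD}_w}(\rho(\mathcal{T}_a, \mathcal{T}_b))$; using this same permutation as a (not necessarily optimal) coupling for the projected subtrees and applying the inductive hypothesis term-by-term yields
\begin{equation*}
\text{OT}_{\text{TD}_w}(\rho(\pi(\mathcal{T}_a), \pi(\mathcal{T}_b))) \le \text{OT}_{\text{TD}_w}(\rho(\mathcal{T}_a, \mathcal{T}_b)).
\end{equation*}
Combining this with the Lipschitz bound on the root-feature term completes the induction.

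The final step lifts this monotonicity from tree distances to the full TMD via exactly the same transport-plan argument: take the optimal matching between $\mathcal{T}_{R^{\mathcal{F}}(G)}^T$ and $\mathcal{T}_{R^{\mathcal{F}}(H)}^T$, apply $\pi$ to obtain a feasible matching between $\mathcal{T}_{R^{\mathcal{F}'}(G)}^T$ and $\mathcal{T}_{R^{\mathcal{F}'}(H)}^T$ (augmented with blank trees in the same way, since the multiset sizes are preserved), and invoke the previous step on each pair of matched trees. This gives $\text{TMD}(R^{\mathcal{F}'}(G), R^{\mathcal{F}'}(H)) \le \text{TMD}(R^{\mathcal{F}}(G), R^{\mathcal{F}}(H))$, which by definition is the desired inequality $\mathcal{F}'\text{-TMD}(G,H) \le \mathcal{F}\text{-TMD}(G,H)$.

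\textbf{Main obstacle.} There is no substantive obstacle; the only mild care required is to verify that the blank-tree augmentations $\rho$ behave consistently under the projection $\pi$, which holds because $\pi(\mathbf{0}_{F+|\mathcal{F}|}) = \mathbf{0}_{F+|\mathcal{F}'|}$ and the number of children at every node of the computation tree is a purely structural quantity independent of the feature dimension.
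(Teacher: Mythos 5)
Your proof is correct and follows essentially the same strategy as the paper: take the optimal matching/assignment at the coarser level $\mathcal{F}$, reuse it as a feasible coupling for $\mathcal{F}'$, and close the argument by an induction on computation-tree depth in which the base case is the pointwise Lipschitz contraction $\|\pi(a)-\pi(b)\|\le\|a-b\|$ of the coordinate projection onto the $\mathcal{F}'$-features. Your write-up is marginally more careful than the paper's (you explicitly note that $\pi(\mathbf{0})=\mathbf{0}$ so blank-tree augmentation commutes with the projection, a point the paper leaves implicit), but there is no difference in the underlying argument.
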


\begin{proof}
Starting from the definitions, we have:
 \begin{align*} 
 \mathcal{F}'\text{-TMD}(G, H) &= \min_{\sigma \in S_n} \sum_{j=1}^n \mathrm{TD}\left( T^{G^{\mathcal{F}'}}_j, T^{H^{\mathcal{F}'}}_{\sigma(j)} \right), \\ 
 \mathcal{F}\text{-}\mathrm{TMD}(G, H) 
 &= \min_{\sigma \in S_n} \sum_{j=1}^n \mathrm{TD}\left( T^{G^\mathcal{F}}_j, T^{H^\mathcal{F}}_{\sigma(j)} \right),
 \end{align*} 
 where $S_n$ is the set of all permutations of the node set $\{1,2,\ldots,n\}$, and $T_j^{G^\mathcal{F}}$ denotes the subtree of $G^{\mathcal{F}}$ (the graph $G$ augmented with additional features $\mathcal{F}$) at node $j$.

 Let $\sigma'$ be the optimal assignment for $\mathcal{F}\text{-}\mathrm{TMD}(G, H)$, i.e., 
 \begin{equation*}
 \mathcal{F}\text{-}\mathrm{TMD}(G, H) = \sum_{j=1}^n \mathrm{TD}\left( T^{G^\mathcal{F}}_j, T^{H^\mathcal{F}}_{\sigma'(j)} \right).
 \end{equation*}
 Since $\mathcal{F}' \text{-TMD}(G, H)$ optimizes over all assignments, it follows that 
 \begin{equation}
 \label{eq:TMDminoptimalassigment_final}
     \mathcal{F}'\text{-TMD}(G, H) \leq \sum_{j=1}^n \mathrm{TD}\left( T^{G^{\mathcal{F}'}}_j, T^{H^{\mathcal{F}'}}_{\sigma'(j)} \right).
 \end{equation}
 Our goal is to show that for every $j$:
 \begin{equation}
 \label{eq:TDnormalleqTDwithF_final}
     \mathrm{TD}\left( T^{G^{\mathcal{F}'}}_j, T^{H^{\mathcal{F}'}}_{\sigma'(j)} \right) \leq \mathrm{TD}\left( T^{G^\mathcal{F}}_j, T^{H^\mathcal{F}}_{\sigma'(j)} \right).
 \end{equation}
Together, \cref{eq:TMDminoptimalassigment_final} and \cref{eq:TDnormalleqTDwithF_final}, lead to
 \begin{equation*}
 \begin{aligned}
      \mathcal{F}'\text{-TMD}(G, H) & \leq \sum_{j=1}^n \mathrm{TD}\left( T^{G^{\mathcal{F}'}}_j, T^{H^{\mathcal{F}'}}_{\sigma'(j)} \right)\\
      & \leq  \sum_{j=1}^n \mathrm{TD}\left( T^{G^\mathcal{F}}_j, T^{H^\mathcal{F}}_{\sigma'(j)} \right) \\
      & = \mathcal{F}\text{-}\mathrm{TMD}(G, H).
 \end{aligned}
 \end{equation*}
 
 We proceed to prove \cref{eq:TDnormalleqTDwithF_final} via induction. In fact, we show a more general version: for every assignment $\rho$ between nodes of $G$ and $H$ and any node $j$, we have 
 \begin{equation*}
 \mathrm{TD}\left( T^{G^{\mathcal{F}'}}_j, T^{H^{\mathcal{F}'}}_{\rho(j)} \right)\leq \mathrm{TD}\left( T^{G^\mathcal{F}}_j, T^{H^\mathcal{F}}_{\rho(j)} \right).
 \end{equation*}
 
 \textbf{Base Case ($t=0$)}

 At depth $0$, the trees consist only of root nodes. The tree distance is the difference between node features, i.e., 
 \begin{equation*}
 \begin{aligned}
    \mathrm{TD}\left( T^{G^{\mathcal{F}'}}_j, T^{H^{\mathcal{F}'}}_{\rho(j)} \right) & = \|x'_j - x'_{\rho(j)}\| \\
    & \leq \|\tilde{x}_j - \tilde{x}_{\rho(j)}\| \\
    & = \mathrm{TD}\left( T^{G^\mathcal{F}}_j, T^{H^\mathcal{F}}_{\rho(j)} \right),
 \end{aligned}
 \end{equation*}
 where  ${x}'_j = (x_j, c_j(F_1), \ldots, c_j(F_{|\mathcal{F}'|}))$  and  $\tilde{x}_j = (x_j, c_j(F_1), \ldots, c_j(F_{|\mathcal{F}|}))$ include the additional features.
 
 \textbf{Induction Step ($t-1 \mapsto t$)}

 Assume that for every tree of depth $t-1$, we have, for every assignment $\rho$ and every $j=1, \ldots, n$,
 \begin{equation*}
 \mathrm{TD}\left( T^{G^{\mathcal{F}'}}_j, T^{H^{\mathcal{F}'}}_{\rho(j)} \right) \leq \mathrm{TD}\left( T^{G^\mathcal{F}}_j, T^{H^\mathcal{F}}_{\rho(j)} \right).
 \end{equation*}
 Let $\sigma'$ be any assignment of trees and let $\tau$ be the optimal assignment in $W_{\mathrm{TD}}\left( \mathcal{T}_j^\mathcal{F}, \mathcal{T}^\mathcal{F}_{\sigma'(j)} \right)$.

 Then, for any $j=1, \ldots, n$
 \begin{equation}
     \begin{aligned}
         \mathrm{TD}\left( T^{G^{\mathcal{F}'}}_j, T^{H^{\mathcal{F}'}}_{\sigma'(j)} \right) & = \|x'_j - x'_{\sigma'(j)}\| + \min_{\tau' \in S_n} \sum_{j=1}^n \mathrm{TD}\left( T^{G^{\mathcal{F}'}}_j, T^{H^{\mathcal{F}'}}_{\tau'(j)} \right) \\
         & \leq \|x'_j - x'_{\sigma'(j)}\| + \sum_{j=1}^n \mathrm{TD}\left( T^{G^{\mathcal{F}'}}_j, T^{H^{\mathcal{F}'}}_{\tau(j)} \right)\\
         & \leq \|x'_j - x'_{\sigma'(j)}\| + \sum_{j=1}^n \mathrm{TD}\left( T^{G^\mathcal{F}}_j, T^{H^\mathcal{F}}_{\tau(j)} \right) \\
         & = \|x'_j - x'_{\sigma'(j)}\| + W_{\mathrm{TD}}\left(\rho\left( \mathcal{T}^\mathcal{F}_j, \mathcal{T}^\mathcal{F}_{\sigma'(j)} \right)\right)  \\
         & \leq \|\tilde{x}_j - \tilde{x}_{\sigma'(j)}\| + W_{\mathrm{TD}}\left(\rho\left( \mathcal{T}_j^\mathcal{F}, \mathcal{T}^\mathcal{F}_{\sigma'(j)} \right)\right) \\
         & =  \mathrm{TD}\left( T^{G^\mathcal{F}}_j, T^{H^\mathcal{F}}_{\sigma'(j)} \right).
     \end{aligned}
 \end{equation}
 The second inequality follows by the induction hypothesis as the considered trees are of depth $t-1$.
 Hence, by induction, the inequality $\mathrm{TD}\left( T^{G^{\mathcal{F}'}}_j, T^{H^{\mathcal{F}'}}_{\sigma'(j)} \right) \leq \mathrm{TD}\left( T^{G^\mathcal{F}}_j, T^{H^\mathcal{F}}_{\rho(j)} \right)$ holds, completing the proof.
\end{proof}

\begin{proof}[Proof of Theorem 6.1]
Assume that \( y \sim \mathcal{F}\text{-TMD} \).

\begin{enumerate}
    \item Since any \( \mathcal{F} \)-GIN classifier \( h \) is Lipschitz continuous with respect to \( \mathcal{F}\text{-TMD} \) and \( h \) is stable with respect to weight perturbations, by \Cref{thm:gin_gen_bound_final}, we have
    \begin{equation*}
    \begin{aligned}
    \mathcal{L}_{\mathrm{te}}^0(\tilde{h}) & \leq \mathcal{L}_{\mathrm{tr}}^\gamma(\tilde{h}) + \mathcal{O}\Bigg( \frac{b \sum_{j}\sum_{i} \|W_i^{(j)}\|_F^2}{N_{\mathrm{tr}}^{2\alpha} (\gamma/8)^{2/D}} \xi^{2/D} + \frac{h^2 \ln(2h) D C (2dB)^{1/D}}{N_{\mathrm{tr}}^{2\alpha} \gamma^{1/D} \delta} + \frac{1}{N_{\mathrm{tr}}^{1 - 2\alpha}} + C K \xi \Bigg),
    \end{aligned}
    \end{equation*}
    with \( \xi \coloneqq \max_{G_{\mathrm{tr}} \in \mathcal{G}_{\mathrm{tr}}, G_{\mathrm{te}} \in \mathcal{G}_{\mathrm{te}}} \mathcal{F}\text{-TMD}_w^{L+1}(G_{\mathrm{tr}}, G_{\mathrm{te}}) \) and \( B = \max_{G} \|X(R^\mathcal{F}(G))[i, :]\|_2 \).

    \item By \Cref{lemma:F_TMD_increases_with_more_motifs}, every \( \mathcal{F}' \)-GIN classifier \( h' \) is Lipschitz continuous with respect to \( \mathcal{F}\text{-TMD} \), as
    \begin{equation}
        \|h'(G) - h'(H)\| \leq L' \mathcal{F}'\text{-TMD}(G,H) \leq L' \mathcal{F}\text{-TMD}(G,H).
    \end{equation}
    Hence, applying \Cref{thm:gin_gen_bound_final}, we have
    \begin{equation*}
    \begin{aligned}
    \mathcal{L}_{\mathrm{te}}^0(\tilde{h}) & \leq \mathcal{L}_{\mathrm{tr}}^\gamma(\tilde{h}) + \mathcal{O}\Bigg( \frac{b \sum_{j}\sum_{i} \|W_i^{(j)}\|_F^2}{N_{\mathrm{tr}}^{2\alpha} (\gamma/8)^{2/D}} \xi^{2/D} + \frac{h^2 \ln(2h) D C (2dB')^{1/D}}{N_{\mathrm{tr}}^{2\alpha} \gamma^{1/D} \delta} + \frac{1}{N_{\mathrm{tr}}^{1 - 2\alpha}} + C K \xi \Bigg),
    \end{aligned}
    \end{equation*}
    where \( B' = \max_{G} \|X(R^{\mathcal{F}'}(G))[i, :]\|_2 \). Note that \( B' \) is the only difference compared to the previous bound.

    \item Assume \( y \sim \mathcal{F}\text{-TMD} \), but apply a \( \tilde{\mathcal{F}} \)-GIN classifier \( \tilde{h} \). Since \( \tilde{h} \) is not necessarily Lipschitz continuous with respect to \( \mathcal{F}\text{-TMD} \), we cannot directly apply \Cref{thm:gin_gen_bound_final}.

    However, if \( y \sim \mathcal{F}\text{-TMD} \), then \( y \sim \tilde{\mathcal{F}}\text{-TMD} \) as well. Specifically, if \( y \sim \mathcal{F}\text{-TMD} \), there exists for every class \( k \) some \( \eta_k \) such that \( \eta_k(G) = \mathrm{Pr}(y_G = k \mid G) \), and \( \eta_k \) is Lipschitz continuous with constant \( L_{\eta^k} \). Then, by \Cref{lemma:F_TMD_increases_with_more_motifs},
    \begin{equation}
        |\eta_k(G) - \eta_k(H)| \leq L_{\eta^k} \cdot \mathcal{F}\text{-TMD}(G,H) \leq L_{\eta^k} \cdot \tilde{\mathcal{F}}\text{-TMD}(G,H).
    \end{equation}
    Consequently, we can now apply \Cref{thm:gin_gen_bound_final} to obtain
    \begin{equation*}
    \begin{aligned}
    \mathcal{L}_{\mathrm{te}}^0(\tilde{h}) & \leq \mathcal{L}_{\mathrm{tr}}^\gamma(\tilde{h}) + \mathcal{O}\Bigg( \frac{b \sum_{j}\sum_{i} \|W_i^{(j)}\|_F^2}{N_{\mathrm{tr}}^{2\alpha} (\gamma/8)^{2/D}} \tilde{\xi}^{2/D} + \frac{h^2 \ln(2h) D C (2d\tilde{B})^{1/D}}{N_{\mathrm{tr}}^{2\alpha} \gamma^{1/D} \delta} + \frac{1}{N_{\mathrm{tr}}^{1 - 2\alpha}} + C K \tilde{\xi} \Bigg),
    \end{aligned}
    \end{equation*}
    where \( \tilde{\xi} \coloneqq \max_{G_{\mathrm{tr}} \in \mathcal{G}_{\mathrm{tr}}, G_{\mathrm{te}} \in \mathcal{G}_{\mathrm{te}}} \tilde{\mathcal{F}}\text{-TMD}_w^{L+1}(G_{\mathrm{tr}}, G_{\mathrm{te}}) \) and \( \tilde{B} = \max_{G} \|X(R^{\tilde{\mathcal{F}}}(G))[i, :]\|_2 \).
    Note that $\tilde{B}$ \emph{and} $\tilde{\xi}$ are now different compared to the previous bounds in Item 1 and 2.
\end{enumerate}
\end{proof}

\section{Other Results}
\label{sec:other_results}
The following lemmata are easy to prove.

\begin{lemma}
    \label{lemma:max_norm_of_node_features}
    Given a MPNN with $t$ layers such that the message and update functions $g^{(k)}$ and $f^{(k)}$ are Lipschitz continuos with Lipschitz constant $L({f^{(k)}})$ and $ L({g^{(k)}})$, respectively. Then for any graph $G$ with maximum node degree $v$, we have 
    \begin{equation}
        \max_{v \in V(G)} \|x_v^{(t)}\| \leq \left( \prod_{k=1}^t L({f^{(k)}}) \left(
        1 + d L({g^{(k)}})
        \right)\right) B,
    \end{equation}
    where $x_v^{(t)}$ denotes the output of the MPNN after $t$ layers at node $v$.
\end{lemma}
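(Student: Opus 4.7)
The statement bounds the worst-case norm of a node feature after $t$ layers of message passing, and the natural route is a direct induction on the layer index $t$. The base case $t=0$ is immediate from the assumption, stated in \Cref{sec:graph_classification_setting}, that $\|x_v^{(0)}\| \le B$ for every node of every graph in $\mathcal{G}_{\mathrm{tr}} \cup \mathcal{G}_{\mathrm{te}}$, since the empty product equals $1$.

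For the inductive step, I would fix a node $v$ and expand one layer using
\[
x_v^{(t)} \;=\; f^{(t)}\!\Bigl(x_v^{(t-1)},\; \sum_{u \in \mathcal{N}(v)} g^{(t)}\!\bigl(x_u^{(t-1)}\bigr)\Bigr).
\]
Invoking homogeneity of the activations (so that $f^{(t)}(0,0)=0$ and $g^{(t)}(0)=0$, as assumed throughout the paper for the MLP modules) together with the Lipschitz hypothesis yields
\[
\|x_v^{(t)}\| \;\le\; L(f^{(t)}) \Bigl( \|x_v^{(t-1)}\| + \Bigl\|\sum_{u \in \mathcal{N}(v)} g^{(t)}(x_u^{(t-1)})\Bigr\| \Bigr).
\]
The sum is over at most $d$ terms, so triangle inequality and the Lipschitz bound on $g^{(t)}$ give $\|\sum_u g^{(t)}(x_u^{(t-1)})\| \le d\, L(g^{(t)}) \max_{u \in \mathcal{N}(v)} \|x_u^{(t-1)}\|$. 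Combining and taking the maximum over $v$ produces the one-step recursion
\[
\max_v \|x_v^{(t)}\| \;\le\; L(f^{(t)}) \bigl(1 + d\, L(g^{(t)})\bigr) \max_v \|x_v^{(t-1)}\|,
\]
after which the inductive hypothesis closes the argument.

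There is essentially no obstacle here; the only delicate point is the tacit assumption that the message and update functions vanish at the origin, which is what lets us convert ``Lipschitz'' into a bound on the norm of the output rather than on a difference of outputs. The paper's standing assumption (homogeneous activations in all MLPs) covers this, so I would flag it at the start of the proof and then proceed with the two-line induction above. No further estimates are needed.
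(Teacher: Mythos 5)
Your proof is correct and takes the natural direct route; the paper itself supplies no argument here, dismissing the lemma as one of the "following lemmata [that] are easy to prove," and the two-line induction you give is exactly what that dismissal presupposes. You are also right to flag the one genuinely load-bearing detail, namely that $f^{(k)}(0,0)=0$ and $g^{(k)}(0)=0$ are needed to pass from Lipschitz continuity to an absolute norm bound, and that the paper's standing assumption in \Cref{sec:graph_classification_setting} of Lipschitz, homogeneous activations (with no bias) supplies exactly this. The convention you use for $L(f^{(t)})$ — bounding by the sum $\|x_1-x_2\|+\|y_1-y_2\|$ of the increments in the two arguments — matches the convention the paper itself uses in the proof of \Cref{lemma:thm4.4}, so the recursion $\max_v\|x_v^{(t)}\|\le L(f^{(t)})\bigl(1+d\,L(g^{(t)})\bigr)\max_v\|x_v^{(t-1)}\|$ follows exactly as you say, and the base case with the empty product closes it.
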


\begin{lemma}
    \label{lemma:max_norm_of_message_features}
    Given a MPNN with $t+1$ layers such that the message and update functions $g^{(k)}$ and $f^{(k)}$ are Lipschitz continuos with Lipschitz constant $L({f^{(k)}}$ and $ L({g^{(k)}})$, respectively. Then for any graph $G$ with maximum node degree $v$, we have 
    \begin{equation}
        \max_v \|m_v^{(t+1)}\| \leq  d L_{g^{(t+1)}}\left( \prod_{k=1}^t L({f^{(k)}_w}) \left(
        1 + d L({g^{(k)}_w})
        \right)\right) B,
    \end{equation}
    where $m_v^{(t)}$ denotes the message of the MPNN at the $(t+1)$'th layer at node $v$.
\end{lemma}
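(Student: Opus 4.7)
\textbf{Proof proposal for Lemma \ref{lemma:max_norm_of_message_features}.} The plan is to unfold the one‐step definition of the message vector, pass the norm through the sum, and then invoke the immediately preceding Lemma \ref{lemma:max_norm_of_node_features} to control the norms of the node embeddings being aggregated.

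First I would recall that in our MPNN formulation the message at node $v$ in layer $t+1$ is
\[
m_v^{(t+1)} \;=\; \sum_{u \in \mathcal{N}(v)} g^{(t+1)}\!\bigl(x_u^{(t)}\bigr),
\]
so the triangle inequality together with $|\mathcal{N}(v)| \le d$ yields
\[
\|m_v^{(t+1)}\| \;\le\; \sum_{u \in \mathcal{N}(v)} \bigl\|g^{(t+1)}(x_u^{(t)})\bigr\| \;\le\; d \, \max_{u \in V(G)} \bigl\|g^{(t+1)}(x_u^{(t)})\bigr\|.
\]
Because $g^{(t+1)}$ is an MLP with homogeneous (e.g.\ ReLU) activations and Lipschitz constant $L_{g^{(t+1)}}$, it satisfies $g^{(t+1)}(0)=0$, hence $\|g^{(t+1)}(x_u^{(t)})\| \le L_{g^{(t+1)}} \|x_u^{(t)}\|$.

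Second, I would simply plug in the bound on the hidden representations from the companion Lemma \ref{lemma:max_norm_of_node_features}, namely
\[
\max_{u \in V(G)} \|x_u^{(t)}\| \;\le\; \Bigl(\prod_{k=1}^{t} L_{f^{(k)}}\bigl(1 + d\, L_{g^{(k)}}\bigr)\Bigr) B,
\]
and combine it with the previous display to obtain exactly
\[
\max_v \|m_v^{(t+1)}\| \;\le\; d\, L_{g^{(t+1)}} \Bigl(\prod_{k=1}^{t} L_{f^{(k)}}\bigl(1 + d\, L_{g^{(k)}}\bigr)\Bigr) B,
\]
which is the claimed inequality.

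There is no real obstacle here: everything reduces to a one‐line triangle‐inequality/Lipschitz argument followed by invoking the inductive node‐norm bound that has already been established. The only point worth being explicit about is the homogeneity assumption on the activations that forces $g^{(t+1)}(0)=0$ (stated in \Cref{sec:graph_classification_setting}), which is exactly what turns the Lipschitz bound $\|g^{(t+1)}(x) - g^{(t+1)}(0)\| \le L_{g^{(t+1)}}\|x\|$ into a usable bound on $\|g^{(t+1)}(x)\|$ itself; once that is noted the proof is a direct calculation and parallel in style to the proof of Lemma \ref{lemma:max_norm_of_node_features}.
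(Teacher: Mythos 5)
Your proof is correct: the paper itself gives no argument for this lemma (it only remarks that the lemmata in that section "are easy to prove"), and your calculation — unfolding $m_v^{(t+1)}=\sum_{u\in\mathcal{N}(v)}g^{(t+1)}(x_u^{(t)})$, applying the triangle inequality with $|\mathcal{N}(v)|\le d$, using homogeneity to get $g^{(t+1)}(0)=0$ so that the Lipschitz constant bounds $\|g^{(t+1)}(x_u^{(t)})\|$, and plugging in \Cref{lemma:max_norm_of_node_features} — is exactly the intended derivation and yields the stated bound. Your explicit remark about why $g^{(t+1)}(0)=0$ is needed is the only nontrivial point, and you handle it correctly.
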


\begin{lemma}[Lemma 2 in \citep{neyshabur2018a}]
let $f_{\mathbf{w}} : \mathbb{R}^n \to \mathbb{R}^k$ be a  neural network with Lipschitz continuous and homogenous activations and $d(f)$ layers
For any $D$,  Then for any $\mathbf{w}$, $\mathbf{x} \in \mathcal{X}_{B,n}$, and any perturbation $\mathbf{u} = \mathrm{vec}(\{ \mathbf{U}_i \}_{i=1}^d)$ such that $\|\mathbf{U}_i\|_2 \leq \frac{1}{d} \|\mathbf{W}_i\|_2$, the change in the output of the network can be bounded as follows:
\[
\|f_{\mathbf{w}+\mathbf{u}}(\mathbf{x}) - f_{\mathbf{w}}(\mathbf{x})\|_2 \leq eB \left( \prod_{i=1}^d \|\mathbf{W}_i\|_2 \right) \sum_{i=1}^d \frac{\|\mathbf{U}_i\|_2}{\|\mathbf{W}_i\|_2}.
\]
\end{lemma}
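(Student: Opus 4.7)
The plan is to prove this classical perturbation bound by induction on depth, tracking how the difference $\Delta_i := \|(f_{\mathbf{w}+\mathbf{u}})_i(\mathbf{x}) - (f_{\mathbf{w}})_i(\mathbf{x})\|_2$ propagates through layers, where $(f_\mathbf{w})_i$ denotes the output of the first $i$ layers. The base case $\Delta_0 = 0$ is trivial, and the key inductive step will exploit (a) the 1-Lipschitz and homogeneous activation, (b) the spectral-norm sub-multiplicativity of linear maps, and (c) the assumption $\|\mathbf{U}_i\|_2 \leq \tfrac{1}{d}\|\mathbf{W}_i\|_2$ together with the inequality $(1+\tfrac{1}{d})^d \leq e$.

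The first step is to bound the norm of the clean pre-activation at depth $i$. Using 1-Lipschitzness and homogeneity of the activation $\phi$ (so $\|\phi(z)\|_2 \leq \|z\|_2$) and $\|\mathbf{x}\|_2 \leq B$, a straightforward induction gives $\|(f_{\mathbf{w}})_i(\mathbf{x})\|_2 \leq B\prod_{j=1}^{i}\|\mathbf{W}_j\|_2$. Next, I write a one-step recurrence for $\Delta_i$: splitting
\[
(\mathbf{W}_i+\mathbf{U}_i)(f_{\mathbf{w}+\mathbf{u}})_{i-1}(\mathbf{x}) - \mathbf{W}_i (f_{\mathbf{w}})_{i-1}(\mathbf{x})
\]
into $(\mathbf{W}_i+\mathbf{U}_i)\bigl((f_{\mathbf{w}+\mathbf{u}})_{i-1}-(f_{\mathbf{w}})_{i-1}\bigr) + \mathbf{U}_i (f_{\mathbf{w}})_{i-1}$ and applying 1-Lipschitzness of $\phi$ yields
\[
\Delta_i \;\leq\; \|\mathbf{W}_i+\mathbf{U}_i\|_2\,\Delta_{i-1} \;+\; \|\mathbf{U}_i\|_2\,\|(f_{\mathbf{w}})_{i-1}(\mathbf{x})\|_2.
\]
Using $\|\mathbf{W}_i+\mathbf{U}_i\|_2 \leq (1+\tfrac{1}{d})\|\mathbf{W}_i\|_2$ and the pre-activation bound above gives a recurrence entirely in terms of products of $\|\mathbf{W}_j\|_2$.

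Unrolling this recurrence is the main computation: one obtains
\[
\Delta_d \;\leq\; B\sum_{i=1}^{d}\Bigl(\prod_{j=i+1}^{d}(1+\tfrac{1}{d})\|\mathbf{W}_j\|_2\Bigr)\|\mathbf{U}_i\|_2\Bigl(\prod_{j=1}^{i-1}\|\mathbf{W}_j\|_2\Bigr).
\]
Factoring out $\prod_{j=1}^{d}\|\mathbf{W}_j\|_2$, multiplying and dividing by $\|\mathbf{W}_i\|_2$, and bounding $(1+\tfrac{1}{d})^{d-i} \leq (1+\tfrac{1}{d})^d \leq e$ yields the advertised bound
\[
\Delta_d \;\leq\; eB\Bigl(\prod_{i=1}^{d}\|\mathbf{W}_i\|_2\Bigr)\sum_{i=1}^{d}\frac{\|\mathbf{U}_i\|_2}{\|\mathbf{W}_i\|_2}.
\]

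The only subtle point — and the main obstacle — is handling the activation $\phi$ correctly in the split above: I need both 1-Lipschitzness (to get $\|\phi(z)-\phi(z')\|\le\|z-z'\|$ in the recurrence) and homogeneity (used implicitly so that the output-norm bound $\|(f_\mathbf{w})_i(\mathbf{x})\|\le B\prod_j\|\mathbf{W}_j\|_2$ is tight without picking up extra constants per layer). Once this is set up cleanly, the remainder is bookkeeping on products and the $(1+1/d)^d\le e$ estimate, and no additional technical machinery beyond elementary inequalities is required.
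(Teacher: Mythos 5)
Your proof is correct. The paper itself does not prove this statement — it imports it verbatim as Lemma~2 of \citet{neyshabur2018a} — and your argument is precisely the standard one from that reference: the recurrence $\Delta_i \le \|\mathbf{W}_i+\mathbf{U}_i\|_2\,\Delta_{i-1} + \|\mathbf{U}_i\|_2\,\|(f_{\mathbf{w}})_{i-1}(\mathbf{x})\|_2$, the clean forward-pass bound $B\prod_j\|\mathbf{W}_j\|_2$ from $1$-Lipschitzness plus homogeneity (so $\phi(0)=0$), unrolling, and $(1+\tfrac{1}{d})^{d}\le e$. This is also the same technique the paper deploys, in generalized form, in its own Lemma on MPNN weight-perturbation stability (\Cref{lemma:mpnn_perturbation_bound}), so there is nothing further to reconcile; the only caveat, inherited from the statement rather than your argument, is that the activations must have Lipschitz constant $1$ (as for ReLU in the original reference) for the bound to hold without extra factors.
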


\section{Additional Experimental Results and Details}
\label{sec:appendix_experiments}

This appendix provides a comprehensive overview of our experimental setup, including dataset generation, model architectures, training procedures, evaluation metrics, and additional analyses. All experiments were run on an internal cluster with Intel Xeon CPUs (28
cores, 192GB RAM) and GeForce RTX 3090 Ti GPUs (4 units, 24GB memory each), as well as Intel
Xeon CPUs (32 cores, 192GB RAM) and NVIDIA RTX A6000 GPUs (3 units, 48GB memory each). Each subsection corresponds to one of the three experimental tasks. 

\subsection{Label Noise Experiments}
\label{sec:label_noise}

\begin{figure}[t]
\centering
\begin{tikzpicture}
\pgfplotsset{compat=1.18}

\def\data{Figures}          \def\barw{3pt}              

\pgfplotscreateplotcyclelist{paper}{
    {blue!50!black,  thick},
    {red!50!black,   thick, dashdotted},
    {green!50!black, thick, dashed},
    {orange!90!black, thick},
    {violet, thick, dotted},
}

\begin{groupplot}[
    group style = {group size = 3 by 1,
                   horizontal sep = 1.5cm},
    width  = 0.3\textwidth,
    height = 0.30\textwidth,
    tick label style = {font=\footnotesize},
    xlabel style     = {font=\footnotesize},
    ylabel style     = {font=\footnotesize},
]

\nextgroupplot[
    title  = {\footnotesize Training loss},
    xlabel = {Epoch},
ymin   = 0,
    cycle list name = paper,      legend style = {
        row sep=0pt,
        draw,
        font=\tiny,
at={(axis description cs:0.15,1.1)},              anchor=north west,
        xshift=5em, 
        yshift=0.0em,
},
]
\addlegendimage{empty legend}
\addlegendentry{\hspace{-.7cm}\textbf{Label noise}} 
\foreach \p/\lab in {0.00/{0.0},0.20/{0.2},0.50/{0.5},
                     0.80/{0.8},1.00/{1.0}}{
  \addplot+[smooth] table [x=epoch, y=mean, col sep=comma, y error=std]
        {\data/Mutagenicity/loss_p\p.csv};
        \edef\temp{\noexpand\addlegendentry{\lab}}
  \temp
}

\nextgroupplot[
    title  = {\footnotesize Test error vs.\ noise},
    xlabel = {Noise $p$},
ymin   = 0.18, ymax = .55,
    ytick  = {0.2, .3, .4 ,.5},
    ymajorgrids,
    legend style = {
        draw,
        font=\tiny,
        /tikz/every even column/.style={column sep=0.6ex},
        at={(axis description cs:.1,1.05)},              anchor=north west,
        legend cell align={left},
        xshift=4.5em, yshift=-4.0em
    },
]
\addplot+[smooth, mark=*,
          error bars/.cd, y dir=both, y explicit]
    table [col sep=comma,
           x=noise, y=mean, y error=std] {\data/Mutagenicity/test_error_mut.csv};

\nextgroupplot[
    title  = {\footnotesize Time to Overfit vs Noise},
    xlabel = {Noise p},
ymin   = 0,
    cycle list name = paper,      legend style = {
        row sep=0pt,
        draw,
        font=\tiny,
at={(axis description cs:0.15,1.1)},              anchor=north west,
        xshift=5em, 
        yshift=0.0em,
},
]
\addlegendimage{empty legend}
{
  \addplot+[smooth] table [x=noise, y=mean, col sep=comma, y error=std]
        {\data/Mutagenicity/epochs_to_99.csv};
  \temp
}

\end{groupplot}
\end{tikzpicture}

\caption{ 
\textbf{Left:} Training-loss trajectories of a GIN on
\textsc{Mutagenicity} under increasing label noise~$p$.  
\textbf{Center:} Corresponding test errors on \textsc{Mutagenicity}rises sharply as label–structure correlation is essential for generalization (mean $\pm$ standard deviation across five seeds). 
\textbf{Right:} Number of epochs to overfit, i.e., reach 99\% training accuracy under increasing label noise $p$.}
\label{fig:label_noise_mut}
\end{figure}

\begin{figure}[t]
\centering
\begin{tikzpicture}
\pgfplotsset{compat=1.18}

\def\data{Figures}          \def\barw{3pt}              

\pgfplotscreateplotcyclelist{paper}{
    {blue!50!black,  thick},
    {red!50!black,   thick, dashdotted},
    {green!50!black, thick, dashed},
    {orange!90!black, thick},
    {violet, thick, dotted},
}

\begin{groupplot}[
    group style = {group size = 3 by 1,
                   horizontal sep = 1.5cm},
    width  = 0.3\textwidth,
    height = 0.30\textwidth,
    tick label style = {font=\footnotesize},
    xlabel style     = {font=\footnotesize},
    ylabel style     = {font=\footnotesize},
]

\nextgroupplot[
    title  = {\footnotesize Training loss},
    xlabel = {Epoch},
ymin   = 0,
    cycle list name = paper,      legend style = {
        row sep=0pt,
        draw,
        font=\tiny,
at={(axis description cs:0.15,1.1)},              anchor=north west,
        xshift=5em, 
        yshift=0.0em,
},
]
\addlegendimage{empty legend}
\addlegendentry{\hspace{-.7cm}\textbf{Label noise}} 
\foreach \p/\lab in {0.00/{0.0},0.20/{0.2},0.50/{0.5},
                     0.80/{0.8},1.00/{1.0}}{
  \addplot+[smooth] table [x=epoch, y=meanLoss, col sep=comma, y error=std]
        {\data/BZR/loss_p\p.csv};
        \edef\temp{\noexpand\addlegendentry{\lab}}
  \temp
}

\nextgroupplot[
    title  = {\footnotesize Test error vs.\ noise},
    xlabel = {Noise $p$},
ymin   = 0.18, ymax = .55,
    ytick  = {0.2, .3, .4 ,.5},
    ymajorgrids,
    legend style = {
        draw,
        font=\tiny,
        /tikz/every even column/.style={column sep=0.6ex},
        at={(axis description cs:.1,1.05)},              anchor=north west,
        legend cell align={left},
        xshift=4.5em, yshift=-4.0em
    },
]
\addplot+[smooth, mark=*,
          error bars/.cd, y dir=both, y explicit]
    table [col sep=comma,
           x=noise, y=mean, y error=std] {\data/BZR/test_error_bzr.csv};

\nextgroupplot[
    title  = {\footnotesize Time to Overfit vs Noise},
    xlabel = {Noise p},
ymin   = 0,
    cycle list name = paper,      legend style = {
        row sep=0pt,
        draw,
        font=\tiny,
at={(axis description cs:0.15,1.1)},              anchor=north west,
        xshift=5em, 
        yshift=0.0em,
},
]
\addlegendimage{empty legend}
{
  \addplot+[smooth] table [x=noise, y=mean, col sep=comma, y error=std]
        {\data/BZR/epochs_to_99.csv};
  \temp
}

\end{groupplot}
\end{tikzpicture}

\caption{ 
\textbf{Left:} Training-loss trajectories of a GIN on
\textsc{BZR} under increasing label noise~$p$.  
\textbf{Center:} Corresponding test errors on \textsc{BZR}rises sharply as label–structure correlation is essential for generalization (mean $\pm$ standard deviation across five seeds). 
\textbf{Right:} Number of epochs to overfit, i.e., reach 99\% training accuracy under increasing label noise $p$.}
\label{fig:label_noise_bzr}
\end{figure}

\begin{figure}[t]
\centering
\begin{tikzpicture}
\pgfplotsset{compat=1.18}

\def\data{Figures}          \def\barw{3pt}              

\pgfplotscreateplotcyclelist{paper}{
    {blue!50!black,  thick},
    {red!50!black,   thick, dashdotted},
    {green!50!black, thick, dashed},
    {orange!90!black, thick},
    {violet, thick, dotted},
}

\begin{groupplot}[
    group style = {group size = 3 by 1,
                   horizontal sep = 1.5cm},
    width  = 0.3\textwidth,
    height = 0.30\textwidth,
    tick label style = {font=\footnotesize},
    xlabel style     = {font=\footnotesize},
    ylabel style     = {font=\footnotesize},
]

\nextgroupplot[
    title  = {\footnotesize Training loss},
    xlabel = {Epoch},
ymin   = 0,
    cycle list name = paper,      legend style = {
        row sep=0pt,
        draw,
        font=\tiny,
at={(axis description cs:0.15,1.1)},              anchor=north west,
        xshift=5em, 
        yshift=0.0em,
},
]
\addlegendimage{empty legend}
\addlegendentry{\hspace{-.7cm}\textbf{Label noise}} 
\foreach \p/\lab in {0.00/{0.0},0.20/{0.2},0.50/{0.5},
                     0.80/{0.8},1.00/{1.0}}{
  \addplot+[smooth] table [x=epoch, y=meanLoss, col sep=comma, y error=std]
        {\data/NC/loss_p\p.csv};
        \edef\temp{\noexpand\addlegendentry{\lab}}
  \temp
}

\nextgroupplot[
    title  = {\footnotesize Test error vs.\ noise},
    xlabel = {Noise $p$},
ymin   = 0.18, ymax = .55,
    ytick  = {0.2, .3, .4 ,.5},
    ymajorgrids,
    legend style = {
        draw,
        font=\tiny,
        /tikz/every even column/.style={column sep=0.6ex},
        at={(axis description cs:.1,1.05)},              anchor=north west,
        legend cell align={left},
        xshift=4.5em, yshift=-4.0em
    },
]
\addplot+[smooth, mark=*,
          error bars/.cd, y dir=both, y explicit]
    table [col sep=comma,
           x=noise, y=mean, y error=std] {\data/NC/test_error_nc.csv};

\nextgroupplot[
    title  = {\footnotesize Time to Overfit vs Noise},
    xlabel = {Noise p},
ymin   = 0,
    cycle list name = paper,      legend style = {
        row sep=0pt,
        draw,
        font=\tiny,
at={(axis description cs:0.15,1.1)},              anchor=north west,
        xshift=5em, 
        yshift=0.0em,
},
]
\addlegendimage{empty legend}
{
  \addplot+[smooth] table [x=noise, y=mean, col sep=comma, y error=std]
        {\data/NC/epochs_to_99.csv};
  \temp
}

\end{groupplot}
\end{tikzpicture}

\caption{ 
\textbf{Left:} Training-loss trajectories of a GIN on
\textsc{NCI109} under increasing label noise~$p$.  
\textbf{Center:} Corresponding test errors on \textsc{NCI109} rises sharply as label–structure correlation is essential for generalization (mean $\pm$ standard deviation across five seeds). 
\textbf{Right:} Number of epochs to overfit, i.e., reach 99\% training accuracy under increasing label noise $p$.}
\label{fig:label_noise_nc}
\end{figure}

To investigate how MPNNs behave under noisy supervision, we conducted controlled label corruption experiments on three benchmark molecular graph classification datasets from the TUDataset collection \citep{morris2020tudatasetcollectionbenchmarkdatasets}: \textsc{Mutagenicity}, \textsc{NCI109}, and \textsc{BZR}. For each dataset, we randomly corrupted a fixed proportion of the training labels by replacing them with uniformly sampled class labels.

We employed a MPNN with four layers, ReLU activations, GraphNorm, and a two-layer MLP head. Each message and update function in the MPNN is given by a MLP with two layers. The models were trained for up to 5000 epochs with early stopping once 99\% training accuracy was achieved. Label noise levels were varied in ${0.0, 0.2, 0.4, 0.6, 0.8, 1.0}$, and results were averaged over five random seeds per setting.

We track the per-epoch training and test accuracy/loss, the number of epochs until memorization, and total training time. This setup closely follows the protocol of \citet{zhang2017understanding}, adapted to the graph setting, and enables us to study not only final generalization but also how memorization unfolds over time under increasing label noise.

Figure~\ref{fig:label_noise_mut}, \ref{fig:label_noise_bzr}, and \ref{fig:label_noise_nc} visualize the results on \textsc{Mutagenicity}, \textsc{BZR}, and \textsc{NCI109}, respectively. Each figure presents from left to right: (left) the full training curves, (middle) final test accuracy versus noise level, and (right) the number of epochs required to reach 99\% training accuracy. These plots collectively illustrate the sharp transition from generalization to memorization and the dataset-specific sensitivity of MPNNs to label corruption.

As the label noise increases, the time required to reach 99\% training accuracy increases moderately—suggesting that fitting corrupted labels is harder, but still feasible. However, all models eventually reach near-perfect training accuracy across all noise levels, even for fully randomized labels ($p=1.0$), underscoring the high memorization capacity of GNNs. In stark contrast, the test accuracy consistently deteriorates with increasing noise and converges to chance level at $p = 1.0$, where labels are entirely uninformative. This gap between training and test performance confirms that GNNs can overfit to pure noise and emphasizes the need for principled regularization and early stopping to preserve generalization.

\subsection{Task 1: Median-Based Labeling with Cycle Counts}

To investigate the role of local structural patterns in graph classification, we generate 3,000 random graphs using three common models: Erd\H{o}s--R\'enyi (ER), Barab\'asi--Albert (BA), and Stochastic Block Model (SBM). Each graph's label is related to the sum of its 3-cycle and 4-cycle counts, which are computed using NetworkX \citep{hagberg2008exploring}. The total cycle count is then used to assign labels: graphs with counts below the dataset median are labeled \texttt{0}, while those above receive label \texttt{1}.

For all synthetic graphs, we sample the number of nodes randomly between 35 and 55. For each random graph model, we chose the following parameters.

\begin{itemize}
    \item{Erd\H{o}s--R\'enyi (ER) Graphs}: We generate an ER graph with edge probability $p = 0.1$.
    
    \item{Barab\'asi--Albert (BA) Graphs}: Each new node is connected to $m = 2$ existing nodes following the BA preferential attachment process.

    \item{Stochastic Block Model (SBM) Graphs}: We randomly select between 3 and 6 blocks, ensuring each block has at least 3 nodes. The probability of an edge within the same block is sampled uniformly between $[0.1, 0.3]$, while inter-block connections have a probability in the range $[0.001, 0.02]$.
\end{itemize}

We evaluate multiple GNN variants to assess the impact of different levels of expressivity:

\begin{itemize}
    \item{MPNN}: A standard Message Passing Neural Network \citep{gilmer2017neural}.

    \item{$\mathcal{F}_l$-MPNN}: MPNNs enriched with cycle counts of cycles up to length $l$ \citep{Bouritsas_2023, barcelo_gnns_local_graph_params}.

    \item{Subgraph GNN}: Incorporates subgraph structures \citep{frascaUnderstandingExtendingSubgraph2022}.
    
    \item{Local 2-GNN}: A local variant of 2-GNN \citep{morrisWeisfeilerLemanGo2019}.
    
    \item{Local Folklore 2-GNN}: A local variant of 2-Folklore-GNN \citep{zhangWeisfeilerLehmanQuantitativeFramework2024}.
\end{itemize}

Each model is implemented in PyTorch Geometric \citep{Fey/Lenssen/2019}, using the implementation details of \citet{zhangWeisfeilerLehmanQuantitativeFramework2024}. We trained the models using the Adam optimizer \citep{kingma2014adam} with a learning rate of $10^{-3}$ for 100 epochs and cosine scheduler. We fix the test set, and we perform 10-fold cross-validation, reporting mean accuracy (ACC) and standard deviation. Performance is measured at both the final epoch and the best validation epoch.

We present additional experimental results in \Cref{tab:er_sum_basis_C4}–\ref{table:appendix_cycle_basis} in this appendix. To further illustrate the strong correlation between labels and $ \mathcal{F}_4 $-TMD, we provide a qualitative visualization of the dataset structure (see \cref{fig:cycle_sum_qualitative}). Specifically, we select 50 graphs of the ER dataset and project them into a two-dimensional space using Multidimensional Scaling (MDS), with distances computed based on standard TMD, $ \mathcal{F}_3 $-TMD, $ \mathcal{F}_4 $-TMD, and $ \mathcal{F}_{12} $-TMD. Among these, the projection using $ \mathcal{F}_4 $-TMD exhibits the clearest class separation, visually reinforcing its alignment with the classification task.

\subsection{Task 2: Real-World Datasets}
We evaluate our generalization framework on six real-world datasets from the TU Dataset collection \citep{morris2020tudatasetcollectionbenchmarkdatasets}, spanning both biological and chemical graph classification tasks:

\begin{itemize}
\item \textbf{Mutagenicity}: 4,337 molecular graphs labeled as mutagenic or non-mutagenic.
\item \textbf{PROTEINS}: 1,113 protein graphs classified by their enzymatic function.
\item \textbf{BZR}: 405 graphs representing benzodiazepine receptor ligands labeled for activity.
\item \textbf{COX2}: 467 graphs labeled based on activity against the COX2 enzyme.
\item \textbf{NCI109}: 4,127 graphs representing chemical compounds screened for anti-cancer activity.
\item \textbf{AIDS}: 2,000 graphs with binary activity labels relevant to AIDS antiviral screening.
\end{itemize}

To assess whether generalization in MPNNs and MLPs with fixed feature extractors depends on the structural similarity between test and training graphs, as predicted by our generalization bound in \Cref{thm:gin_gen_bound_final}, we conduct two complementary evaluations:

\begin{itemize}
\item[(i)] \textbf{GIN with TMD}: We train GINs \citep{xu*HowPowerfulAre2018} and measure distances between graphs using the TMD.
\item[(ii)] \textbf{Fixed encoder with Hamming distance}: We compute molecular fingerprints \citep{Gainza2019DecipheringIF} for each graph, apply an MLP classifier, and measure distances in the resulting feature space via Hamming distance.
\end{itemize}

We analyze two key properties:

\paragraph{Error vs. TMD Distance} To quantify the relationship between structural proximity and model performance, we report cumulative accuracy. Test graphs are ordered by increasing distance to the training set, and we compute the cumulative average accuracy. Specifically, given the ordered test set $G_1, \ldots, G_{N_{\text{te}}}$, we define
\[
\tilde{y}_i = \frac{1}{i}\sum_{j=1}^i \mathbbm{1} \left[  h(G_i)[y_{G_i}] \leq \max_{k \neq y_{G_i}} h(G)[k] \right],
\]
where $\tilde{y}_i$ represents the average accuracy over the first $i$ graphs. For end-to-end trained GIN models, \Cref{fig:tmd-vs-acc-three-datasets} and \Cref{fig:tmd_vs_acc_app} illustrate that test accuracy consistently decreases as the structural distance from the training distribution increases. A similar trend is observed for MLP+fingerprint models in \Cref{fig:fp_dist}. These empirical results align with our theoretical insights presented in \Cref{thm:gin_gen_bound_final}, which predict this performance degradation.

\paragraph{Theoretical vs. Empirical Generalization Bound} We further examine how well our bound from \Cref{thm:gin_gen_bound_final} predicts the actual generalization behavior. For each dataset and model, we compute the theoretical bound using the empirical training loss and the graph distances to the training set as prescribed by  \Cref{thm:gin_gen_bound_final} We compare this bound to the observed test error across all datasets. Results are presented in \Cref{fig:svg-bounds-only} and in the appendix (\Cref{fig:bound_app}). While our bound is not tight, it does track the empirical error trends well across datasets, substantiating our claim that structural similarity with respect to the right pseudometric governs generalization in graph learning.

\begin{figure*}[t]\centering
\begin{tikzpicture}[scale=.7]
\begin{groupplot}[
    group style={group size=4 by 1},
    ticks=none,
    width=.4\linewidth
]
\nextgroupplot    \addplot[
        scatter/classes={
        0={blue, fill=LightBlue1, mark=*}, 
        1={red, fill=LightPink1, mark=diamond*}
        },
        scatter,
        only marks, 
        scatter src=explicit symbolic
    ] table [meta=label,x=dim1,y=dim2,col sep=comma] {data/MDS_C2.csv};
\hfill
\nextgroupplot    \addplot[
        scatter/classes={
        0={blue, fill=LightBlue1, mark=*}, 
        1={red, fill=LightPink1, mark=diamond*}
        },
        scatter,
        only marks, 
        scatter src=explicit symbolic,
    ] table [meta=label,x=dim1,y=dim2,col sep=comma] {data/MDS_C3.csv};
\hfill
\nextgroupplot \addplot[
        scatter/classes={
        0={blue, fill=LightBlue1, mark=*}, 
        1={red, fill=LightPink1, mark=diamond*}
        },
        scatter,
        only marks, 
        scatter src=explicit symbolic,
    ] table [meta=label,x=dim1,y=dim2,col sep=comma] {data/MDS_C4.csv};
\hfill
\nextgroupplot    \addplot[
        scatter/classes={
        0={blue, fill=LightBlue1, mark=*}, 
        1={red, fill=LightPink1, mark=diamond*}
        },
        scatter,
        only marks, 
        scatter src=explicit symbolic,
    ] table [meta=label,x=dim1,y=dim2,col sep=comma] {data/MDS_C12.csv};
\end{groupplot}
\end{tikzpicture}
\caption{Low-dimensional embeddings via MDS for Task 1 in \Cref{sec:experiments}. MDS projects the graphs into $\mathbb{R}^2$ while preserving the pairwise $\mathcal{F}_l$-TMDs between graphs in the dataset. From left to right: TMD, $\mathcal{F}_3$-TMD, $\mathcal{F}_4$-TMD, and $\mathcal{F}_{12}$-TMD. Visually, $\mathcal{F}_4$-TMD achieves the best class separation, highlighting that the labels strongly correlate with $\mathcal{F}_4$-TMD. As a result, $\mathcal{F}_4$-MPNNs provide the best generalization and predictive performance.}
\label{fig:cycle_sum_qualitative}
\end{figure*}
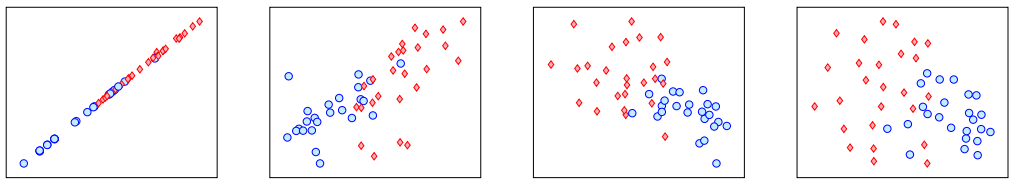

\begin{table}
\centering
\caption{Train and test accuracy, Erdős–Rényi graphs. The task is to predict if the count of cycles of length at most $4$ in the cycle basis of each graph is above or below the median of the whole dataset. The node features are augmented with (hom$_N$) homomorphism-counts of cycles up to length N, (sub$_N$) subgraph-counts of cycles up to length N, and (bas$_N$) number of cycle graphs up to length N in the cycle basis.}
\tiny
\label{tab:er_sum_basis_C4}
\begin{subtable}[c]{.5\linewidth}
\centering
\caption{w/ early stopping.}
    \begin{tabular}{lccc}
\toprule
& \multicolumn{3}{c}{Num. layers}\\
\cmidrule{2-4}
Model & $1$ & $3$ & $5$ \\
\midrule
L-G & \makecell{$0.9180\pm0.0179$\\$0.8707\pm0.0105$} & \makecell{$0.9216\pm0.0379$\\$0.8707\pm0.0110$} & \makecell{$0.9226\pm0.0318$\\$0.8637\pm0.0106$}\\
\arrayrulecolor{OrangeRed1! 50}\midrule

LF-G & \makecell{$0.9162\pm0.0165$\\$0.8647\pm0.0095$} & \makecell{$0.9148\pm0.0409$\\$0.8683\pm0.0086$} & \makecell{$0.9226\pm0.0391$\\$0.8623\pm0.0112$}\\
\arrayrulecolor{OrangeRed1! 50}\midrule

MP & \makecell{$0.9050\pm0.0094$\\$0.8737\pm0.0091$} & \makecell{$0.9135\pm0.0256$\\$0.8694\pm0.0165$} & \makecell{$0.9255\pm0.0403$\\$0.8597\pm0.0126$}\\
\arrayrulecolor{OrangeRed1! 50}\midrule

MP+hom$_3$ & \makecell{$0.8993\pm0.0122$\\$0.8783\pm0.0105$}& \makecell{$0.9115\pm0.0316$\\$0.8710\pm0.0116$} & \makecell{$0.9335\pm0.0462$\\$0.8750\pm0.0098$}\\
\arrayrulecolor{OrangeRed1! 50}\midrule

MP+hom$_4$ & \makecell{$0.8993\pm0.0121$\\$0.8783\pm0.0089$} & \makecell{$0.8998\pm0.0120$\\$0.8767\pm0.0063$} & \makecell{$0.9185\pm0.0408$\\$0.8710\pm0.0125$}\\
\arrayrulecolor{OrangeRed1! 50}\midrule

MP+hom$_7$ & \makecell{$0.8913\pm0.0117$\\$0.8740\pm0.0065$} & \makecell{$0.9002\pm0.0198$\\$0.8693\pm0.0178$} & \makecell{$0.8915\pm0.0286$\\$0.8700\pm0.0097$}\\
\arrayrulecolor{OrangeRed1! 50}\midrule

MP+bas$_3$ & \makecell{$0.9210\pm0.0201$\\$0.8717\pm0.0173$} & \makecell{$0.9465\pm0.0376$\\$0.8773\pm0.0168$} & \makecell{$0.9436\pm0.0393$\\$0.8783\pm0.0096$}\\
\arrayrulecolor{OrangeRed1! 50}\midrule

MP+bas$_4$ & \makecell{$0.9754\pm0.0061$\\$0.9870\pm0.0046$} & \makecell{$0.9761\pm0.0066$\\$0.9700\pm0.0154$} & \makecell{$ 0.9841\pm0.0081$\\$0.9603\pm0.0048$}\\
\arrayrulecolor{OrangeRed1! 50}\midrule

MP+bas$_7$ & \makecell{$0.9782\pm0.0087$\\$0.9720\pm0.0111$} & \makecell{$0.9804\pm0.0051$\\$0.9573\pm0.0077$} & \makecell{$0.9850\pm0.0155$\\$0.9490\pm0.0093$}\\
\arrayrulecolor{OrangeRed1! 50}\midrule

MP+sub$_3$ & \makecell{$0.8979\pm0.0174$\\$0.8827\pm0.0123$} & \makecell{$0.9340\pm0.0473$\\$0.8697 \pm.0135$} & \makecell{$0.9166\pm0.0320$\\$0.8747\pm0.0144$}\\
\arrayrulecolor{OrangeRed1! 50}\midrule

MP+sub$_4$ & \makecell{$0.8976\pm0.0111$\\$0.8780\pm0.0031$} & \makecell{$0.9090\pm0.0270$\\$0.8717\pm0.0098$} & \makecell{$0.9233\pm0.0287$\\$0.8733\pm0.0109$}\\
\arrayrulecolor{OrangeRed1! 50}\midrule

MP+sub$_7$ & \makecell{$0.8963\pm0.0059$\\$0.8790\pm0.0067$} & \makecell{$0.9140\pm0.0174$\\$0.8760\pm0.0092$} & \makecell{$0.9312\pm0.0304$\\$0.8657\pm0.0075$}\\
\arrayrulecolor{OrangeRed1! 50}\midrule

Sub-G & \makecell{$0.9194\pm0.0238$\\$0.8710\pm0.0088$}&\makecell{$0.9060\pm0.0338$\\$0.8663\pm0.0074$} &\makecell{$0.9561\pm0.0411$\\$0.8640\pm0.0099$}\\
\arrayrulecolor{black}\bottomrule
\end{tabular}
\end{subtable}\hfill
\begin{subtable}[c]{.425\linewidth}
\centering
\caption{w/o early stopping.}
    \begin{tabular}{ccc}
\toprule
\multicolumn{3}{c}{Num. layers}\\
\cmidrule{1-3}
$1$ & $3$ & $5$ \\
\midrule
\makecell{$0.9904\pm0.0041$\\$0.8543\pm0.0063$} & \makecell{$0.9998\pm0.0004$\\$0.8520\pm0.0073$} & \makecell{$1.0000\pm0.0001$\\$0.8553\pm0.0102$}\\
\arrayrulecolor{OrangeRed1! 50}\midrule

\makecell{$0.9868\pm0.0051$\\$0.8450\pm0.0135$}& \makecell{$0.9999\pm0.0003$\\$0.8540\pm0.0121$} & \makecell{$1.0000\pm0.0001$\\$0.8543\pm0.0116$}\\
\arrayrulecolor{OrangeRed1! 50}\midrule

\makecell{$0.9906\pm0.0033$\\$0.8490\pm0.0045$} & \makecell{$1.0000\pm0.0001$\\$0.8567\pm0.0087$} & \makecell{$0.9998\pm0.0005$\\$0.8637\pm0.0138$}\\
\arrayrulecolor{OrangeRed1! 50}\midrule

\makecell{$0.9886\pm0.0039$\\$0.8480\pm0.0108$} & \makecell{$0.9999\pm0.0002$\\$0.8537\pm0.0107$} & \makecell{$1.0000\pm0.0000$\\$0.8673\pm0.0141$}\\
\arrayrulecolor{OrangeRed1! 50}\midrule

\makecell{$0.9302\pm0.0039$\\$ 0.8720\pm0.0088$} & \makecell{$0.9978\pm0.0018$\\$0.8520\pm0.0138$} & \makecell{$0.9990\pm0.0011$\\$0.8540\pm0.0102$}\\
\arrayrulecolor{OrangeRed1! 50}\midrule

\makecell{$0.9091\pm0.0081$\\$0.8693\pm0.0099$} & \makecell{$0.9795\pm0.0040$\\$0.8450\pm0.0123$} & \makecell{$0.9897\pm0.0058$\\$0.8390\pm0.0145$}\\
\arrayrulecolor{OrangeRed1! 50}\midrule

\makecell{$0.9956\pm0.0028$\\$0.8657\pm0.0085$} & \makecell{$0.9999\pm0.0002$\\$0.8670\pm0.0097$} & \makecell{$1.0000\pm0.0000$\\$0.8723\pm0.0049$}\\
\arrayrulecolor{OrangeRed1! 50}\midrule

\makecell{$0.9904\pm0.0034$\\$0.9793\pm0.0068$} & \makecell{$0.9985\pm0.0021$\\$0.9587\pm0.0072$} & \makecell{$0.9997\pm0.0004$\\$0.9523\pm0.0102$}\\
\arrayrulecolor{OrangeRed1! 50}\midrule

\makecell{$0.9958\pm0.0026$\\$0.9660\pm0.0065$} & \makecell{$0.9999\pm0.0003$\\$0.9550\pm0.0062$} & \makecell{$0.9998\pm0.0005 $\\$0.9537\pm0.0072$}\\
\arrayrulecolor{OrangeRed1! 50}\midrule

\makecell{$0.9933\pm0.0029$\\$0.8510\pm0.0078$} & \makecell{$1.0000\pm0.0001$\\$0.8547\pm0.0075$} & \makecell{$1.0000\pm0.0000$\\$0.8683\pm0.0080$}\\
\arrayrulecolor{OrangeRed1! 50}\midrule

\makecell{$0.9853\pm0.0034$\\$0.8487\pm0.0155$} & \makecell{$0.9995\pm0.0009$\\$0.8613\pm0.0127$} & \makecell{$0.9998\pm0.0004$\\$0.8623\pm0.0130$}\\
\arrayrulecolor{OrangeRed1! 50}\midrule

\makecell{$0.9018\pm0.0035$\\$0.8803\pm0.0070$} & \makecell{$0.9670\pm0.0070$\\$0.8680\pm0.0121$} & \makecell{$0.9815\pm0.0090$\\$0.8553\pm0.0135$}\\
\arrayrulecolor{OrangeRed1! 50}\midrule

\makecell{$0.9887\pm0.0048$\\$0.8623\pm0.0058$}&\makecell{$1.0000\pm0.0001$\\$0.8533\pm0.0088$} & \makecell{$1.0000\pm0.0000$\\$0.8680\pm0.0100$}\\
\arrayrulecolor{black}\bottomrule
\end{tabular}
\end{subtable}\end{table}

\begin{table}[htb]
\centering
\caption{Train and test accuracy, Barabasi-Albert graphs. The task is to predict if the count of cycles of length at most $4$ in the cycle basis of each graph is above or below the median of the whole dataset. The node features are augmented with (hom$_N$) homomorphism-counts of cycles up to length N, (sub$_N$) subgraph-counts of cycles up to length N, and (bas$_N$) number of cycle graphs up to length N in the cycle basis.}
\tiny
\label{tab:ba_sum_basis_C4}
\begin{subtable}[c]{.5\linewidth}
\centering
\caption{w/ early stopping.}
\begin{tabular}{lccc}
\toprule
& \multicolumn{3}{c}{Num. layers}\\
\cmidrule{2-4}
Model & 1 & 2 & 5 \\
\midrule
L-G
&
\makecell{$0.819 \pm 0.013$\\$0.767 \pm 0.014$}
&
\makecell{$0.794 \pm 0.017$\\$0.751 \pm 0.013$}
&
\makecell{$0.793 \pm 0.019$\\$0.747 \pm 0.010$}
\\
\arrayrulecolor{OrangeRed1! 50}\midrule

LF-G
&
\makecell{$0.818 \pm 0.018$\\$ 0.775 \pm 0.008$}
&
\makecell{$0.807 \pm 0.021$\\$ 0.770 \pm 0.012$}
&
\makecell{$0.822 \pm 0.051$\\$0.747 \pm 0.015$}
\\
\arrayrulecolor{OrangeRed1! 50}\midrule

MP
&
\makecell{$0.811 \pm 0.015$\\$0.771 \pm 0.015$}
&
\makecell{$0.802 \pm 0.013$\\$ 0.767 \pm 0.011$}
&
\makecell{$0.798 \pm 0.022$\\$0.757 \pm 0.009$}
\\
\arrayrulecolor{OrangeRed1! 50}\midrule

MP+hom$_4$
&
\makecell{$0.777 \pm 0.010$\\$0.774 \pm 0.019$}
&
\makecell{$0.750 \pm 0.010$\\$0.749 \pm 0.021 $} 
&
 \makecell{$0.789 \pm 0.019$\\$0.772 \pm 0.01$} 
\\
\arrayrulecolor{OrangeRed1! 50}\midrule

MP+bas$_3$
&
\makecell{$0.840 \pm 0.010$\\$0.809 \pm 0.020$}
&
\makecell{$0.847 \pm 0.017$\\$0.804 \pm 0.021$} 
&
\makecell{$0.863 \pm 0.039$ \\$0.798 \pm 0.016$} 
\\
\arrayrulecolor{OrangeRed1! 50}\midrule

MP+bas$_4$
&
\makecell{$0.963 \pm 0.012$\\$0.995 \pm 0.003$}
&
 \makecell{$0.970 \pm 0.005$\\$0.994 \pm 0.005$}
&
\makecell{$0.963 \pm 0.008$ \\$0.980 \pm 0.008$}
\\
\arrayrulecolor{OrangeRed1! 50}\midrule

MP+bas$_8$
&
\makecell{$0.978 \pm 0.008$\\$0.989 \pm 0.005$}
&
\makecell{$0.978 \pm 0.014$\\$0.977 \pm 0.007 $}
&
\makecell{$0.981 \pm 0.016$\\$ 0.953 \pm 0.006 $}
\\
\arrayrulecolor{OrangeRed1! 50}\midrule

Sub-G
&
\makecell{$0.795 \pm 0.013$\\$0.760 \pm 0.015$}
&
\makecell{$0.814 \pm 0.030$\\$0.730 \pm 0.015$}
&
\makecell{$ 0.821 \pm 0.060$\\$0.733 \pm 0.016$}
\\
\arrayrulecolor{black}\bottomrule
\end{tabular}

\end{subtable}\hfill
\begin{subtable}[c]{.45\linewidth}
\centering
\caption{w/o early stopping.}
\begin{tabular}{ccc}
\toprule
\multicolumn{3}{c}{Num. layers}\\
\cmidrule{1-3}
1 & 2 & 5 \\
\midrule
\makecell{$0.928 \pm 0.009$\\$ 0.747 \pm 0.007$}
&
\makecell{$0.827 \pm 0.006$\\$0.757 \pm 0.014$}
&
\makecell{$0.993 \pm 0.003$\\$0.719 \pm 0.011$}
\\
\arrayrulecolor{OrangeRed1! 50}\midrule

\makecell{$0.846 \pm 0.008$\\$0.773 \pm 0.013$}
&
\makecell{$0.902 \pm 0.006$\\$ 0.745 \pm 0.010$} 
&
\makecell{$0.978 \pm 0.006$\\$ 0.730 \pm 0.018$}
\\
\arrayrulecolor{OrangeRed1! 50}\midrule

\makecell{$0.849 \pm 0.004$\\$ 0.762 \pm 0.009$}
&
\makecell{$0.909 \pm 0.011$\\$0.740 \pm 0.011$}
&
\makecell{${1.000 \pm 0.000}$\\$0.751 \pm 0.025 $}
\\
\arrayrulecolor{OrangeRed1! 50}\midrule

\makecell{$0.787 \pm 0.004$\\$0.778 \pm 0.006$}
&
\makecell{$0.760 \pm 0.005$\\$0.759 \pm 0.011$} 
&
\makecell{$0.908 \pm 0.012$\\$0.737 \pm 0.015$}
\\
\arrayrulecolor{OrangeRed1! 50}\midrule

\makecell{$0.962 \pm 0.003$\\$0.809 \pm 0.016$}
&
\makecell{$ 0.982 \pm 0.005$\\$0.807 \pm 0.015$} 
&
\makecell{$0.998 \pm 0.001$\\$ 0.807 \pm 0.008$} 
\\
\arrayrulecolor{OrangeRed1! 50}\midrule

\makecell{$0.972 \pm 0.005$\\${0.994 \pm 0.004}$}
&
\makecell{$0.977 \pm 0.007$\\${0.991 \pm 0.006}$}
&
\makecell{$ 0.992 \pm 0.005$\\${0.973 \pm 0.003}$}
\\
\arrayrulecolor{OrangeRed1! 50}\midrule

\makecell{${0.982 \pm 0.004}$\\$0.989 \pm 0.007$}
&
\makecell{${0.994 \pm 0.003}$\\$0.971 \pm 0.009$} 
&
\makecell{$0.998 \pm 0.001$\\$0.954 \pm 0.004$}
\\
\arrayrulecolor{OrangeRed1! 50}\midrule

\makecell{$ 0.918 \pm 0.004$\\$0.755 \pm 0.015$}
&
\makecell{$0.878 \pm 0.011$\\$0.728 \pm 0.011$}
&
\makecell{$0.992 \pm 0.004$\\$0.712 \pm 0.025$}
\\
\arrayrulecolor{black}\bottomrule
\end{tabular}  
\end{subtable}\end{table}

\begin{table}[htb]
\centering
\caption{Train and test accuracy, Stochastic Block Model graphs. The task is to predict if the count of cycles of length at most $4$ in the cycle basis of each graph is above or below the median of the whole dataset. The node features are augmented with (hom$_N$) homomorphism-counts of cycles up to length N, (sub$_N$) subgraph-counts of cycles up to length N, and (bas$_N$) number of cycle graphs up to length N in the cycle basis.}
\tiny
\label{tab:sbm_sum_basis_C4}
\begin{subtable}[c]{.5\linewidth}
\centering
\caption{w/ early stopping.}
\begin{tabular}{lccc}
\toprule
& \multicolumn{3}{c}{Num. layers}\\
\cmidrule{2-4}
Model & 1 & 2 & 5 \\
\midrule
L-G 
&
\makecell{$0.961 \pm 0.007$\\$0.961 \pm 0.003$}
&
\makecell{$0.972 \pm 0.009$\\$0.949 \pm 0.003$}
&
\makecell{$0.960 \pm 0.011$\\$0.943 \pm 0.010$}
\\
\arrayrulecolor{OrangeRed1! 50}\midrule

LF-G 
&
\makecell{$0.965 \pm 0.005$\\$ 0.948 \pm 0.005$}
&
\makecell{$ 0.959 \pm 0.020$\\$0.942 \pm 0.004$}
&
\makecell{$0.962 \pm 0.020$\\$0.953 \pm 0.011$}
\\
\arrayrulecolor{OrangeRed1! 50}\midrule

MP 
&
\makecell{${0.967 \pm 0.006}$\\$ 0.955 \pm 0.006$}
&
\makecell{${0.973 \pm 0.004}$\\$0.962 \pm 0.007$} 
&
 \makecell{$0.967 \pm 0.004$\\$0.955 \pm 0.006$}
\\
\arrayrulecolor{OrangeRed1! 50}\midrule

MP+hom$_4$ 
&
\makecell{$0.947 \pm 0.005$\\$ 0.953 \pm 0.006 $}
&
\makecell{$0.950 \pm 0.007$\\$0.946 \pm 0.006$}
&
\makecell{$0.963 \pm 0.013$\\$0.953 \pm 0.004$}
\\
\arrayrulecolor{OrangeRed1! 50}\midrule

MP+bas$_3$ 
&
\makecell{$0.966 \pm 0.007$\\$0.958 \pm 0.012$}
&
\makecell{$0.958 \pm 0.007$\\$0.960 \pm 0.005$}
&
\makecell{$0.973 \pm 0.012$\\$0.959 \pm 0.007$}
\\
\arrayrulecolor{OrangeRed1! 50}\midrule

MP+bas$_4$ 
&
\makecell{$0.957 \pm 0.011$\\$0.981 \pm 0.005 $}
&
\makecell{$0.959 \pm 0.007 $\\${0.977 \pm 0.011}$} 
&
\makecell{${0.979 \pm 0.012}$\\${0.975 \pm 0.005}$}
\\
\arrayrulecolor{OrangeRed1! 50}\midrule

MP+bas$_8$ 
&
\makecell{$0.964 \pm 0.006$\\${0.982 \pm 0.007}$}
&
\makecell{$0.959 \pm 0.010$\\$ 0.973 \pm 0.004$}
&
\makecell{$0.955 \pm 0.005$\\${0.975 \pm 0.008}$}
\\
\arrayrulecolor{OrangeRed1! 50}\midrule

Sub-G 
&
\makecell{$0.957 \pm 0.006$\\$ 0.951 \pm 0.012$}
&
\makecell{$0.959 \pm 0.010$\\$ 0.943 \pm 0.006$ } 
&
\makecell{$0.977 \pm 0.006$\\$0.934 \pm 0.008$}
\\
\arrayrulecolor{black}\bottomrule

\end{tabular}
\end{subtable}\hfill
\begin{subtable}[c]{.425\linewidth}
\centering
\caption{w/o early stopping.}
\begin{tabular}{ccc}
\toprule
\multicolumn{3}{c}{Num. layers}\\
\cmidrule{1-3}
1 & 2 & 5 \\
\midrule
\makecell{$0.968 \pm 0.003$\\$0.961 \pm 0.005$}
&
\makecell{$0.981 \pm 0.002$\\$0.948 \pm 0.005$}
&
\makecell{$0.998 \pm 0.001$\\$0.923 \pm 0.004$}
\\
\arrayrulecolor{OrangeRed1! 50}\midrule

\makecell{$0.968 \pm 0.003$\\$0.957 \pm 0.002$}
&
\makecell{$0.979 \pm 0.005$\\$0.941 \pm 0.006$}
&
\makecell{${0.999 \pm 0.001}$\\$0.937 \pm 0.005$}
\\
\arrayrulecolor{OrangeRed1! 50}\midrule

\makecell{$0.973 \pm 0.006$\\$0.951 \pm 0.005$}
&
\makecell{$0.975 \pm 0.003$\\$0.963 \pm 0.005$}
&
\makecell{$0.991 \pm 0.002$\\$ 0.955 \pm 0.007$}
\\
\arrayrulecolor{OrangeRed1! 50}\midrule

\makecell{$0.958 \pm 0.003$\\$0.953 \pm 0.008$}
&
\makecell{$0.963 \pm 0.002$\\$0.939 \pm 0.008$} 
&
\makecell{$0.973 \pm 0.006$\\$ 0.953 \pm 0.006$}
\\
\arrayrulecolor{OrangeRed1! 50}\midrule

\makecell{$0.979 \pm 0.003$\\$0.960 \pm 0.005$}
&
\makecell{$0.991 \pm 0.001$\\$0.962 \pm 0.007$}
&
\makecell{$0.994 \pm 0.003$\\$0.959 \pm 0.007$}
\\
\arrayrulecolor{OrangeRed1! 50}\midrule

\makecell{$0.980 \pm 0.001$\\${0.989 \pm 0.003}$}
&
\makecell{$0.989 \pm 0.003$\\${0.984 \pm 0.003}$}
&
\makecell{$0.995 \pm 0.002$\\${0.971 \pm 0.010}$}
\\
\arrayrulecolor{OrangeRed1! 50}\midrule

\makecell{${0.982 \pm 0.004}$\\$0.978 \pm 0.003$}
&
\makecell{${0.995 \pm 0.001}$\\$0.967 \pm 0.003 $}
&
\makecell{$ {0.999 \pm 0.001}$\\$ 0.966 \pm 0.006 $}
\\
\arrayrulecolor{OrangeRed1! 50}\midrule

\makecell{$0.970 \pm 0.006$\\$0.954 \pm 0.004$}
&
\makecell{$0.975 \pm 0.002$\\$0.948 \pm 0.009$}
&
\makecell{$0.994 \pm 0.003$\\$0.931 \pm 0.009$}
\\
\arrayrulecolor{black}\bottomrule
\end{tabular}  
\end{subtable}\label{table:appendix_cycle_basis}
\end{table}

\begin{figure}[t]
\centering
\pgfplotsset{compat=1.18}

\def\data{data}                

\definecolor{Ltwo}{RGB}{ 46, 94,153}
\definecolor{Lfour}{RGB}{230,124, 56}
\definecolor{Lsix}{RGB}{ 25,140, 60}

\newcommand{\AddLayer}[4]{\ifnum#2=2
    \addlegendimage{empty legend}
    \addlegendentry{\hspace{-.7cm}\textbf{\#layers}} 
  \fi

\addplot+[smooth, thick,  color=#3, mark options={fill=#3, color=#3}, mark size=1pt] table [col sep=comma, header=true, x=distance, y=meanAcc]{\data/#1_#2.csv};
\ifnum#2=2
    \addlegendentry{1} 
  \fi
\ifnum#2=4
    \addlegendentry{3} 
  \fi
\ifnum#2=6
    \addlegendentry{5} 
  \fi
\addplot[name path=topline, smooth, color=#3,forget plot,
]table[col sep=comma, header=true,
          x=distance, y expr=\thisrow{meanAcc} + \thisrow{stdAcc}]
      {\data/#1_#2.csv};
\addplot[name path=bottomline, smooth, color=#3,forget plot,
]table[col sep=comma, header=true,
          x=distance, y expr=\thisrow{meanAcc} - \thisrow{stdAcc}]
      {\data/#1_#2.csv};
\addplot[#3,fill opacity=.1,forget plot,
] fill between[of=topline and bottomline];
}

\begin{tikzpicture}
\begin{groupplot}[
  group style = {group size = 3 by 2, horizontal sep = 1cm},
  width  = 0.35\textwidth,
  height = 0.30\textwidth,
  xmode  = log, log basis x=10,
  xlabel = {TMD to training set},
  tick label style = {font=\footnotesize},
  label style      = {font=\footnotesize},
  legend style={
  draw=none,
  font=\tiny,                         nodes={scale=0.7,transform shape},  inner xsep=1pt, inner ysep=0.5pt,   /tikz/every even column/.style={column sep=0.6ex},
  at={(axis description cs:0,1)},
  anchor=north west,
  xshift=6.5em
    },
    legend image post style={scale=0.7}   ]

\nextgroupplot[
    title={\small COX2},
    ylabel={Accuracy},
    ymin=0.75, ymax=1.05,
    ytick={0.8, 0.9, 1.00},
    legend style={at={(.25, 1.05)}, anchor=north, font=\footnotesize, draw},
]
\AddLayer{COX2}{2}{Ltwo}{1}
\AddLayer{COX2}{4}{Lfour}{1}
\AddLayer{COX2}{6}{Lsix}{1}

\nextgroupplot[
    title={\small AIDS},
    ylabel={},                                ymin=0.98, ymax=1.01,
    ytick={0.98, 1.0},
    legend style={at={(.25, 1.05)}, anchor=north, font=\footnotesize, draw},
]
\AddLayer{AIDS}{2}{Ltwo}{0}
\AddLayer{AIDS}{4}{Lfour}{0}
\AddLayer{AIDS}{6}{Lsix}{0}

\nextgroupplot[
    title={\small PROTEINS},
    ylabel={},
    ymin=0.65, ymax=0.9,
    ytick={0.70,0.80,0.90},
    legend style={at={(.25, 1.05)}, anchor=north, font=\footnotesize, draw},
]
\AddLayer{PROTEINS}{2}{Ltwo}{0}
\AddLayer{PROTEINS}{4}{Lfour}{0}
\AddLayer{PROTEINS}{6}{Lsix}{0}

\end{groupplot}
\end{tikzpicture}
\caption{Accuracy of a GIN with 1, 3, and 5 layers versus Tree Mover’s
Distance (log scale) to the training dataset.  }
\label{fig:tmd_vs_acc_app}
\end{figure}

\begin{figure*}[t!]
    \centering
    \begin{tikzpicture}
    \begin{groupplot}[
        group style={group size=1 by 1},
        width=1\textwidth
    ]

    \nextgroupplot[xlabel={FPD}, title={Mutagenicity},ylabel={Cumulative test avg accuracy}]
\addplot[
        color=darkerred,
        mark=none,
    ] table [x=Distance, y=CumulativeAccuracy, col sep=comma] {data/Mutagenicity_fp_1layer.csv};
    \addlegendentry{$1$ layer};
    \addplot[
        color=darkerblue,
        mark=none,
    ] table [x=Distance, y=CumulativeAccuracy, col sep=comma] {data/Mutagenicity_fp_2layer.csv};
    \addlegendentry{$2$ layer};
    \addplot[
        color=olive,
        mark=none,
    ] table [x=Distance, y=CumulativeAccuracy, col sep=comma] {data/Mutagenicity_fp_3layer.csv};
    \addlegendentry{$3$ layer};

    \end{groupplot}
    \end{tikzpicture}   
\caption{Cumulative test accuracy of MLP vs. Fingerprint Distance (FPD).}
    \label{fig:fp_dist}
\end{figure*}

\begin{figure}
    \centering
    \errorPlot{BZR}\hfill
    \errorPlot{COX2}
    \errorPlot{AIDS}\hfill
    \errorPlot{PROTEINS}
    \caption{Caption}
    \label{fig:bound_app}
\end{figure}

\subsection{Task 3: MDS-Based Labeling via \texorpdfstring{$\mathcal{F}_5$}{F5}-TMD Distances}
For the second synthetic task, we construct 500 random graphs and compute pairwise distances using the $ \mathcal{F}_5 $-TMD pseudometric, where $ \mathcal{F}_5 $ includes cycles up to length 5. Labels are assigned using a clustering algorithm, ensuring that graphs closer in $ \mathcal{F}_5 $-TMD  likely share the same label.
We test different GNNs and summarize the results in \Cref{fig:gnn_performance_mds}.

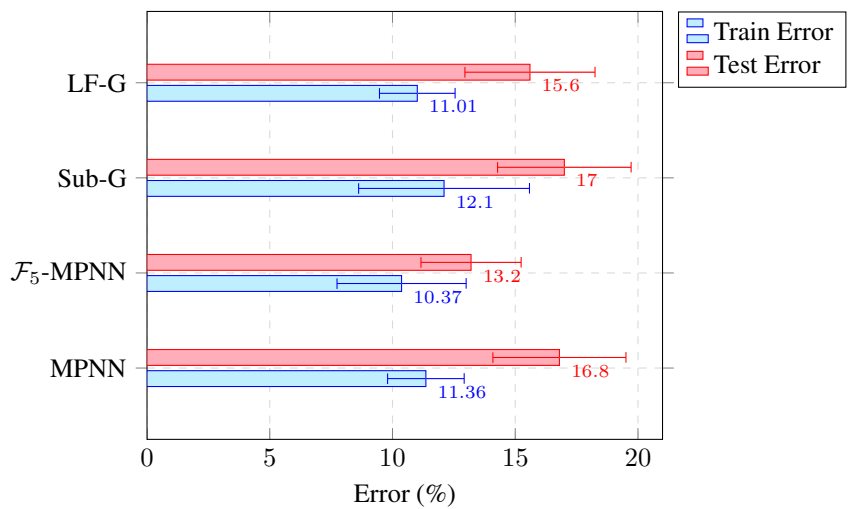
\begin{figure}
\centering
\begin{tikzpicture}
\begin{axis}[
    xbar,
    bar width=6pt,
enlarge y limits=0.25,
    xlabel={Error (\%)},
ytick={0,1,2,3},
    yticklabels={MPNN, $\mathcal{F}_5$-MPNN, Sub-G, LF-G},
legend pos={outer north east},
    nodes near coords,
    every node near coord/.append style={font=\scriptsize, xshift=1pt,yshift=-5pt},
    xmin=0,
    xmax=21,
    grid=both,
    grid style={dashed, gray!30},
    legend cell align={left}
]\addplot+[
    xbar,
    fill=LightBlue1,error bars/.cd,
    x dir=both, x explicit
] coordinates {
    (10.37,1) +- (2.63,2.63)
    (11.36,0) +- (1.56,1.56)
    (12.1,2) +- (3.48,3.48)
    (11.01,3) +- (1.54,1.54)
};\addplot+[
    xbar,
    fill=LightPink1, error bars/.cd,
    x dir=both, x explicit
] coordinates {
    (13.20,1) +- (2.04,2.04)
    (16.80,0) +- (2.71,2.71)
    (17.00,2) +- (2.72,2.72)
    (15.6,3) +- (2.65,2.65)
};
\legend{Train Error, Test Error}
\end{axis}
\end{tikzpicture}
\caption{Performance of different GNNs for $y \sim \mathcal{F}_5\text{-TMD}^3$, where $\mathcal{F}_5$ includes cycles up to length 5. $\mathcal{F}_5$-MPNN achieves the best performance, supporting our claim that incorporating task-relevant features outperforms excessive expressivity.}
\label{fig:gnn_performance_mds}
\end{figure}

\paragraph{Details}
For the second synthetic task, we generate 500 random graphs between 15 and 35 nodes using ER graphs with edge probability $p=0.1$. We use the $\mathcal{F}_5$-TMD pseudometric, where $\mathcal{F}_5$ consists of cycles up to length 5. We compute pairwise graph distances with respect to $\mathcal{F}_5$-TMD and embed the graphs into a two-dimensional space via Multidimensional Scaling (MDS) \citep{MDS}.
MDS embeds the graphs into a two-dimensional space while preserving the initial $\mathcal{F}_5$-Tree Mover's Distances from the raw graph space, i.e., one can formulate it as the optimization problem given by

\[
\arg\min_{x_1, \ldots, x_n \in \mathbb{R}^2} \sum_{i < j} \left( \|x_i - x_j\| - \zeta\text{-TMD}(G_i, G_j) \right)^2.
\]

Labels are assigned using 2-means clustering, ensuring that structurally similar graphs remain in the same class. MDS and 2-means clustering are implemented via scikit-learn \citep{sklearn_api}.

The experimental setup is identical to Task 1, with models trained and evaluated under the same protocol: We perform 10-fold cross validation with a training/validation/test set with 80/10/10 splits. We report the test accuracy at the epoch with the highest validation accuracy. This task tests the ability of different GNN architectures to generalize. The task is generated such that labels  strongly correlate with $\mathcal{F}_5$-TMD. We present classification performance, showing that $\mathcal{F}_5$-MPNNs perform better than MPNNs and more expressive GNN variants.

\end{document}